\def\eqref#1{equation~\ref{#1}}
\def\1{\bm{1}}
\DeclareMathAlphabet{\mathsfit}{\encodingdefault}{\sfdefault}{m}{sl}
\SetMathAlphabet{\mathsfit}{bold}{\encodingdefault}{\sfdefault}{bx}{n}
\newenvironment{proof}{\paragraph{Proof}}{\hfill$\square$}
\newtheorem{theorem}{Theorem}[section]
\newtheorem{proposition}{Proposition}[section]
\newtheorem{definition}[theorem]{Definition}
\title{ExpressivE: A Spatio-Functional Embedding For Knowledge Graph Completion}
\author{Aleksandar Pavlovi\'{c} \&  Emanuel Sallinger
\\
Research Unit of Databases and Artificial Intelligence\\
TU Wien\\ 
Vienna, Austria\\ 
\texttt{\{aleksandar.pavlovic,emanuel.sallinger\}@tuwien.ac.at} \\
}
\begin{document}

\newcommand{\cmark}{\ding{51}}
\newcommand{\xmark}{\ding{55}}
\newcommand{\MyImplies}{\Rightarrow}
\newcommand{\MyIff}{\Leftrightarrow}

\def\symmetry/{$r_1(X,Y) \MyImplies r_1(Y,X)$}
\def\antiSymmetry/{$r_1(X,Y) \MyImplies \neg r_1(Y,X)$}
\def\inversion/{$r_1(X,Y) \MyIff r_2(Y,X)$}
\def\compDef/{$r_1(X,Y) \land r_2(Y,Z) \MyIff r_d(X,Z)$}
\def\crossProd/{$r_1(X,Y_1) \land r_2(Y_2,Z) \MyIff r_c(X,Z)$}
\def\realComp/{$r_1(X,Y) \land r_2(Y,Z) \MyImplies r_3(X,Z)$}
\def\hierarchy/{$r_1(X,Y) \MyImplies r_2(X,Y)$}
\def\intersection/{$r_1(X,Y) \land r_2(X,Y) \MyImplies r_3(X,Y)$}
\def\mutualExclusion/{$r_1(X,Y) \land r_2(X,Y) \MyImplies \bot$}

\def\compDefVarRel/{$r_i(X,Y) \land r_j(Y,Z) \MyIff r_{i,j}(X,Z)$}
\def\realCompVarRel/{$r_i(X,Y) \land r_j(Y,Z) \MyImplies r_k(X,Z)$}
\def\hierarchyVarRel/{$r_a(X,Y) \MyImplies r_b(X,Y)$}
\def\intersectionVarRel/{$r_i(X,Y) \land r_j(X,Y) \MyImplies r_k(X,Y)$}

\newcommand{\elemwiseLeq}[0]{\preceq}
\newcommand{\elemwiseLe}[0]{\prec}
\newcommand{\elemwiseGeq}[0]{\succeq}
\newcommand{\elemwiseGr}[0]{\succ}
\newcommand{\elemwiseProd}[0]{\odot}
\newcommand{\elemwiseDiv}[0]{\oslash}
\def\entity/{entity}
\def\Entity/{Entity}
\def\entities/{entities}
\def\Entities/{Entities}
\def\relation/{relation}
\def\relations/{\relation/s}
\def\hypRect/{hyper-rectangle}
\def\hypRects/{\hypRect/s}
\def\hypPara/{hyper-parallelogram}
\def\hypParas/{\hypPara/s}
\def\slopeVec/{slope vector}
\def\slopeVecs/{\slopeVec/s}
\def\initialPos/{initial position}
\def\contextPos/{contextualized position}
\def\contextPoss/{\contextPos/s}
\def\subContextVec/{sub-context vector}
\def\subContextVecs/{sub-context vectors}
\def\setTheoretProps/{set-theoretic patterns}
\def\SetTheoretPropsCaps/{Set-Theoretic Patterns}
\def\virtualtripleSpace/{virtual triple space}
\def\VirtualtripleSpace/{Virtual triple space}
\def\virtualtripleSpaceCaps/{Virtual Triple Space}
\def\paraBoundSpace/{band}
\def\ParaBoundSpaces/{Bands}
\def\paraBoundSpaces/{\paraBoundSpace/s}
\def\corrSubpace/{correlation subspace}
\def\corrSubpaces/{\corrSubpace/s}
\def\definingComp/{compositional definition}
\def\definingCompCaps/{Compositional Definition}
\def\DefiningCompAbbr/{Comp. def.}
\def\compDefRegion/{compositionally defined region}
\def\CompDefRegion/{Compositionally defined region}
\def\compDefRegions/{compositionally defined regions}
\def\CompDefRegionCaps/{Compositionally Defined Region}
\def\maxPathLength/{maximal path length}
\def\maxPathLengthCaps/{Maximal Path Length}
\def\deletionHypPara/{negative \hypPara/}
\def\deletionHypParas/{negative \hypParas/}
\def\inductiveCap/{inference capabilities}

\maketitle

\begin{abstract}
Knowledge graphs are inherently incomplete. Therefore substantial research has been directed toward knowledge graph completion (KGC), i.e., predicting missing triples from the information represented in the knowledge graph (KG). KG embedding models (KGEs) have yielded promising results for KGC, yet any current KGE is incapable of: (1) \emph{fully} capturing vital \emph{inference patterns} (e.g., composition), (2) \emph{capturing} prominent patterns \emph{jointly} (e.g., hierarchy and composition), and (3) providing an \emph{intuitive interpretation} of captured patterns. In this work, we propose \emph{ExpressivE}, a fully expressive \emph{spatio-functional} KGE that solves all these challenges simultaneously. ExpressivE embeds pairs of \entities/ as \emph{points} and \relations/ as \emph{\hypParas/} in the \virtualtripleSpace/ $\mathbb{R}^{2d}$. This model design allows ExpressivE not only to capture a rich set of inference patterns jointly but additionally to display any supported inference pattern through the spatial relation of \hypParas/, offering an intuitive and consistent geometric interpretation of ExpressivE embeddings and their captured patterns. Experimental results on standard KGC benchmarks reveal that ExpressivE is competitive with state-of-the-art KGEs and even significantly outperforms them on WN18RR.
\end{abstract}

\section{Introduction}

Knowledge graphs (KGs) are large collections of triples $r_i(e_h, e_t)$ over \relations/ $r_i \in \mathbf{R}$ and \entities/ $e_h, e_t \in \mathbf{E}$ used for representing, storing, and processing information. 
Real-world KGs such as Freebase \citep{FreeBase} and WordNet \citep{WordNet} lie at the heart of numerous applications such as recommendation \citep{RecSysExample}, question answering \citep{QAExample}, information retrieval \citep{InfoRetrivalExample}, and natural language processing \citep{NLPExample}.

\textbf{KG Completion}.
Yet, KGs are inherently \emph{incomplete}, hindering the immediate utilization of their stored knowledge. For example, 75\% of the people represented in Freebase lack a nationality \citep{FreeBaseIncompleteness}. 
Therefore, much research has been directed toward the problem of automatically inferring missing triples, called \emph{knowledge graph completion} (KGC). KG embedding models (KGEs) that \emph{embed} entities and relations of a KG into latent spaces and quantify the plausibility of unknown triples by computing scores based on these learned embeddings have yielded promising results for KGC \citep{KGEImportant}. Moreover, they have shown excellent knowledge representation capabilities, concisely capturing complex graph structures, e.g., entity hierarchies \citep{PoincareHierarchy}.

\textbf{Inference Patterns}.
Substantial research has been invested in understanding which KGEs can capture which \emph{inference patterns}, as summarized in Table~\ref{tab:CapturedPatterns}. 
For instance, KGEs such as TransE \citep{TransE} and RotatE \citep{RotatE} can capture fundamental patterns such as composition. Recently, however, it was discovered that these two models can only capture a fairly limited notion of composition \citep{QuatE, BoxE, DensE, RotatE3D}, cf.\ also Appendix~\ref{app:AnalysisOfFunctionalModels}. Thus, multiple extensions have been proposed to tackle some of these limitations, focusing, e.g., on modeling non-commutative composition \citep{DensE, RotatE3D}. 
Yet, while these extensions solved some limitations, the purely functional nature of TransE, RotatE, and any of their extensions still limits them to capture solely \textit{\definingComp/}, not \textit{general composition} (see Table~\ref{tab:CapturedPatterns} for the defining formulas, and cf.\ also Appendix~\ref{app:AnalysisOfFunctionalModels} for  details).  

Therefore, capturing general composition is still an open problem. Even more, composition patterns describe paths, which are fundamental for navigation within a graph. 
Hence, the ability to capture general composition is vital for KGEs. 
In contrast, approaches such as SimplE \citep{SimplE}, ComplEx \citep{ComplEx}, and BoxE \citep{BoxE} have managed to capture other vital patterns, such as hierarchy, yet are unable to capture any notion of composition. 
\begin{table*}[h!]
    \centering
    \caption{This table lists patterns that several KGEs can capture. Specifically, \cmark$ $ represents that the pattern is supported 
    and \xmark$ $ that it is not supported. Furthermore, ``\DefiningCompAbbr/'' stands for \definingComp/ and ``Gen. comp.'' for general composition.}
    \label{tab:CapturedPatterns}
    \resizebox{\textwidth}{!}{%
    \begin{tabular}{l c c c c c c}
        \toprule
        Inference Pattern                   & ExpressivE        & BoxE          &  RotatE       & TransE        & DistMult      & ComplEx\\
        \midrule
        Symmetry: \symmetry/                 & \cmark     & \cmark & \cmark & \xmark & \cmark & \cmark \\
        Anti-symmetry: \antiSymmetry/        & \cmark     & \cmark & \cmark & \cmark & \xmark & \cmark \\
        Inversion: \inversion/               & \cmark     & \cmark & \cmark & \cmark & \xmark & \cmark \\
        \DefiningCompAbbr/: $r_1(X,Y) \land r_2(Y,Z) \MyIff r_3(X,Z)$              & \cmark     & \xmark & \cmark & \cmark & \xmark & \xmark \\
        Gen. comp.: \realComp/              & \cmark     & \xmark & \xmark & \xmark & \xmark & \xmark \\
        Hierarchy: \hierarchy/               & \cmark     & \cmark & \xmark & \xmark & \cmark & \cmark \\
        Intersection: \intersection/         & \cmark     & \cmark & \cmark & \cmark & \xmark & \xmark \\
        Mutual exclusion: \mutualExclusion/  & \cmark     & \cmark & \cmark & \cmark & \cmark & \cmark \\
        \bottomrule
    \end{tabular}%
    }
\end{table*}

\textbf{Challenge}.
While the extensive research on composition \citep{TransE, RotatE, QuatE, DensE} and hierarchy \citep{DistMult, ComplEx, SimplE, BoxE} highlights their importance, any KGE so far is incapable of: (1) capturing general composition, (2) capturing composition and hierarchy jointly, and (3) providing an intuitive geometric interpretation of captured inference patterns.

\textbf{Contribution}.
This paper focuses on solving all the stated limitations simultaneously.
In particular:
\begin{itemize}
    \item We introduce the \emph{spatio-functional} embedding model \textbf{ExpressivE}. It embeds pairs of \entities/ as points and \relations/ as \emph{\hypParas/} in the space $\mathbb{R}^{2d}$, which we call the \emph{\virtualtripleSpace/}. 
    The \virtualtripleSpace/ allows ExpressivE to represent patterns through the spatial relationship of \hypParas/, offering an intuitive and consistent geometric interpretation of ExpressivE embeddings and their captured patterns. 
    
    \item We prove that ExpressivE can capture \textbf{any pattern listed in Table~\ref{tab:CapturedPatterns}}. This makes ExpressivE the first model capable of capturing both general composition and hierarchy jointly.
    
    \item We prove that our model is \textbf{fully expressive}, making ExpressivE the first KGE that both supports composition and is fully expressive.
    
    \item We evaluate ExpressivE on the two standard KGC benchmarks WN18RR \citep{ConvE} and FB15k-237 \citep{FB15K237}, revealing that ExpressivE is competitive with state-of-the-art (SotA) KGEs and even significantly outperforms them on WN18RR.
\end{itemize}

\textbf{Organization}. Section~\ref{sec:Preliminaries} introduces the KGC problem and methods for evaluating KGEs. Section~\ref{sec:RelatedWork} embeds ExpressivE in the context of related work. Section~\ref{sec:VirtualTripleSpace} introduces ExpressivE, the \virtualtripleSpace/, and interprets our model's parameters within it. Section~\ref{sec:KnowledgeCapturing} analyzes our model's expressive power and \inductiveCap/. Section~\ref{sec:ExpEval} discusses experimental results together with our model's space complexity and Section~\ref{sec:Conclusion} summarizes our work. The appendix contains all proofs of theorems.

\section{Knowledge Graph Completion}
\label{sec:Preliminaries}

This section introduces the KGC problem and evaluation methods \citep{BoxE}. Let us first introduce the \emph{triple vocabulary} $\bm{T}$, consisting of a finite set of \emph{entities} $\bm{E}$ and \emph{relations} $\bm{R}$. We call an expression of the form $r_i(e_h, e_t)$ a triple, where $r_i \in \bm{R}$ and $e_h, e_t \in \bm{E}$. Furthermore, we call $e_h$ the \emph{head} and $e_t$ the \emph{tail} of the triple. Now, a KG $G$ is a finite set of triples over $\bm{T}$ and KGC is the problem of predicting missing triples.
KGEs can be evaluated by means of an: (1) \emph{experimental} evaluation on benchmark datasets, (2) analysis of the model's \emph{expressiveness}, and (3) analysis of the \emph{inference patterns} that the model can capture. We will discuss each of these points in what follows.

\textbf{Experimental Evaluation.} 
The experimental evaluation of KGEs requires a set of true and corrupted triples.
True triples $r_i(e_h, e_t) \in G$ are corrupted by replacing either $e_h$ or $e_t$ with any $e_c \in \bm{E}$ such that the corrupted triple does not occur in $G$. KGEs define scores over triples and are optimized to score true triples higher than false ones, thereby estimating a given triple's truth. A KGE's KGC performance is measured with \emph{the mean reciprocal rank} (MRR), the average of inverse ranks ($1/\textit{rank}$) and Hits@k, the proportion of true triples within the predicted triples whose rank is at maximum k.

\textbf{Expressiveness.}
A KGE is fully expressive if for any finite set of disjoint true and false triples, a parameter set can be found such that the model classifies the triples of the set correctly. Intuitively, a fully expressive model can represent any given graph. However, this is not necessarily correlated with its \inductiveCap/ \citep{BoxE}. For instance, while a fully expressive model may express the entire training set, it may have poor generalization capabilities \citep{BoxE}. Conversely, a model that is not fully expressive may underfit the training data severely \citep{BoxE}. Hence, KGEs should be both fully expressive and support important inference patterns.

\textbf{Inference Patterns.}
The generalization capabilities of KGEs are commonly analyzed using inference patterns (short: patterns). They represent logical properties 
that allow to infer new triples from the ones in $G$. Patterns are of the form $\psi \MyImplies \phi$, where we call $\psi$ the body and $\phi$ the head of the pattern. For instance, composition \realComp/ is a prominent pattern. Intuitively, it states that if the body of the pattern is satisfied, then the head needs to be satisfied, i.e., if for some entities $e_x, e_y, e_z \in \bm{E}$ the triples $r_1(e_x,e_y)$ and $r_2(e_y,e_z)$ are contained in $G$, then also\ $r_3(e_x, e_z)$ needs to be in $G$. Further patterns are listed in Table~\ref{tab:CapturedPatterns} and discussed in Section~\ref{sec:KnowledgeCapturing}.
Analyzing the patterns that a KGE supports helps estimate its \emph{\inductiveCap/} \citep{BoxE}.
\section{Related Work}
\label{sec:RelatedWork}

As our work focuses on KGEs that can \emph{intuitively represent} inference patterns, we have excluded neural models that are less interpretable \citep{ConvE, NTN, ABE}. We investigate relevant literature to embed ExpressivE in its scientific context below:

\textbf{Functional Models.}
So far, solely a subset of translational models supports composition. We call this subset \emph{functional models}, as they embed relations as functions $\bm{f_{r_i}}: \mathbb{K}^d \rightarrow \mathbb{K}^d$ and entities as vectors $\bm{e_j} \in \mathbb{K}^d$ over some field $\mathbb{K}$. These models represent true triples $r_i(e_h, e_t)$ as $\bm{e_t} = \bm{f_{r_i}(e_h)}$. Thereby, they can capture composition patterns via functional composition. 
TransE \citep{TransE} is the pioneering functional model, embedding relations $r_i$ as $\bm{f_{r_i}(e_h)} = \bm{e_h} + \bm{e_{r_i}}$ with $\bm{e_{r_i}} \in \mathbb{K}^d$. 
However, it is neither fully expressive nor can it capture \textit{1-N}, \textit{N-1}, \textit{N-N}, nor symmetric relations.
RotatE \citep{RotatE} embeds relations as rotations in complex space, allowing it to capture symmetry patterns but leaving it otherwise with TransE's limitations. 
Recently, it was discovered that TransE and RotatE may only capture a fairly limited notion of composition \citep{QuatE, BoxE, DensE, RotatE3D}, cf.\ also Appendix~\ref{app:AnalysisOfFunctionalModels}. Therefore, extensions have been proposed to tackle some limitations, focusing, e.g., on modeling non-commutative composition \citep{DensE, RotatE3D}. While these extensions solved some limitations, the purely functional nature of TransE, RotatE, and any of their extensions limits them to capture solely \textit{\definingComp/} and not \textit{general composition} (see Table~\ref{tab:CapturedPatterns} for the defining formulas and cf.\ also Appendix~\ref{app:AnalysisOfFunctionalModels} for details).  
Therefore, capturing general composition is still an open problem. Even more, functional models are incapable of capturing vital patterns, such as hierarchies, completely \citep{BoxE}. 

\textbf{Bilinear Models.}
Bilinear models factorize the adjacency matrix of a graph with a bilinear product of entity and relation embeddings. The pioneering bilinear model is RESCAL \citep{RESCAL}. It embeds relations with full-rank $d \times d$ matrices $\bm{M}$ and entities with $d$-dimensional vectors. DistMult \citep{DistMult} constrains RESCAL's relation matrix $\bm{M}$ to a diagonal matrix for efficiency reasons, limiting DistMult to capture symmetric relations only. HolE \citep{HolE} solves this limitation by combining entity embeddings via circular correlation, whereas ComplEx \citep{ComplEx} solves this limitation by embedding relations with a complex-valued diagonal matrix. HolE and ComplEx have subsequently been shown to be equivalent \citep{HolE_Eq_ComplEx}. SimplE \citep{SimplE} is based on canonical polyadic decomposition \citep{PolyadicDecomp}. TuckER \citep{TuckER} is based on Tucker decomposition \citep{TuckerDecomp} and extends the capabilities of RESCAL and SimplE \citep{TuckER}. While all bilinear models, excluding DistMult, are fully expressive, they cannot capture any notion of composition.

\textbf{Spatial Models.} 
Spatial models define semantic regions within the embedding space that allow the intuitive representation of certain patterns. In entity classification, for example, bounded axis-aligned hyper-rectangles (boxes) represent entity classes, capturing class hierarchies naturally through the spatial subsumption of these boxes \citep{ProbBox, TransitivBox, SmoothBox}. Also, query answering systems --- such as Query2Box \citep{Query2Box} --- have used boxes to represent answer sets due to their intuitive interpretation as sets of entities. Although Query2Box can be used for KGC, entity classification approaches cannot scalably be employed in the general KGC setting, as this would require an embedding for each entity tuple \citep{BoxE}. BoxE \citep{BoxE} is the first spatial KGE dedicated to KGC. It embeds relations as a pair of boxes and entities as a set of points and bumps in the embedding space. The usage of boxes enables BoxE to capture any inference pattern that can be described by the intersection of boxes in the embedding space, such as hierarchy. Moreover, boxes enable BoxE to capture \textit{1-N}, \textit{N-1}, and \textit{N-N} relations naturally. Yet, BoxE cannot capture any notion of composition \citep{BoxE}.

\textbf{Our Work}.
These research gaps, namely that any KGE cannot capture general composition and hierarchy jointly, have motivated our work. In contrast to prior work, our model defines for each relation a \hypPara/, allowing us to combine the benefits of both spatial and functional models. Even more, prior work primarily analyzes the embedding space itself, while we propose the novel \emph{\virtualtripleSpace/} that allows us to display any captured inference pattern --- \emph{including general composition} --- through the spatial relation of \hypParas/.

\section{ExpressivE and the \virtualtripleSpaceCaps/}
\label{sec:VirtualTripleSpace}

This section introduces ExpressivE, a KGE targeted toward KGC with the capabilities of capturing a rich set of inference patterns. ExpressivE embeds \entities/ as \emph{points} and \relations/ as \hypParas/ in the \virtualtripleSpace/ $\mathbb{R}^{2d}$. More concretely, instead of analyzing our model in the $d$-dimensional embedding space $\mathbb{R}^{d}$, we construct the novel \emph{\virtualtripleSpace/} that grants ExpressivE's parameters a geometric meaning. Above all, the \virtualtripleSpace/ allows us to intuitively interpret ExpressivE embeddings and their captured patterns, as discussed in Section~\ref{sec:KnowledgeCapturing}. 

\textbf{Representation.} \Entities/ $e_j \in \bm{E}$ are embedded in ExpressivE via a vector $\bm{e_j} \in \mathbb{R}^d$, representing points in the latent embedding space $\mathbb{R}^d$. Relations $r_i \in \bm{R}$ are embedded as \hypParas/ in the \virtualtripleSpace/ $\mathbb{R}^{2d}$. More specifically, ExpressivE assigns to a relation $r_i$ for each of its arity positions $p \in \{h, t\}$ the following vectors: (1) a \emph{\slopeVec/} $\bm{r_i^{p}} \in \mathbb{R}^d$, (2) a \emph{center vector} $\bm{c^{p}_{i}} \in \mathbb{R}^d$, and (3) a \emph{width vector} $\bm{d^{p}_{i}} \in (\mathbb{R}_{\geq 0})^d$. Intuitively, these vectors define the slopes $\bm{r_i^{p}}$ of the \hypPara/'s boundaries, its center $\bm{c^{p}_{i}}$ and width  $\bm{d^{p}_{i}}$. 
A triple $r_i(e_h, e_t)$ is captured to be true in an ExpressivE model if its relation and entity embeddings satisfy the following inequalities: 
\begin{align}
        (\bm{e_{h}} - \bm{c^{h}_{i}} - \bm{r_i^{t}} \elemwiseProd \bm{e_{t}})^{|.|} \elemwiseLeq \bm{d^{h}_{i}}         
        \label{equation:ParallelBoundarySpace1}\\
        (\bm{e_{t}} - \bm{c^{t}_{i}} - \bm{r_i^{h}} \elemwiseProd \bm{e_{h}})^{|.|} \elemwiseLeq \bm{d^{t}_{i}}
        \label{equation:ParallelBoundarySpace2}
\end{align}
Where $\bm{x}^{|.|}$ represents the element-wise absolute value of a vector $\bm{x}$, $\elemwiseProd$ represents the Hadamard (i.e., element-wise) product and $\elemwiseLeq$ represents the element-wise less or equal operator.
It is very complex to interpret this model in the embedding space $\mathbb{R}^d$. Hence, we construct followingly a \emph{\virtualtripleSpace/} in $\mathbb{R}^{2d}$ that will ease reasoning about the parameters and \inductiveCap/ of ExpressivE.

\textbf{\virtualtripleSpaceCaps/.} We construct this virtual space by concatenating the head and tail entity embeddings.
In detail, this means that any pair of entities $(e_h,e_t) \in \bm{E} \times \bm{E}$ defines a point in the \virtualtripleSpace/ by concatenating their entity embeddings $\bm{e_h}, \bm{e_t} \in \mathbb{R}^d$, i.e., $(\bm{e_h} || \bm{e_t}) \in \mathbb{R}^{2d}$, where $||$ is the concatenation operator. A set of important sub-spaces of the \virtualtripleSpace/ are the 2-dimensional spaces, created from the $j$-th embedding dimension of head \entities/ and the $j$-th dimension of tail \entities/ --- i.e., the $j$-th and $(d+j)$-th \virtualtripleSpace/ dimensions. We call them \emph{\corrSubpace/s}, as they visualize the captured relation-specific dependencies of head and tail entity embeddings as will be discussed followingly. Moreover, we call the \corrSubpace/ spanned by the $j$-th and $(d+j)$-th \virtualtripleSpace/ dimension the $j$-th \corrSubpace/. 

\textbf{Parameter Interpretation.} Inequalities~\ref{equation:ParallelBoundarySpace1} and \ref{equation:ParallelBoundarySpace2} construct each an intersection of two parallel half-spaces in any \corrSubpace/ of the \virtualtripleSpace/. 
We call the intersection of two parallel half-spaces a \emph{\paraBoundSpace/}, as they are limited by two parallel boundaries. 
Henceforth, we will denote with $\bm{v}(j)$ the $j$-th dimension of a vector $\bm{v}$. For example, $(\bm{e_{h}}(j) - \bm{c^{h}_{i}}(j) - \bm{r_i^{t}}(j) \elemwiseProd \bm{e_{t}}(j))^{|.|} \elemwiseLeq \bm{d^{h}_{i}}(j)$ defines a \paraBoundSpace/ in the $j$-th \corrSubpace/. The intersection of two \paraBoundSpaces/ results either in a \paraBoundSpace/ (if one \paraBoundSpace/ subsumes the other) or a parallelogram. Since we are interested in constructing ExpressivE embeddings that capture certain inference patterns, it is sufficient to consider parallelograms for these constructions. Figure~\ref{fig:VirtualSpaceConstruction} visualizes a \relation/ parallelogram (green solid) and its parameters (orange dashed) in the $j$-th \corrSubpace/. In essence, the parallelogram is the result of the intersection of two \paraBoundSpaces/ (thick blue and magenta lines), where its boundaries' slopes are defined by $\bm{r_i^{p}}$, the center of the parallelogram is defined by $\bm{c_i^{p}}$, and finally, the widths of each \paraBoundSpace/ are defined by $\bm{d_i^{p}}$. 

\begin{figure}[ht]
\centering
\begin{subfigure}{.5\textwidth}
    \begin{center}
    \centerline{\includegraphics[scale=0.38]{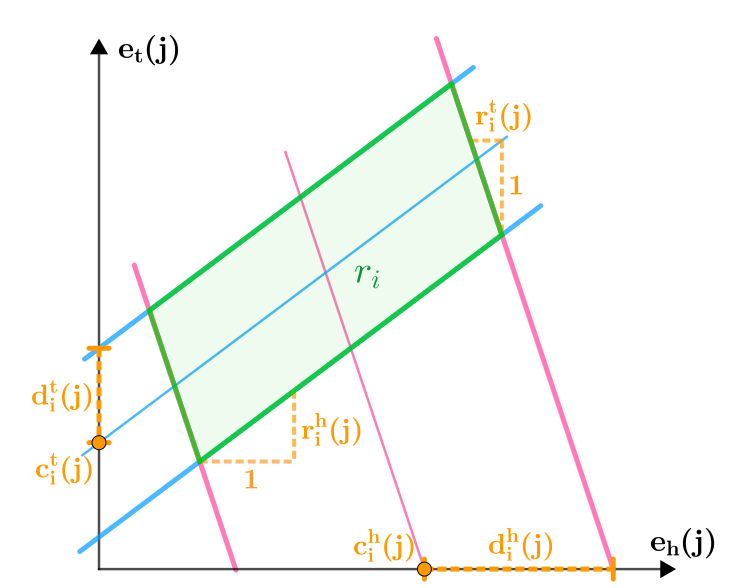}}
    \caption{}
    \label{fig:VirtualSpaceConstruction}
    \end{center}
    \vskip -0.2in
\end{subfigure}%
\begin{subfigure}{.5\textwidth}
    \vskip 0.05in
    \begin{center}
    \centerline{\includegraphics[scale=0.31]{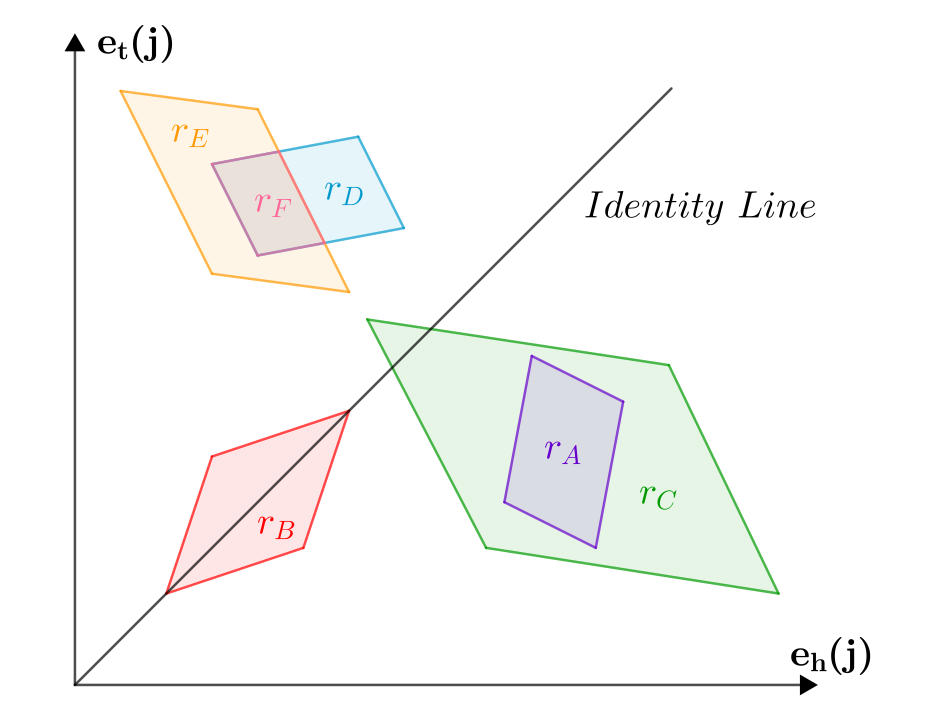}}
    \caption{}
    \label{fig:SameCapabilitiesAsBoxESinglePattern}
    \end{center}
    \vskip -0.2in
\end{subfigure}
\vspace{-10mm}
\caption{(a) Interpretation of relation parameters (orange dashed) as a parallelogram (green solid) in the $j$-th \corrSubpace/; (b) Multiple \relation/ embeddings with the following properties: Symmetry ($r_B$), Anti-Symmetry ($r_A$, $r_D$, $r_E$, $r_F$), Inversion ($r_D = r_A^{-1}$), Hierarchy $r_A(X,Y) \MyImplies r_C(X,Y)$, Intersection $r_D(X,Y) \land r_E(X,Y) \MyImplies r_F(X,Y)$, Mutual Exclusion (e.g., $r_A \cap r_B = \emptyset$).
}
\end{figure}

Since Inequalities~\ref{equation:ParallelBoundarySpace1} and \ref{equation:ParallelBoundarySpace2} solely capture dependencies within the same dimension, any two different dimensions $j \neq k$ of head and tail \entity/ embeddings are independent.
Thus, \relations/ are embedded as \hypParas/ in the \virtualtripleSpace/, whose edges are solely crooked in any $j$-th \corrSubpace/. Intuitively, the crooked edges represent \relation/-specific dependencies between head and tail \entities/ and are thus vital for the expressive power of ExpressivE.
Note that each \corrSubpace/ represents one dimension of the element-wise Inequalities~\ref{equation:ParallelBoundarySpace1} and \ref{equation:ParallelBoundarySpace2}.
Since the sum of all \corrSubpaces/ represents all dimensions of Inequalities~\ref{equation:ParallelBoundarySpace1} and \ref{equation:ParallelBoundarySpace2}, it is sufficient to analyze all \corrSubpaces/ to identify the captured inference patterns of an ExpressivE model. 

\textbf{Scoring Function.} 
Let $\bm{\tau_{r_i(h,t)}}$ denote the embedding of a triple $r_i(h,t)$, i.e., $\bm{\tau_{r_i(h,t)}} = (\bm{e_{ht}} - \bm{c_i^{ht}} - \bm{r_i^{th}} \elemwiseProd \bm{e_{th}})^{|.|}$, with $\bm{e_{xy}} = (\bm{e_{x}} || \bm{e_{y}})$ and $\bm{a_i^{xy}} = (\bm{a_i^{x}} || \bm{a_i^{y}})$ for $\bm{a} \in \{\bm{c}, \bm{r} , \bm{d}\}$ and $\bm{x}, \bm{y} \in \{\bm{h}, \bm{t}\}$. 
\begin{equation}
   D(h,r_i,t) = \begin{cases}
                        \bm{\tau_{r_i(h,t)}} \elemwiseDiv \bm{w_i}, & \text{if 
                        $\bm{\tau_{r_i(h,t)}} \elemwiseLeq \bm{d_i^{ht}}$}\\
                        \bm{\tau_{r_i(h,t)}} \elemwiseProd \bm{w_i} - \bm{k}, & \text{otherwise}
                    \end{cases}
    \label{eq:distance}
\end{equation}
Equation ~\ref{eq:distance} states the typical distance function of spatial KGEs \citep{BoxE}, where $\bm{w_i} = \bm{2} \elemwiseProd \bm{d_i^{ht}} + \bm{1}$ is a width-dependent factor and $\bm{k} = \bm{0.5} \elemwiseProd (\bm{w_i} - \bm{1}) \elemwiseProd (\bm{w_i} - \bm{1} \elemwiseDiv \bm{w_i})$.
If a triple $r_i(h,t)$ is captured to be true by an ExpressivE embedding, i.e., if $\bm{\tau_{r_i(h,t)}} \elemwiseLeq \bm{d_i^{ht}}$, then the distance correlates inversely with the \hypPara/'s width, keeping low distances/gradients within the parallelogram. Otherwise, the distance correlates linearly with the width to penalize points outside larger parallelograms. Appendix~\ref{app:DistanceFunction} provides further details on the distance function. The \emph{scoring function} is defined as $s(h, r_i, t) = \hspace{-0.2em} - ||D(h,r_i,t)||_2$. Following \citet{BoxE}, we optimize the self-adversarial negative sampling loss \citep{RotatE} using the Adam optimizer \citep{Adam}. We have provided more details on the training setup in Appendix \ref{app:ExpDetails}.

\section{Knowledge Capturing Capabilities}
\label{sec:KnowledgeCapturing}

This section analyzes ExpressivE's expressive power and supported patterns. 
In what follows, we assume the standard definition of capturing patterns \citep{RotatE, BoxE}. This means intuitively that a KGE captures a pattern if a set of parameters exists such that the pattern is captured \emph{exactly} and \emph{exclusively}. Appendix~\ref{app:FormalDefinitions} formalizes this notion for our model. 

\subsection{Expressiveness}
\label{sec:Expressiveness}

This section analyzes whether ExpressivE is \emph{fully expressive} \citep{BoxE}, i.e., can capture any graph $G$ over $\bm{R}$ and $\bm{E}$.
 Theorem~\ref{Theorem:FullyExpressive} proves that this is the case by constructing for any graph $G$ an ExpressivE embedding that captures any triple within $G$ to be true and any other triple to be false.
Specifically, the proof uses induction, starting with an embedding that captures the complete graph, i.e., any triple over $\bm{E}$ and $\bm{R}$ is true. Next, each induction step shows that we can alter the embedding to make an arbitrarily picked triple of the form $r_i(e_j,e_k)$ with $r_i \in \bm{R}$, $e_j,e_k \in \bm{E}$ and $e_j \neq e_k$ false. Finally, we add $|\bm{E}| * |\bm{R}|$ dimensions to make any self-loop --- i.e., any triple of the form $r_i(e_j,e_j)$ with $r_i \in \bm{R}$ and $e_j \in \bm{E}$ --- false. The full, quite technical proof can be found in Appendix~\ref{app:FullyExpressiveness}. 
\begin{theorem}[Expressive Power]
ExpressivE can capture any arbitrary graph $G$ over $\bm{R}$ and $\bm{E}$ if the embedding dimensionality $d$ is at least in $O(|\bm{E}| * |\bm{R}|)$. 
\label{Theorem:FullyExpressive}
\end{theorem}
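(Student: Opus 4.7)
The plan is to follow the inductive skeleton sketched in the paper. Given any graph $G$ over $\bm{R}, \bm{E}$, I will construct an ExpressivE embedding whose \hypParas/ in the \virtualtripleSpace/ contain exactly the points $(\bm{e_h} || \bm{e_t})$ corresponding to triples $r_i(e_h, e_t) \in G$. Working inside the \virtualtripleSpace/ turns ``capturing $G$'' into a concrete geometric statement about which points sit inside which \hypParas/.

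For the base case I exhibit an embedding capturing the complete graph. Pick any injective assignment $e_j \mapsto \bm{e_j}$ and set, for every relation $r_i$ and position $p \in \{h,t\}$, $\bm{c_i^p} = \bm{0}$ and $\bm{r_i^p} = \bm{0}$, while choosing each component of $\bm{d_i^p}$ strictly larger than the maximum absolute value occurring in any entity embedding. Inequalities~\ref{equation:ParallelBoundarySpace1} and~\ref{equation:ParallelBoundarySpace2} then collapse to $|\bm{e_h}| \elemwiseLeq \bm{d_i^h}$ and $|\bm{e_t}| \elemwiseLeq \bm{d_i^t}$, which hold for every pair of entities, so every triple is captured as true.

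The inductive step removes a spurious non-self-loop triple $r_i(e_a, e_b)$ with $a \neq b$. I append a fresh dimension $\ell$ and choose entity coordinates $\bm{e_k}(\ell)$ so that in the $\ell$-th \corrSubpace/ the point $(\bm{e_a}(\ell), \bm{e_b}(\ell))$ lies strictly outside the parallelogram cut out by $r_i$, while every other pair $(\bm{e_x}(\ell), \bm{e_y}(\ell))$ lies inside; this is feasible because one of the two \paraBoundSpaces/ whose intersection forms the parallelogram can always be tilted, via a non-zero slope $\bm{r_i^h}(\ell)$ or $\bm{r_i^t}(\ell)$, so as to separate a single point from a finite cloud of others. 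For every other relation $r_{i'} \neq r_i$, the parameters in this new dimension are taken with zero slopes and widths large enough that both inequalities hold trivially; hence no other triple is disturbed. Iterating this step eliminates every spurious non-self-loop triple. A final self-loop phase appends $|\bm{E}| \cdot |\bm{R}|$ further dimensions, one per $(e_j, r_i)$, in each of which the coordinates are set so that exactly the diagonal point $(\bm{e_j}(\ell) || \bm{e_j}(\ell))$ is expelled from $r_i$'s \hypPara/ whenever $r_i(e_j, e_j) \notin G$, while every other relation is rendered trivially true in that dimension.

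The main obstacle is keeping the total dimensionality within $O(|\bm{E}| \cdot |\bm{R}|)$ instead of the naive $O(|\bm{E}|^2 \cdot |\bm{R}|)$ bound obtained by spending one dimension per spurious triple. To achieve the tight bound I batch the inductive removals: for each (head, relation) pair $(e_a, r_i)$, a single dedicated dimension $\ell$ can eliminate all spurious triples of the form $r_i(e_a, \cdot)$ at once by choosing the scalars $\bm{r_i^t}(\ell)$ and $\bm{c_i^h}(\ell)$ so that the coordinates of the ``bad'' tails fall outside the first \paraBoundSpace/ of $r_i$ while those of the ``good'' tails fall inside, and by placing every non-$e_a$ head at a coordinate where its inequality is trivially slack. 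There are only $|\bm{E}| \cdot |\bm{R}|$ such (head, relation) batches, which together with the $|\bm{E}| \cdot |\bm{R}|$ self-loop dimensions yields the claimed bound. Showing that the six scalars of the \hypPara/ admit a simultaneous solution for any partition of the tails into ``good'' and ``bad'', while simultaneously leaving every other relation's constraints slack in the same dimension, is the technical heart of the argument and is carried out by a careful interval-packing analysis in Appendix~\ref{app:FullyExpressiveness}.
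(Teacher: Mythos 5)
Your route differs from the paper's in a substantive way. The paper never appends a fresh dimension per removal step: it fixes the dimensionality at $|\bm{E}|\cdot|\bm{R}|$ from the outset, dedicates dimension $(i,k)$ to the pair (relation $r_i$, \emph{tail} $e_k$), and falsifies $r_i(e_j,e_k)$ by shearing $r_i$'s \hypPara/ in that one dimension while translating every entity except $e_j$ and compensating the centers and widths of \emph{every} relation's \paraBoundSpaces/ in that dimension; the price is a twelve-case verification that no other truth value changes, plus a check that previously falsified triples stay false. Your plan instead gives each (head, relation) batch its own fresh \corrSubpace/ in which all other relations are trivially wide, which would avoid almost all of that case analysis. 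The batching itself is the right idea (it is the mirror image of the paper's tail-indexed batching and yields the same $O(|\bm{E}|\cdot|\bm{R}|)$ bound), and the self-loop phase matches the paper's.

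The gap is in the step you call the technical heart and then defer. Your stated mechanism --- ``placing every non-$e_a$ head at a coordinate where its inequality is trivially slack'' while placing good and bad tails inside and outside the first \paraBoundSpace/ --- presupposes that head coordinates and tail coordinates can be chosen independently. They cannot: ExpressivE assigns a \emph{single} coordinate $\bm{e_x}(\ell)$ per entity, so every ``other head'' is simultaneously a good or bad tail and its coordinate is already pinned down by the separation you need at $e_a$. This coupling is exactly the obstacle the paper singles out as the reason its proof cannot mimic BoxE's. Concretely, your requirement is that a single convex \hypPara/ $P$ contain $(\bm{e_x}(\ell),\bm{e_y}(\ell))$ for all $x\neq a$ and all $y$, contain $(\bm{e_a}(\ell),\bm{e_g}(\ell))$ for good tails, and exclude $(\bm{e_a}(\ell),\bm{e_b}(\ell))$ for bad tails; convexity already forces $\bm{e_a}(\ell)$ to be extremal among all entity coordinates, and one must then exhibit a slope $\bm{r_i^t}(\ell)$ and offset for which the \paraBoundSpace/'s cross-section at $\bm{e_a}(\ell)$ excludes the bad cluster while its cross-sections at every good-tail and bad-tail coordinate cover the entire coordinate range (e.g.\ $\bm{e_a}(\ell)=0$, good tails in $[1,2]$, bad tails in $[3,4]$, $\bm{r_i^t}(\ell)=\tfrac12$, and the band $h-\tfrac12 t\in[-1,4]$ works). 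Such a construction exists, so your approach is salvageable, but as written the claim is asserted rather than proved, and the justification you give for it rests on a false independence assumption; without supplying this lemma the proof is incomplete.
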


\subsection{Inference Patterns}
\label{sec:InferencePatterns}

This section proves that ExpressivE can capture any pattern from Table~\ref{tab:CapturedPatterns}. First, we discuss how ExpressivE represents inference patterns with at most two variables. Next, we introduce the notion of \definingComp/ and continue by identifying how this pattern is described in the \virtualtripleSpace/. Then, we define general composition, building on both the notion of \definingComp/ and hierarchy. Finally, we conclude this section by discussing the key properties of ExpressivE. 

\textbf{Two-Variable Patterns.} 
Figure~\ref{fig:SameCapabilitiesAsBoxESinglePattern} displays several one-dimensional \relation/ embeddings and their captured patterns in a \corrSubpace/. Intuitively, ExpressivE represents: (1) symmetry patterns \symmetry/ via symmetric \hypParas/, (2) anti-symmetry patterns \antiSymmetry/ via \hypParas/ that do not overlap with their mirror image, (3) inversion patterns \inversion/ via $r_2$'s \hypPara/ being the mirror image of $r_1$'s, (4) hierarchy patterns \hierarchy/ via $r_2$'s \hypPara/ subsuming $r_1$'s, (5) intersection patterns \intersection/ via $r_3$'s \hypPara/ subsuming the intersection of $r_1$'s and $r_2$'s, and (6) mutual exclusion patterns \mutualExclusion/ via mutually exclusive \hypParas/ of $r_1$ and $r_2$. We have formally proven that ExpressivE can capture any of these two-variable inference patterns in Theorem~\ref{Theorem:SetTheoreticPatterns} (see Appendices~\ref{app:CapturingPatterns} and \ref{app:Exclusively}).

\begin{theorem}
ExpressivE captures (a) symmetry, (b) anti-symmetry, (c) inversion, (d) hierarchy, (e) intersection, and (f) mutual exclusion.
\label{Theorem:SetTheoreticPatterns}
\end{theorem}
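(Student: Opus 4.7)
The plan is to prove each of the six cases by explicitly constructing ExpressivE parameters that realize the geometric picture of Figure~\ref{fig:SameCapabilitiesAsBoxESinglePattern}. Since Inequalities~\ref{equation:ParallelBoundarySpace1} and \ref{equation:ParallelBoundarySpace2} are element-wise, the satisfaction conditions decouple across the $d$ correlation subspaces, so it suffices to construct each pattern in a single 2D subspace and then set the remaining dimensions' widths large enough that their inequalities are trivially satisfied for every entity pair. In one such subspace, with coordinates $(x,y) = (\bm{e_h}(j), \bm{e_t}(j))$, the head inequality carves out the band $|x - c_i^h - r_i^t y| \leq d_i^h$ and the tail inequality the band $|y - c_i^t - r_i^h x| \leq d_i^t$; the relation's parallelogram is their intersection.

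For each pattern I would exhibit concrete parameter choices: for (a) symmetry, set $c_1^h = c_1^t$, $r_1^h = r_1^t$, and $d_1^h = d_1^t$, so the parallelogram is invariant under the reflection $(x,y) \mapsto (y,x)$; for (b) anti-symmetry, translate the parallelogram off the diagonal so that it is disjoint from its reflection; for (c) inversion, take $r_2$'s parameters to be $r_1$'s with the head and tail roles swapped, producing $r_2$'s parallelogram as the mirror image of $r_1$'s; for (d) hierarchy, keep $r_2$'s slopes and center equal to $r_1$'s but strictly enlarge its widths, giving $P_{r_1} \subsetneq P_{r_2}$; for (e) intersection, note that $P_{r_1} \cap P_{r_2}$ is a bounded convex region that can be covered by a sufficiently large parallelogram $r_3$; and for (f) mutual exclusion, translate $r_1$'s and $r_2$'s parallelograms far enough apart to be disjoint.

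In each case, soundness --- that the body of the pattern forces the head for every assignment of entity embeddings --- follows immediately from the geometric inclusion, symmetry, or disjointness of the constructed parallelograms, together with the fact that Inequalities~\ref{equation:ParallelBoundarySpace1} and \ref{equation:ParallelBoundarySpace2} make a triple true precisely when the corresponding point lies inside the relation's parallelogram. The main obstacle, and the more delicate part, is exclusivity: the construction must not inadvertently enforce any further pattern among the same relations (for example, in case (e) one must avoid accidentally obtaining $P_{r_1} \subseteq P_{r_3}$, which would add a hierarchy pattern). I would address this by leaving slack in the chosen parameters and by using auxiliary dimensions as extra degrees of freedom, so that for every unintended candidate pattern one can produce entity embeddings whose coordinates in some subspace make the candidate's body hold while its head fails. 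This reduces the remaining work to a finite case analysis of two-dimensional configurations, which I would carry out pattern by pattern and extend componentwise to full $d$.
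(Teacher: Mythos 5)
Your proposal follows essentially the same route as the paper: the paper likewise reduces each pattern to a geometric condition on the relation \hypParas/ in the \corrSubpaces/ (symmetry of the region across the identity line, mirror images, subsumption, subsumption of the intersection, disjointness), proves these characterizations as if-and-only-if propositions, and then handles exclusivity separately by exhibiting concrete one-dimensional parameter tables and checking that no unintended mirror-image, intersection, or \compDefRegion/ relationship arises. Your identification of exclusivity as the delicate step, and your plan to discharge it by a finite check of which spatial relationships hold among the constructed regions, matches the paper's (M)/(I)/(C) bookkeeping.

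One step as written does not go through: you cannot make the remaining $d-1$ dimensions ``trivially satisfied for every entity pair'' by choosing large widths, because entity embeddings range over all of $\mathbb{R}^d$ and no finite-width \paraBoundSpace/ contains every point of a \corrSubpace/. Since capturing a pattern quantifies over all entity assignments and membership in a \hypPara/ is a conjunction over dimensions, a single dimension whose parallelogram violates the geometric condition (e.g., is not symmetric across the identity line) yields a counterexample assignment. The fix is the one you already gesture at in your last sentence: repeat the same two-dimensional construction in every \corrSubpace/, which is exactly what the paper's componentwise one-dimensional embeddings do. With that repair the argument is sound.
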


\textbf{\definingCompCaps/.} A \definingComp/ pattern is of the form \compDef/, where we call $r_1$ and $r_2$ the \emph{composing} and $r_d$ the \emph{compositionally defined relation}. In essence, this pattern defines a relation $r_d$ that describes the start and end entities of a path $X \xrightarrow[]{r_1} Y \xrightarrow[]{r_2} Z$. Since any two relations $r_1$ and $r_2$ can instantiate the body of a \definingComp/ pattern, any such pair may produce a new compositionally defined relation $r_d$. 
Interestingly, \definingComp/ translates analogously into the \virtualtripleSpace/: Intuitively, this means that the embeddings of any two relations $r_1$ and $r_2$ define for $r_d$ a \emph{convex} region --- which we call the \emph{\compDefRegion/} --- that captures \compDef/, leading to Theorem~\ref{theorem:CompDefShort} (proven in Appendix~\ref{app:ProofCompDefRegion}). Based on this insight, ExpressivE captures \definingComp/ patterns by embedding the compositionally defined relation $r_d$ with the \compDefRegion/, defined by the relation embeddings of $r_1$ and $r_2$. We have formally proven that ExpressivE can capture \definingComp/ in Theorem~\ref{Theorem:GeneralComp} (see Appendices~\ref{app:CapturingPatterns} and \ref{app:Exclusively}). 

\begin{theorem}
Let $r_1, r_2, r_d \in \bm{R}$ be relations, $\bm{s_1}, \bm{s_2}$ be their ExpressivE embeddings, and assume \compDef/ holds. Then there exists a region $\bm{s_d}$ in the \virtualtripleSpace/ $\mathbb{R}^{2d}$ such that (i) $\bm{s_1}, \bm{s_2}$, and $\bm{s_d}$ capture \compDef/ and (ii) $\bm{s_d}$ is \emph{convex}.
\label{theorem:CompDefShort}
\end{theorem}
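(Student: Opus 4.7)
The plan is to exhibit $\bm{s_d}$ explicitly as the relational composition of $\bm{s_1}$ and $\bm{s_2}$, viewed as binary relations on the entity space $\mathbb{R}^d$, and to derive both parts of the theorem from the convexity of the component hyper-parallelograms. Concretely, I would define
\[
\bm{s_d} := \{(\bm{x} || \bm{z}) \in \mathbb{R}^{2d} \mid \exists\, \bm{y} \in \mathbb{R}^d \text{ with } (\bm{x} || \bm{y}) \in \bm{s_1} \text{ and } (\bm{y} || \bm{z}) \in \bm{s_2}\}.
\]
For part (i), the forward direction of \compDef/ follows by taking the intermediate witness to be the embedding of the entity assigned to $Y$, while the reverse direction is just the existential clause in the definition of $\bm{s_d}$. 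So (i) reduces to reading off the definition.

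For part (ii), I would first record that each $\bm{s_i}$ is convex. This is immediate from the parameter interpretation in Section~\ref{sec:VirtualTripleSpace}: Inequalities~\ref{equation:ParallelBoundarySpace1} and \ref{equation:ParallelBoundarySpace2} express $\bm{s_i}$ as a finite intersection of \paraBoundSpaces/, each itself an intersection of two half-spaces, so $\bm{s_i}$ is a convex polyhedron. Given this, convexity of $\bm{s_d}$ follows from a short interpolation argument: pick $(\bm{x_1} || \bm{z_1}), (\bm{x_2} || \bm{z_2}) \in \bm{s_d}$ with respective witnesses $\bm{y_1}, \bm{y_2}$, fix $\lambda \in [0,1]$, and form the convex combinations $\bm{x_\lambda}, \bm{y_\lambda}, \bm{z_\lambda}$. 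Convexity of $\bm{s_1}$ applied to $(\bm{x_1} || \bm{y_1})$ and $(\bm{x_2} || \bm{y_2})$ yields $(\bm{x_\lambda} || \bm{y_\lambda}) \in \bm{s_1}$; symmetrically $(\bm{y_\lambda} || \bm{z_\lambda}) \in \bm{s_2}$; so $(\bm{x_\lambda} || \bm{z_\lambda}) \in \bm{s_d}$ with $\bm{y_\lambda}$ as witness.

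The main subtlety I anticipate concerns the reverse direction of the biconditional: the witness $\bm{y}$ guaranteed by membership in $\bm{s_d}$ is an arbitrary point in $\mathbb{R}^d$, not necessarily the embedding of an entity in $\bm{E}$. Under the geometric reading of \compDef/ in the \virtualtripleSpace/ used throughout Section~\ref{sec:KnowledgeCapturing} this is precisely the right notion, and it is also what lets the convexity proof go through cleanly, since $\bm{y_\lambda}$ does not need to coincide with any actual entity embedding. Beyond this semantic point the construction is mechanical, and the real content of the theorem lies in its payoff for later use: it singles out a \emph{convex} region --- the \compDefRegion/ --- that can then be tightly captured by a suitable hyper-parallelogram embedding of $r_d$, which is the geometric hook ExpressivE exploits to handle both \definingComp/ and, combined with hierarchy, general composition.
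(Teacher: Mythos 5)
Your proof is correct, but it takes a genuinely different route from the paper's. You define $\bm{s_d}$ abstractly as the relational composition of $\bm{s_1}$ and $\bm{s_2}$ --- the projection onto the $(\bm{x},\bm{z})$-coordinates of $\{(\bm{x},\bm{y},\bm{z}) : (\bm{x}||\bm{y}) \in \bm{s_1},\, (\bm{y}||\bm{z}) \in \bm{s_2}\}$ --- so that part (i) is definitional and part (ii) follows from the standard fact that a linear projection of a convex set is convex, which your interpolation argument with the witness $\bm{y_\lambda}$ establishes cleanly. The paper instead proves a stronger, constructive statement (Theorem~\ref{theorem:CompDef}): it eliminates the intermediate variable $\bm{y}$ by explicitly substituting the eight defining inequalities of $\bm{s_1}$ and $\bm{s_2}$ into one another, producing the concrete system of Inequalities~\ref{equation:CompDefConstruct1}--\ref{equation:CompDefConstruct6} that characterizes $\bm{s_d}$ as an intersection of half-spaces (whence convexity is immediate). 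Your approach buys brevity and generality --- in particular, it needs no case analysis on the signs of the \slopeVecs/, whereas the paper's substitution order is only worked out for $\bm{r^{th}_i} \elemwiseGeq \bm{0}$ and is merely asserted to work ``vice versa'' for other signs. What it gives up is the explicit description of the \compDefRegion/, which the paper relies on downstream: the subsumption tests in Propositions~\ref{proposition:GeneralCompositionExact} and \ref{proposition:CompDefExclusive} and the concrete exclusivity constructions of Appendix~\ref{app:Exclusively} all need the inequalities, not just existence and convexity. You also correctly identify the one semantic subtlety --- that the witness $\bm{y}$ ranges over all of $\mathbb{R}^d$ rather than over actual entity embeddings --- and this is indeed consistent with the paper's Definition~\ref{def:TruthOfCompDefPatterns}, which quantifies over arbitrary entity assignment functions.
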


\textbf{General Composition.} 
In contrast to \definingComp/, general composition \realComp/ does not specify the composed relation $r_3$ completely. Specifically, general composition allows the relation $r_3$ to include additional entity pairs not described by the start and end entities of the path $X \xrightarrow[]{r_1} Y \xrightarrow[]{r_2} Z$.
Therefore, to capture general composition, we need to combine hierarchy and \definingComp/. Formally this means that we express general composition as: $\{r_1(X,Y) \land r_2(Y,Z) \MyIff r_d(X,Z),\;r_d(X,Y) \MyImplies r_3(X,Y)\}$. We have proven that ExpressivE can capture general composition in Theorem~\ref{Theorem:GeneralComp} (see Appendices~\ref{app:CapturingPatterns} and \ref{app:Exclusively} for the full proofs).
\begin{theorem}
ExpressivE captures \definingComp/ and general composition.
\label{Theorem:GeneralComp}
\end{theorem}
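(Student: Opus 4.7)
The plan is to prove the two halves of the theorem separately, building on Theorem~\ref{theorem:CompDefShort} (convex \compDefRegion/) and Theorem~\ref{Theorem:SetTheoreticPatterns} (hierarchy). General composition is then obtained by chaining these via the decomposition $\{r_1(X,Y)\land r_2(Y,Z)\MyIff r_d(X,Z),\; r_d(X,Y)\MyImplies r_3(X,Y)\}$ highlighted just before the theorem.

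For \emph{compositional definition}, I would start from a choice of entity embeddings and ExpressivE embeddings $\bm{s_1},\bm{s_2}$ for $r_1,r_2$. Theorem~\ref{theorem:CompDefShort} already hands us a convex region $\bm{s_d}\subseteq\mathbb{R}^{2d}$ together with the guarantee that $\bm{s_1},\bm{s_2},\bm{s_d}$ capture \compDef/. What remains is to realize $\bm{s_d}$ as an ExpressivE \hypPara/, i.e.\ to produce explicit $\bm{r_d^p},\bm{c_d^p},\bm{d_d^p}$ for $p\in\{h,t\}$ whose Inequalities~\ref{equation:ParallelBoundarySpace1}--\ref{equation:ParallelBoundarySpace2} cut out exactly $\bm{s_d}$. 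Because those inequalities decouple across the $d$ correlation subspaces, I would argue dimension by dimension: in the $j$-th \corrSubpace/, the four band inequalities contributed by $r_1,r_2$ in the three unknowns $\bm{e_x}(j),\bm{e_y}(j),\bm{e_z}(j)$ collapse, after existentially eliminating $\bm{e_y}(j)$, to the intersection of two bands in $(\bm{e_x}(j),\bm{e_z}(j))$. Its boundary slopes are products of the $r_1^{p}$- and $r_2^{p}$-slopes, its centre is an affine combination of the $\bm{c_k^p}(j)$, and its widths are non-negative combinations of the $\bm{d_k^p}(j)$. These values are then copied into $\bm{r_d^p}(j),\bm{c_d^p}(j),\bm{d_d^p}(j)$; the set-equality produced by the elimination step is what guarantees \compDef/ is captured \emph{exactly and exclusively}, not merely as an inclusion.

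For \emph{general composition}, I would invoke the first half to obtain $\bm{s_1},\bm{s_2},\bm{s_d}$ capturing \compDef/, and then invoke Theorem~\ref{Theorem:SetTheoreticPatterns}(d) to choose a \hypPara/ $\bm{s_3}$ that subsumes $\bm{s_d}$, thereby capturing $r_d \MyImplies r_3$. Chaining the two captured patterns immediately yields \realComp/. Exclusivity survives because placing $r_3$'s \hypPara/ around $\bm{s_d}$ imposes no constraint beyond the containment that hierarchy itself requires, and by construction any triple captured by $\bm{s_3}$ outside $\bm{s_d}$ corresponds to no extra forced body-conclusion.

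\textbf{Main obstacle.} The genuinely delicate step is the dimension-wise existential elimination of $\bm{e_y}(j)$ in the first half: I must keep track of signs so that the resulting width vector $\bm{d_d^p}$ is non-negative (as the ExpressivE parametrisation demands) and verify that the composed slopes $\bm{r_d^p}(j)$ are real numbers realisable as legal parameters. This bookkeeping also underlies exclusivity, since the two reduced bands must \emph{exactly} describe the projection, not an overapproximation. Once this is settled, the hierarchy step and the chaining to general composition are essentially immediate.
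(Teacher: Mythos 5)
Your high-level decomposition --- \definingComp/ plus hierarchy, chained through the convex region of Theorem~\ref{theorem:CompDefShort} --- is exactly the paper's (Propositions~\ref{proposition:GeneralCompositionExact}, \ref{proposition:CompDefExact}, \ref{proposition:GeneralCompositionExclusive}, and \ref{proposition:CompDefExclusive}). However, there are two genuine gaps. First, your central claim that existentially eliminating $\bm{e_y}(j)$ from the four bands of $r_1$ and $r_2$ yields ``the intersection of two bands'' in $(\bm{e_x}(j),\bm{e_z}(j))$ is false: all four bands involve $\bm{e_y}(j)$, and the elimination produces the \emph{six} distinct bands of Inequalities~\ref{equation:CompDefConstruct1}--\ref{equation:CompDefConstruct6} (four with different slopes in $(\bm{x},\bm{z})$, plus one constraining $\bm{x}$ alone and one constraining $\bm{z}$ alone). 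The \compDefRegion/ is therefore in general a convex polygon with more than four edges, and it cannot simply be ``copied into'' a single legal triple $(\bm{r_d^p},\bm{c_d^p},\bm{d_d^p})$, since an ExpressivE relation is the intersection of only two bands per \corrSubpace/. The paper avoids this by requiring only \emph{subsumption} of $\bm{s_d}$ by $\bm{f_h}(r_3)$ for general composition, and for \definingComp/ it exhibits a concrete configuration (Table~\ref{param:ExclusiveGeneralComp}, where $\bm{d_1^h}=0$ degenerates one band) in which the six constraints do collapse to a parallelogram; your argument as stated would need the same restriction made explicit.

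Second, your treatment of exclusivity is not a proof. Under the paper's definition (Appendix~\ref{app:FormalDefinitions}), you must exhibit a relation configuration that captures the pattern while capturing \emph{no} other positive pattern not entailed by it; the paper does this by writing down explicit parameters and verifying (M) that no \hypPara/ subsumes another's mirror image, (I) that $\bm{f_h}(r_1)$, $\bm{f_h}(r_2)$, $\bm{f_h}(r_3)$ are pairwise disjoint (otherwise an unwanted intersection or hierarchy pattern would be captured), and (C) via head and tail intervals that $(r_1,r_2)$ is the only pair of relations admitting a \compDefRegion/ at all. Your remark that placing $r_3$'s \hypPara/ around $\bm{s_d}$ ``imposes no constraint beyond containment'' addresses none of these; in particular, nothing in your construction prevents $\bm{f_h}(r_3)$ from intersecting $\bm{f_h}(r_1)$ or from subsuming a \compDefRegion/ of some other relation pair.
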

We argue that hierarchy and general composition are very tightly connected as hierarchies are hidden within general composition. If, for instance, $r_1$ were to represent the relation that solely captures self-loops, then the general composition \realComp/ would reduce to a hierarchy $r_2(X,Y) \MyImplies r_3(X,Y)$. This hints at why our model is the first to support general composition, as ExpressivE can capture both hierarchy and composition jointly in a single embedding space.

\textbf{Key Properties}.
ExpressivE's way of capturing inference patterns has interesting implications:
\begin{enumerate}
    \item We observe that ExpressivE embeddings offer an intuitive geometric interpretation: there is a natural correspondence between (a) relations in the KG -- and -- regions (representing mathematical relations) in the \virtualtripleSpace/, (b) relation containment, intersection, and disjointness in the KG -- and -- region containment, intersection, and disjointness in the \virtualtripleSpace/, (c) symmetry, anti-symmetry, and inversion in the KG -- and -- symmetry, anti-symmetry, and reflection in the \virtualtripleSpace/, (d) compositional definition in the KG -- and -- the composition of mathematical relations in the \virtualtripleSpace/.

\item Next, we observe that ExpressivE captures a general composition pattern if the \hypPara/ of the pattern's head relation subsumes the \compDefRegion/ defined by its body relations. Thereby, ExpressivE assigns a novel spatial interpretation to general composition patterns, generalizing the spatial interpretation that is directly provided by \setTheoretProps/ such as hierarchy, intersection, and mutual exclusion. 

\item Finally, capturing general composition patterns through the subsumption of spatial regions allows ExpressivE to provably capture composition patterns for \textit{1-N}, \textit{N-1}, and \textit{N-N} \relations/. We provide further empirical evidence to this in Appendix~\ref{app:CardinalityExp}.
\end{enumerate}

\section{Experimental Evaluation and Space Complexity }
\label{sec:ExpEval}

In this section, we evaluate ExpressivE on the standard KGC benchmarks WN18RR \citep{ConvE} and FB15k-237 \citep{FB15K237} and report SotA results, providing strong empirical evidence for the theoretical strengths of ExpressivE. Furthermore, we perform an ablation study on ExpressivE's parameters to quantify the importance of each parameter and finally perform a relation-wise performance comparison on WN18RR to provide an in-depth analysis of our results.

\subsection{Knowledge Graph Completion}

\textbf{Experimental Setup.}
As in \citet{BoxE}, we compare ExpressivE to the functional models TransE \citep{TransE} and RotatE \citep{RotatE}, spatial model BoxE \citep{BoxE}, and bilinear models DistMult \citep{DistMult}, ComplEx \citep{ComplEx}, and TuckER \citep{TuckER}.
ExpressivE is trained with gradient descent for up to $1000$ epochs, stopping the training if after $100$ epochs the Hits@10 score did not increase by at least $0.5\%$ for WN18RR and $1\%$ for FB15k-237. We use the model of the final epoch for testing. Each experiment was repeated $3$ times to account for small performance fluctuations. In particular, the MRR values fluctuate by less than $0.003$ between runs for any dataset.
We maintain the fairness of our result comparison by considering KGEs with a dimensionality $d \leq 1000$ \citep{TuckER, BoxE}.
To allow a direct comparison of ExpressivE's performance and parameter efficiency to its closest functional relative RotatE and spatial relative BoxE, we employ the same embedding dimensionality for the benchmarks as RotatE and BoxE. Appendix~\ref{app:ExpDetails} lists further setup details, hyperparameters, libraries \citep{PyKeen}, hardware details, definitions of metrics, and properties of datasets. 

\begin{table}[h!]
\caption{Model sizes of ExpressivE, BoxE, and RotatE models of equal dimensionality.}
\label{tab:modelsizesWN18RR}
\centering
\begin{tabular}{lllll}
\toprule
Benchmark        & Dimensionality & \multicolumn{1}{c}{ExpressivE} & BoxE  & RotatE \\ \cmidrule{1-5} 
WN18RR          & 500        & \textbf{467MB}                 & 930MB & 930MB  \\ 
FB15k-237       & 1000       & \textbf{366MB}                      &   687MB    &  687MB      \\ 
\bottomrule
\end{tabular}
\end{table}

\textbf{Space Complexity.}
For a $d$-dimensional embedding, RotatE and BoxE have $(2 |\bm{E}| + 2 |\bm{R}|) d$, whereas ExpressivE has $(|\bm{E}| + 6 |\bm{R}|) d$ parameters, where $|\bm{E}|$ is the number of entities and $|\bm{R}|$ the number of relations. Since $|\bm{R}| << |\bm{E}|$ in most graphs, (e.g., FB15k-237: $|\bm{R}|/|\bm{E}| = 0.016$)
ExpressivE almost \emph{halves} the number of parameters for a $d$-dimensional embedding compared to BoxE and RotatE. Table~\ref{tab:modelsizesWN18RR} lists the model sizes of trained ExpressivE, BoxE, and RotatE models of the same dimensionality, empirically confirming that ExpressivE almost \emph{halves} BoxE's and RotatE's sizes. 
\begin{table}[h!]
\caption{KGC performance of ExpressivE and SotA KGEs on FB15k-237 and WN18RR. The table shows the best-published results of the competing models per family, specifically: TransE and RotatE \citep{RotatE}, BoxE \citep{BoxE}, DistMult and ComplEx \citep{YouCanTeachAnOldDogNewTricks, YangYHGD14a}, and TuckER \citep{TuckER}.}
\label{tab:Results}
\centering
\resizebox{\textwidth}{!}{%
\begin{tabular}{clcccccccc}
\toprule
Family & Model                                                                   & \multicolumn{4}{c}{\textbf{WN18RR}}                                                                                                        & \multicolumn{4}{c}{\textbf{FB15k-237}}                                                                                                     \\
\cmidrule(lr){3-6} \cmidrule(lr){7-10} 
\multirow{5}{*}{\rotatebox[origin=c]{90}{Func. \& Spatial}} & & H@1                            & H@3                            & H@10                           & MRR                            & H@1                            & H@3                            & H@10                           & MRR                            \\
 & Base ExpressivE                                                         & \textbf{.464} & \textbf{.522} & .597 & \textbf{.508} & .243                           & .366                           & .512                           & .333                           \\
& Func. ExpressivE                                                   & .407                           & .519                           & \textbf{.619}                           & .482                           & \textbf{.256} & \textbf{.387} & .535                           & \textbf{.350} \\
& BoxE                                                                    & .400                           & .472                           & .541                           & .451                           & .238                           & .374                           & \textbf{.538} & .337                           \\
& RotatE                                                                  & .428                           & .492                           & .571                           & .476                           & .241                           & .375                           & .533                           & .338                           \\
& TransE                                                                 & .013                           & .401                           & .529                           & .223                           & .233                           & .372                           & .531                           & .332                           \\ 
\midrule
\multirow{3}{*}{\rotatebox[origin=c]{90}{Bilinear}} & DistMult                                                                & -                              & -                              & .531                           & .452                           & -                              & -                              & .531                           & .343                           \\
& ComplEx                                                                 & -                              & -                              & \textbf{.547} & \textbf{.475} & -                              & -                              & .536                           & .348                           \\
& TuckER                                                                  & \textbf{.443} & \textbf{.482} & .526                           & .470                           & \textbf{.266} & \textbf{.394} & \textbf{.544} & \textbf{.358} \\ 
\bottomrule
\end{tabular}
}
\end{table}

\textbf{Benchmark Results.}
We use two versions of ExpressivE in the benchmarks, one where the width parameter $\bm{d^{ht}_i}$ is learned and one where $\bm{d^{ht}_i} = 0$, called Base ExpressivE and Functional ExpressivE. Tables~\ref{tab:modelsizesWN18RR} and \ref{tab:Results} reveal that Functional ExpressivE, with only \emph{half} the number of parameters of BoxE and RotatE, performs best among spatial and functional models on FB15k-237 and is competitive with TuckER, especially in MRR. Even more, Base ExpressivE \emph{outperforms all} competing models significantly on WN18RR. The significant performance increase of Base ExpressivE on WN18RR is likely due to WN18RR containing both hierarchy and composition patterns in contrast to FB15k-237 (similar to the discussion of \citet{BoxE}). We will empirically investigate the reasons for ExpressivE's performances on FB15k-237 and WN18RR in Section~\ref{sec:Ablation} and Section~\ref{sec:PerRelationAnalysis}.

\textbf{Discussion.}
Tables~\ref{tab:modelsizesWN18RR} and \ref{tab:Results} reveal that ExpressivE is highly parameter efficient compared to related spatial and functional models while reaching competitive performance on FB15k-237 and even new SotA performance on WN18RR, supporting the extensive theoretical results of our paper.

\subsection{Ablation Study}
\label{sec:Ablation}

This section analyses how constraints on ExpressivE's parameters impact its benchmark performances. Specifically, we analyze the following constrained ExpressivE versions: (1) \emph{Base ExpressivE}, which represents ExpressivE without any parameter constraints, (2) \emph{Functional ExpressivE}, where the width parameter $\bm{d^{ht}_i}$ of each relation $r_i$ is zero, (3) \emph{EqSlopes ExpressivE}, where all \slopeVecs/ are constrained to be equal --- i.e., $\bm{r^{ht}_i} = \bm{r^{ht}_k}$ for any relations $r_i$ and $r_k$, (4) \emph{NoCenter ExpressivE}, where the center vector $\bm{c^{ht}_i}$ of any relation $r_i$ is zero, and (5) \emph{OneBand ExpressivE}, where each relation is embedded by solely one \paraBoundSpace/ instead of two --- i.e., OneBand ExpressivE captures a triple $r_i(e_h, e_t)$ to be true if its relation and entity embeddings only satisfy Inequality~\ref{equation:ParallelBoundarySpace1}.
\begin{table}[h!]
\caption{Ablation study on ExpressivE's parameters. }
\label{tab:Ablation}
\centering
\begin{tabular}{lllllllll}
\toprule
Model               & \multicolumn{4}{c}{\textbf{WN18RR}} & \multicolumn{4}{c}{\textbf{FB15k-237}} \\
                    \cmidrule(lr){2-5} \cmidrule(lr){6-9}
                    & H@1   & H@3  & H@10 & MRR  & H@1   & H@3   & H@10  & MRR   \\
Base ExpressivE     & \textbf{.464}  & \textbf{.522} & .597 & \textbf{.508} & .243  & .366  & .512  & .333  \\
Func. ExpressivE    & .407  & .519 & \textbf{.619} & .482 & \textbf{.256}  & \textbf{.387}  & \textbf{.535}  & \textbf{.350}  \\
EqSlopes ExpressivE & .254  & .415 & .528 & .353 & .237  & .361  & .510  & .328  \\
NoCenter ExpressivE & .457  & .514 & .591 & .501 & .224  & .349  & .494  & .314  \\
OneBand ExpressivE  & .435  & .480 & .538 & .470 & .230  & .352  & .491  & .318  \\
\bottomrule
\end{tabular}
\end{table}

\textbf{Ablation Results.} Table~\ref{tab:Ablation} provides the results of the ablation study on WN18RR and FB15k-237. It reveals that each component of ExpressivE is vital as setting all slopes $\bm{r^{ht}_i}$ to be equal (EqSlopes ExpressivE) or removing the center $\bm{c^{ht}_i}$ (NoCenter ExpressivE), width $\bm{d^{ht}_i}$ (Functional ExpressivE), or a \paraBoundSpace/ (OneBand ExpressivE) results in performance losses on at least one benchmark.
Interestingly, Functional outperforms Base ExpressivE on FB15k-237. 
Since Functional ExpressivE sets $\bm{d^{ht}_i} = 0$, the relation embeddings reduce from a \hypPara/ to a function. Intuitively, this means that Functional ExpressivE loses the spatial capabilities of Base ExpressivE such as the ability to capture hierarchy, while it maintains functional capabilities, such as the ability to capture \definingComp/. 
Table~\ref{tab:Ablation} reveals that the performance of ExpressivE increases when we remove its spatial capabilities, depicted by the performance gain of Functional over Base ExpressivE. This result hints at FB15k-237 not containing many hierarchy patterns. Thus, FB15k-237 cannot exploit the added capabilities of Base ExpressivE, namely the ability to capture general composition and hierarchy. 
In contrast, the significant performance gain of Base ExpressivE over Functional ExpressivE on WN18RR is likely due to WN18RR containing many composition and hierarchy patterns (\citep{BoxE}, cf. Appendix~\ref{app:EmpiricalGeneralComp}), exploiting Base ExpressivE's added capabilities.  

\subsection{WN18RR Performance Analysis}
\label{sec:PerRelationAnalysis}

This section analyses the performance of ExpressivE and its closest spatial relative BoxE \citep{BoxE} and functional relative RotatE \citep{RotatE} on WN18RR. 
Table~\ref{tab:perRelationEval} lists the MRR of ExpressivE, RotatE, and BoxE for each of the 11 relations of WN18RR. Bold values represent the best and underlined values represent the second-best results across the compared models. 

\begin{table}[h!]
\caption{Relation-wise MRR comparison of ExpressivE, RotatE, and BoxE on WN18RR.}
\label{tab:perRelationEval}
\centering
\begin{tabular}{lccc}
\toprule
Relation Name                 & ExpressivE & RotatE & BoxE           \\ 
\cmidrule{2-4} 
member\_meronym               & \textbf{0.233} & 0.199 & \underline{0.226} \\
hypernym                      & \textbf{0.189} & \underline{0.162} & 0.159  \\
has\_part                     & \textbf{0.198} & \underline{0.187} & 0.168  \\
instance\_hypernym            & \underline{0.352} & 0.326 & \textbf{0.425}  \\
synset\_domain\_topic\_of     & \underline{0.363} & \textbf{0.384} & 0.323  \\
member\_of\_domain\_usage     & 0.288 & \underline{0.333} & \textbf{0.360}  \\
member\_of\_domain\_region    & 0.123 & \underline{0.188} & \textbf{0.189}  \\
also\_see                     & \textbf{0.649} & \underline{0.631} & 0.517  \\
derivationally\_related\_from & \textbf{0.956} & \underline{0.943} & 0.902  \\
similar\_to                   & \textbf{1.000} & \textbf{1.000} & \textbf{1.000} \\
verb\_group                   & \textbf{0.972} & 0.843 & \underline{0.876}  \\
\bottomrule
\end{tabular}
\end{table}

\paragraph{Results.} 
ExpressivE performs very well on many relations, where either only BoxE or only RotatE produces good rankings, empirically confirming that ExpressivE combines the \inductiveCap/ of BoxE (hierarchy) and RotatE (\definingComp/). Additionally, ExpressivE does not only reach similar performances as RotatE and BoxE if only one of them produces good rankings but even surpasses both of them significantly on relations such as \emph{verb\_group}, \emph{also\_see}, and \emph{hypernym}. This gives strong experimental evidence that ExpressivE combines the \inductiveCap/ of functional and spatial models, even extending them by novel capabilities (such as general composition), empirically supporting our extensive theoretical results of Section~\ref{sec:KnowledgeCapturing}.

\section{Conclusion}
\label{sec:Conclusion}

In this paper, we have introduced ExpressivE, a KGE that (i) represents inference patterns through spatial relations of \hypParas/, offering an intuitive and consistent geometric interpretation of ExpressivE embeddings and their captured patterns, (ii) can capture a wide variety of important inference patterns, including hierarchy and general composition jointly, resulting in strong benchmark performances (iii) is fully expressive, and (iv) reaches competitive performance on FB15k-237, even outperforming any competing model significantly on WN18RR. In the future, we plan to analyze the performance of ExpressivE on further datasets, particularly focusing on the relation between constrained ExpressivE versions and dataset properties.

\newpage

\section*{Reproducibility Statement}
We have made our code publicly available in a GitHub repository\footnote{https://github.com/AleksVap/ExpressivE}. It contains, in addition to the code of ExpressivE, a setup file to install the necessary libraries and a ReadMe.md file containing library versions and running instructions to facilitate the reproducibility of our results.
Furthermore, we have provided all information for reproducing our results --- including the concrete hyperparameters, further details of our experiment setup, the used libraries \citep{PyKeen}, hardware details, definitions of metrics, properties of datasets, and more --- in Appendix~\ref{app:ExpDetails}. We have provided the complete proofs for our extensive theoretical results in the appendix and stated the complete set of assumptions we made. Specifically, each theorem states any necessary assumption, and each proof starts by listing any property we assume without loss of generality. We have proven Theorem~\ref{Theorem:FullyExpressive} in Appendix~\ref{app:FullyExpressiveness}, Theorem~\ref{theorem:CompDefShort} in Appendix~\ref{app:ProofCompDefRegion}, and Theorems~\ref{Theorem:SetTheoreticPatterns} and \ref{Theorem:GeneralComp} in Appendices~\ref{app:CapturingPatterns} and \ref{app:Exclusively}.


\subsubsection*{Acknowledgments}
We are grateful to Maximilian Beck for helpful discussions and feedback. This work has been funded by the Vienna Science and Technology Fund (WWTF) [10.47379/VRG18013].

\bibliography{iclr2023_conference}
\bibliographystyle{iclr2023_conference}

\newpage

\appendix

\section{Overview of the Appendix}

This appendix contains detailed proofs, analyses, and descriptions of our experimental setup.
Section~\ref{app:Notation} gives an overview of the used notations. Section~\ref{app:FormalDefinitions} specifies the complete formal definitions for all used terms. Section~\ref{app:FullyExpressiveness} contains a detailed proof of Theorem~\ref{Theorem:FullyExpressive}, i.e., showing that ExpressivE is fully expressive. 
Section~\ref{app:ProofCompDefRegion} proves Theorem~\ref{theorem:CompDefShort}, developing technical machinery to support further proofs in this appendix.
Sections~\ref{app:CapturingPatterns} and \ref{app:Exclusively} provide additional propositions and proofs for Theorems~\ref{Theorem:SetTheoreticPatterns} and \ref{Theorem:GeneralComp}, proving ExpressivE's inference capabilities. Section~\ref{app:ExtendedComposition} proves that ExpressivE can capture more than one step of composition. Section~\ref{app:AdditionalExperiments} provides additional empirical evidence for ExpressivE's theoretical capabilities, specifically investigating ExpressivE's performance stratified by cardinalities, captured composition patterns, and reasoning steps. Section~\ref{app:DistanceFunction} explores the main goals and properties of ExpressivE's distance function introduced in Section~\ref{sec:VirtualTripleSpace}.
Section~\ref{app:ExpressivesNatures} further discusses ExpressivE's functional and spatial nature, comparing ExpressivE's inference capabilities with those of spatial and functional models. Section~\ref{app:Tradeoff} further analyses the trade-off discovered in Section~\ref{sec:ExpEval} between high expressive power and low degrees of freedom. Finally, Section~\ref{app:ExpDetails} provides further details on the experimental setup, benchmark datasets, and evaluation metrics. 

\section{Notation}
\label{app:Notation}

In this section, we give a brief overview of the most important notations we use:

$v$ \dots non-bold symbols represent scalars

$\bm{v}$ \dots bold symbols represent vectors, sets or tuples

$\bm{0}$ \dots represents a vector of solely zeros (the same semantics apply to $\bm{0.5}$, $\bm{1}$, and $\bm{2}$)

$\elemwiseDiv$ \dots represents the elementwise division operator

$\elemwiseProd$ \dots represents the elementwise (Hadamard) product operator

$\elemwiseGeq$ \dots represents the elementwise greater or equal operator

$\elemwiseGr$ \dots represents the elementwise greater operator

$\elemwiseLeq$ \dots represents the elementwise less or equal operator

$\elemwiseLe$ \dots represents the elementwise less operator

$\bm{x}^{|.|}$ \dots represents the elementwise absolute value

$||$ \dots represents the concatenation operator

$\bm{v}(j)$ \dots represents the $j$-th dimension of a vector $\bm{v}$

\section{Formal Definitions}
\label{app:FormalDefinitions}

In this section, we formally introduce the notions of capturing a pattern in an ExpressivE model that we informally discussed in Section~\ref{sec:KnowledgeCapturing}. Furthermore, we will introduce some additional notations, which will help us simplify the upcoming proofs and present them intuitively.

\textbf{Knowledge Graph.} A tuple $(\bm{G}, \bm{E}, \bm{R})$ is called a knowledge graph, where $\bm{R}$ is a finite set of relations, $\bm{E}$ is a finite set of entities, and $\bm{G} \subseteq \bm{E} \times \bm{R} \times \bm{E}$ is a finite set of triples. 
W.l.o.g., we assume that any relation is non-empty since assigning an empty \hypPara/ to an empty relation would be trivial, just adding unnecessary complexity to the proofs.   

\textbf{ExpressivE model.} A tuple $\bm{M} = (\bm{\epsilon}, \bm{\sigma}, \bm{\delta}, \bm{\rho})$ is called an ExpressivE model, where $\bm{\epsilon} \subset 2^{\mathbb{R}^d}$ is a finite set of \entity/ embeddings, $\bm{\sigma} \subset 2^{\mathbb{R}^d}$ is a finite set of center embeddings, $\bm{\delta} \subset 2^{\mathbb{R}^d}$ is a finite set of width embeddings, and $\bm{\rho} \subset 2^{\mathbb{R}^d}$ is a finite set of \slopeVecs/.

\textbf{Linking Embeddings to KGs.} An ExpressivE model and a KG are linked via the following assignment functions:
The entity assignment function $\bm{f_e}: \bm{E} \rightarrow \bm{\epsilon}$ assigns an entity embedding $\bm{e_h} \in \bm{\epsilon}$ to each \entity/ $e_h \in \bm{E}$.
Based on $\bm{f_e}$, the virtual assignment function $\bm{f_v}: \bm{E} \times \bm{E} \rightarrow \mathbb{R}^{2d}$ defines for any pair of entities $(e_h,e_t) \in \bm{E}$ a virtual entity pair embedding $\bm{f_v}(e_h,e_t) = (\bm{f_e}(e_h) || \bm{f_e}(e_t))$, where $||$ represents the concatenation operator. Furthermore, the relation assignment function $\bm{f_h}(r_i): \bm{R} \rightarrow \mathbb{R}^{2d} \times \mathbb{R}^{2d} \times \mathbb{R}^{2d}$ assigns a \hypPara/ to each relation $r_i$. In more detail, this means that $\bm{f_h}(r_i) = (\bm{c^{ht}_i}, \bm{d^{ht}_i}, \bm{r^{th}_i})$, where $\bm{c^{ht}_i} = (\bm{c_i^{h}} || \bm{c_i^{t}})$ are two concatenated center embeddings with $\bm{c_i^{h}}, \bm{c_i^{t}} \in \bm{\sigma}$, where $\bm{d^{ht}_i} = (\bm{d_i^{h}} || \bm{d_i^{t}})$ are two concatenated width embeddings with $\bm{d_i^{h}}, \bm{d_i^{t}} \in \bm{\delta}$, and where $\bm{r^{th}_i} = (\bm{r_i^{t}} || \bm{r_i^{h}})$ are two concatenated \slopeVecs/ with $\bm{r_i^{t}}, \bm{r_i^{h}} \in \bm{\rho}$. Intuitively, $\bm{f_h}(r_i)$ defines a \hypPara/ in the \virtualtripleSpace/ $\mathbb{R}^{2d}$ as described in Section~\ref{sec:VirtualTripleSpace}.

\textbf{Model Configuration.} We call an ExpressivE model $\bm{M}$ together with a concrete relation assignment function $\bm{f_h}$ a relation configuration $\bm{m_h} = (\bm{M}, \bm{f_h})$ and if it additionally has a concrete virtual assignment function $\bm{f_v}$, we call it a complete model configuration $\bm{m} = (\bm{M}, \bm{f_h}, \bm{f_v})$.

\textbf{Definition of Truth.} A triple $r_i(e_h,e_t)$ holds in some $\bm{m}$, with $r_i \in \bm{R}$ and $e_h,e_t \in \bm{E}$ iff Inequalities~\ref{equation:ParallelBoundarySpace1} and \ref{equation:ParallelBoundarySpace2} hold for the assigned embeddings of $h, t$, and $r$. This means more specifically that Inequalities~\ref{equation:ParallelBoundarySpace1} and \ref{equation:ParallelBoundarySpace2} need to hold for $\bm{f_v}(e_h,e_t) = (\bm{f_e}(e_h)|| \bm{f_e}(e_t)) = (\bm{e_h} || \bm{e_t})$ and $\bm{f_h}(r_i) = (\bm{c^{ht}_i}, \bm{d^{ht}_i}, \bm{r^{th}_i})$, with $\bm{c^{ht}_i} = (\bm{c_i^{h}} || \bm{c_i^{t}})$, $\bm{d^{ht}_i} = (\bm{d_i^{h}} || \bm{d_i^{t}})$, and $\bm{r^{th}_i} = (\bm{r_i^{t}} || \bm{r_i^{h}})$. At an intuitive level, this means that a triple $r_i(e_h,e_t)$ is true in some complete model configuration $\bm{m}$ iff the virtual pair embedding $\bm{f_v}(e_h,e_t)$ of entities $e_h$ and $e_t$ lies within the \hypPara/ of relation $r_i$ defined by $\bm{f_h}(r_i)$.

\textbf{Simplifying Notations.} Therefore, to simplify the upcoming proofs, we denote with $\bm{f_v}(e_h,e_t) \in \bm{f_h}(r_i)$ that the virtual pair embedding $\bm{f_v}(e_h,e_t) \in \mathbb{R}^{2d}$ of an entity pair $(e_h, e_t) \in \bm{E} \times \bm{E}$ lies within the \hypPara/ $\bm{f_h}(r_i) \subseteq \mathbb{R}^{2d} \times \mathbb{R}^{2d} \times \mathbb{R}^{2d}$ of some relation $r_i \in \bm{R}$ in the \virtualtripleSpace/. Accordingly, for sets of virtual pair embeddings $\bm{P} := \{\bm{f_v}(e_{h_1},e_{t_1}), \dots, \bm{f_v}(e_{h_n},e_{t_n})\}$, we denote with $\bm{P} \subseteq \bm{f_h}(r_i)$ that all virtual pair embeddings of $\bm{P}$ lie within the \hypPara/ of the relation $r_i$. Furthermore, we denote with $\bm{f_v}(e_h,e_t) \not\in \bm{f_h}(r_i)$ that a virtual pair embedding $\bm{f_v}(e_h,e_t)$ does not lie within the \hypPara/ of a relation $r_i$ and with $\bm{P} \not\subseteq \bm{f_h}(r_i)$ we denote that an entire set of virtual pair embeddings $\bm{P}$ does not lie within the \hypPara/ of a relation $r_i$.
 
\textbf{Capturing Inference Patterns.} Based on the previous definitions, we define capturing patterns formally:
A relation configuration $\bm{m_h}$ captures a pattern $\psi$ \emph{exactly} if for any ground pattern $\phi_{B_1} \land \dots \land \phi_{B_m} \MyImplies \phi_H$ within the deductive closure of $\psi$ and for any instantiation of $\bm{f_e}$ and $\bm{f_v}$ the following conditions are satisfied:
\begin{itemize}
    \item if $\phi_H$ is a triple and if $\bm{m_h}$ captures the body triples to be true --- i.e., $\bm{f_v}(\mathit{args}(\phi_{B_1})) \in \bm{f_h}(\mathit{rel}(\phi_{B_1})), \dots, \bm{f_v}(\mathit{args}(\phi_{B_m})) \in \bm{f_h}(\mathit{rel}(\phi_{B_m}))$ --- then $\bm{m_h}$ also captures the head triple to be true --- i.e., $\bm{f_v}(\mathit{args}(\phi_H)) \in \bm{f_h}(\mathit{rel}(\phi_H))$.
    \item if $\phi_H = \bot$, then $\bm{m_h}$ captures at least one of the body triples to be false --- i.e., there is some $j \in \{1, \dots, m\}$ such that $\bm{f_v}(\mathit{args}(\phi_{B_j})) \not\in \bm{f_h}(\mathit{rel}(\phi_{B_j}))$.
\end{itemize}
where $\mathit{args}()$ is the function that returns the arguments of a triple and $\mathit{rel}()$ is the function that returns the relation of the triple. 
Furthermore, a relation configuration $\bm{m_h}$ captures a pattern $\psi$ \emph{exactly} and \emph{exclusively} if (1) $\bm{m_h}$ exactly captures $\psi$ and (2) $\bm{m_h}$ does not capture any \emph{positive} pattern $\phi$ (i.e., $\phi \in \{\mathit{symmetry},\; \mathit{inversion},\; \mathit{hierarchy},\; \mathit{intersection},\; \mathit{composition}\}$) such that $\psi \not\models \phi$ except where the body of $\phi$ is not satisfied over $\bm{m_h}$.

\textbf{Discussion.} In the following, some intuition of the above definition of capturing a pattern is provided. Capturing a pattern \emph{exactly} is defined straightforwardly by adhering to the semantics of logical implication $\phi := \phi_B \MyImplies \phi_H$, i.e., a relation configuration $\bm{m_h}$ needs to be found such that for any complete model configuration $\bm{m}$ over $\bm{m_h}$ if the body $\phi_B$ of the pattern is satisfied, then its head $\phi_H$ can be inferred.

Capturing a pattern \emph{exactly} and \emph{exclusively} imposes additional constraints. Here, we do not solely aim at capturing a pattern but at additionally showcasing that a pattern can be captured independently from any other pattern. Therefore, some notion of minimality/exclusiveness of a pattern is needed. As in \citet{BoxE}, we define minimality by means of
\emph{solely} capturing those positive patterns $\phi$ that directly follow from the deductive closure of the pattern $\psi$, except for those $\phi$ that are captured trivially, i.e., except for those $\phi$ where their body is not satisfied over the constructed $\bm{m_h}$.

As presented in Section~\ref{sec:KnowledgeCapturing}, we can express any supported pattern by means of spatial relations of the corresponding relation \hypParas/ in the \virtualtripleSpace/. Therefore, we formulate \emph{exclusiveness} intuitively as the ability to limit the intersection of \hypParas/ to only those intersections that directly follow from the captured pattern $\psi$ for any known relation $r_i \in \bm{R}$, which is in accordance with BoxE's notion of exclusiveness \citep{BoxE}.

Note that our definition of capturing patterns solely depends on relation configurations. This is vital for ExpressivE to be able to capture patterns in a \emph{lifted} manner, i.e., ExpressivE shall be able to capture patterns without the need of grounding them first. Furthermore, being able to capture patterns in a lifted way is not only efficient but also natural as we aim at capturing patterns between \relations/. Thus it would be unnatural if constraints on \entity/ embeddings were necessary to capture such \relation/-specific patterns.

As outlined in the previous paragraphs, our definition is in accordance with the literature, focuses on efficiently capturing patterns, and gives us a formal foundation for the upcoming proofs, which will show that ExpressivE can capture various logical patterns.

\section{Proof of Fully Expressiveness}
\label{app:FullyExpressiveness}

In this section, we prove Theorem~\ref{Theorem:FullyExpressive}. We will show by induction that ExpressivE is fully expressive. 
We will first only consider self-loop-free triples, i.e., triples of the form $r_i(e_j, e_k)$ with $e_j, e_k \in \bm{E}$, $r_i \in \bm{R}$ and $j \neq k$ and later remove unwanted self-loops from the constructed model configuration.

Since our proof is highly technical, we will first give some general intuition and then formally state our proof.
In the base case, we consider an ExpressivE model that captures the complete graph $G$ over the entity vocabulary $\bm{E}$ and the relationship vocabulary $\bm{R}$, i.e., the graph that contains all triples from the universe. In the induction step, we prove that we can adjust our ExpressivE model to make any arbitrary self-loop-free triple of $G$ false while maintaining the truth value of any other triple in the universe.

In the induction step, we make triples $r_i(e_j,e_k)$ false by translating the entity embeddings of $e_j$ and $e_k$ such that a \hypPara/ can separate pairs of entity embeddings that shall be true from those that shall be false. Afterward, we translate and shear $r_i$'s \hypPara/ to match such a separating shape.

Finally, after the induction step, we add a separate dimension for any possible self-loop, i.e., triple of the form $r_i(e_j,e_j)$ such that we can make any self-loop false. Thereby, we show that ExpressivE can make any triple false and thus that ExpressivE can capture any graph $G$ over $\bm{R}$ and $\bm{E}$. 

Our proof shares some common ideas with the fully expressiveness proof of BoxE \citep{BoxE}, yet differs dramatically in many aspects. BoxE embeds relations with two axis-aligned boxes and entities with two separate embedding vectors, which greatly simplifies the fully expressiveness proof of BoxE, as the two entity embeddings are independent of each other. This grants BoxE some flexibility for adapting model configuration yet imposes substantial restrictions, such as that BoxE cannot capture any notion of composition patterns. Our model does not have these restrictions and uses only one embedding vector per entity instead, pushing the complexity of our model to the relation embeddings by representing relations as \hypPara/ in the \virtualtripleSpace/. This, however, has the consequence that we cannot easily change entity embeddings without moving and sheering relation embeddings as well when we want to make solely one triple false and preserve the truth value of any other triple. In the following proof, we will explain the complex adjustment of relation embeddings and many more novel aspects of our proof in more detail.

We start our proof by making the following assumptions without loss of generality: 
\begin{enumerate}
    \item Any relation $r_i \in \bm{R}$ and entity $e_j \in \bm{E}$ is indexed with $0 \leq i \leq |\bm{R}| - 1$ and $0 \leq j \leq |\bm{E}| - 1$.
    \item The dimensionality of each relation and entity embedding vectors is equal to $|\bm{E}| * |\bm{R}|$. Furthermore, $\bm{v}(i,j)$ represents the dimension $i * |\bm{E}| + j$ of the vector $\bm{v}$. Intuitively, the dimensions of $\bm{v}(i, 0), \dots, \bm{v}(i, |\bm{E}| - 1)$ corresponds to the dimensions reserved for relation $r_i$.
    \item The \slopeVecs/ of relation $r_i \in \bm{R}$ are positive, i.e.,  $\bm{r_i^h}, \bm{r_i^t} > 0$.
    \item Any entity embedding is positive, i.e., for any entity $e_k \in \bm{E}$ holds that $\bm{e_k} > 0$.
    \item For any pair of entities $e_{k_1}, e_{k_2} \in \bm{E}$ holds that $\bm{e_{k_1}}(i,k_1) \geq \bm{e_{k_2}}(i,k_1) + m$, with $m > 0$.
\end{enumerate}

Building on these assumptions, we prove fully expressiveness by induction as follows:

\textbf{Base Case.} 
We initialize a graph $G$ as the whole universe over $\bm{E}$ and $\bm{R}$ and construct a complete model configuration $\bm{m} = (\bm{M}, \bm{f_h}, \bm{f_v})$ with dimensionality $|\bm{E}| * |\bm{R}|$ such that $G$ is captured and all assumptions are satisfied.  
Concretely, we specify for any dimension $(i,k_1)$ with $0 \leq i \leq |\bm{R}| - 1$ and $0 \leq k_1 \leq |\bm{E}| - 1$ the embedding values of entity embeddings with index $k_1$ to set $\bm{e_{k_1}}(i,k_1) = 2$ and with index $k_2 \neq k_1$ to $\bm{e_{k_2}}(i,k_1) = 1$. Furthermore, we specify for any dimension $(i,k)$ with $0 \leq i \leq |\bm{R}| - 1$ and $0 \leq k \leq |\bm{E}| - 1$ the embedding of relation $r_i$ to $\bm{c_{i}^h}(i,k) = \bm{c_{i}^t}(i,k) = 0$,  $\bm{r_i^h}(i,k) = 1$, $\bm{r_i^t}(i,k) = 2$ and  $\bm{d_{i}^h}(i,k) = \bm{d_{i}^t}(i,k) = 4$. As can be shown easily the constructed complete model configuration satisfies all assumptions and makes any triple over $\bm{R}$ and $\bm{E}$ true. Note that in particular, any self-loop is also captured to be true in the constructed complete model configuration.

\textbf{Induction step.} In the induction step, we adjust the entity and relation embeddings of the complete model configuration such that a single triple $r_i(e_j, e_k)$ is made false without affecting the truth value of any other triple within the graph $G$. We denote any adjusted embedding with an asterisk $\bm{v^*}$ and the old value of the embedding with $\bm{v}$ and perform the following adjustments:
\begin{enumerate}
	\item Increase any \slopeVec/ $\bm{r_i^{t*}}(i,k) := \bm{r_i^t}(i,k) + \Delta r_i^t$ with $ \Delta r_i^t > 0$ such that:
	$$ \bm{e_j}(i,k) - \bm{r_i^t}(i,k) \bm{e_k}(i,k) - \bm{c_i^h}(i,k) - \Delta r_i^t m \leq -\bm{d_i^h}(i,k)$$
	    \label{step:induction1}
	\item Since $\bm{e_k}(i,k)$ is by assumption the largest value in dimension $(i,k)$, we can specify the following two values: 
	$$\Delta r_i^{max} := \Delta r_i^t \bm{e_k}(i,k)$$ 
	$$\Delta r_i^{ub} := \Delta r_i^t (\bm{e_k}(i,k) - m)$$ 
	with $\Delta r_i^{ub} < \Delta r_i^{max}$.
		\label{step:induction2}
  
	\item Using this definition, we increase all entity embeddings $\bm{e_{j'}}$ with $j' \neq j$ in dimension $(i,k)$ by:
	$$\bm{e_{j'}}^*(i,k) := \bm{e_{j'}}(i,k) + \Delta r_i^{max}$$
		\label{step:induction3}
	\item Furthermore, we increase all entity embeddings $\bm{e_{j'}}$ with $j' \neq j$ in dimension $(i,k)$ by: 
	$$\bm{e_{j'}}^*(i,k) := \bm{e_{j'}}(i,k) + \Delta r_i^{max}$$
		\label{step:induction4}
	\item For any relation with index $i \neq i'$, we adjust any head band in dimension $(i,k)$ by moving its center downwards and growing the band upwards. This means formally that we update the following embeddings:
		\label{step:induction5}
    \begin{align*}
    s &:= \bm{r_{i'}^{t}}(i,k) \Delta r_i^t m + \Delta r_i^{max}\\
    \bm{d_{i'}^{h*}}(i,k) &:= \bm{d_{i'}^{h}}(i,k) + \frac{s}{2} \\
    \bm{c_{i'}^{h*}}(i,k) &:= \bm{c_{i'}^{h}}(i,k) - \bm{r_{i'}}^{t}(i,k)  \Delta r_{i}^{max}  + \frac{s}{2} \\
    \end{align*}
    \item We adjust any tail band in dimension $(i,k)$ by moving its center downwards and growing the band upwards. This means formally that we update the following embeddings:
    	\label{step:induction6}
    \begin{align*}
    s &:= \bm{r_{i'}^{h}}(i,k) \Delta r_i^t m + \Delta r_i^{max} \\
    \bm{d_{i'}^{t*}}(i,k) &:= \bm{d_{i'}^{t}}(i,k) + \frac{s}{2} \\
    \bm{c_{i'}^{t*}}(i,k) &:= \bm{c_{i'}^{t}}(i,k) - \bm{r_{i'}}^{h}(i,k) \Delta r_{i}^{max}  + \frac{s}{2} \\
    \end{align*}

    \item For any relation with index $i$, we adjust any head band in dimension $(i,k)$ by moving its center downwards and growing the band upwards. This means formally that we update the following embeddings:
    	\label{step:induction7}
    \begin{align*}
    s &:= (\Delta r_i^t + \bm{r_{i}^{t}}(i,k)) \Delta r_i^t m + \Delta r_i^{max}\\
    \bm{d_{i}^{h*}}(i,k) &:= \bm{d_{i}^{h}}(i,k) + \frac{s}{2} \\
    \bm{c_{i}^{h*}}(i,k) &:= \bm{c_{i}^{h}}(i,k) - \Delta r_{i}^{t} \Delta r_{i}^{max} - \bm{r_{i}}^{t}(i,k)  \Delta r_{i}^{max}  + \frac{s}{2} \\
    \end{align*}
\end{enumerate}

In the induction step, we adjust the \slopeVecs/ (Step~\ref{step:induction1}), the entity embeddings (Step~\ref{step:induction2}-\ref{step:induction4}), and the width and center embeddings (Step~\ref{step:induction5}-\ref{step:induction7}). Intuitively, by changing the \slopeVec/ of relation \hypParas/, we sheer the \hypParas/. Furthermore, we translate any desired entity embeddings more than the undesired entity embedding of $e_j$. This allows us to draw a separating \hypPara/ between the point defined by $(e_j, e_k)$ and any other pair of entities that shall remain within relation $r_i$. Finally, we must move the sheered \hypParas/ into the correct position and stretch it to make all desired triples true.

Our next goal is to show this behavior formally. We will first show that the initially true triple $r_i(e_j, e_k)$ is false, then continue by showing that the truth value of any other triple is preserved. 

Since the induction steps perform only adjustments in dimension $(i,k)$, we only have to consider the dimension $(i,k)$ for any embedding vector in the following inequalities. Please note that to state the inequalities concisely, we have omitted the notation $(i,k)$ from any embedding vector $\bm{v}$ in the following inequalities. For instance, we will denote $\bm{r_i^t}(i,k)$ with $\bm{r_i^t}$ henceforth.

Let $s := (\Delta r_i^t + \bm{r_{i}^{t}}) \Delta r_i^t m + \Delta r_i^{max}$, then we can show that our induction step makes $r_i(e_j,e_k)$ false as follows:

\begin{align}
    \bm{e_j} - \bm{r_i^t} \bm{e_k} - \bm{c_i^h} - \Delta r_i^t m &\leq -\bm{d_i^h}
        \label{eq:make_true_triple_false_1_full_Expr}\\
    \begin{split}
    \bm{e_j} - \bm{r_i^t} \bm{e_k} - \bm{c_i^h} + \Delta r_i^{ub} - \Delta r_i^{max} - \Delta r_i^{t} \Delta r_i^{max} + \Delta r_i^{t} \Delta r_i^{max} \\ - \bm{r_i^{t}} \Delta r_i^{max} + \bm{r_i^{t}} \Delta r_i^{max} + \frac{s}{2} - \frac{s}{2} &\leq -\bm{d_i^h}
        \label{eq:make_true_triple_false_2_full_Expr}
    \end{split}\\
    \begin{split}
    \bm{e_j} + \Delta r_i^{ub} - (\bm{r_i^t} + \Delta r_i^{t}) (\bm{e_k} + \Delta r_i^{max}) - (\bm{c_i^h} - \Delta r_i^{t} \Delta r_i^{max} \\ - \bm{r_i^{t}} \Delta r_i^{max} + \frac{s}{2}) &\leq -(\bm{d_i^h} + \frac{s}{2})
        \label{eq:make_true_triple_false_3_full_Expr}
    \end{split}\\
    \bm{e_j^*} - \bm{r_i^{t*}} \bm{e_k^*} - \bm{c_i^{h*}} &\leq -\bm{d_i^{h*}}
        \label{eq:make_true_triple_false_4_full_Expr}
\end{align}

Inequality~\ref{eq:make_true_triple_false_1_full_Expr} follows directly from Induction Step~\ref{step:induction1}. Next, in Inequality~\ref{eq:make_true_triple_false_2_full_Expr} we add many terms that eliminate each other and apply $\Delta r_i^{ub} - \Delta r_i^{max} = \Delta r_i^{t} (\bm{e_k} - m) - \Delta r_i^{t} \bm{e_k} = - m \Delta r_i^t  $. Finally, in Inequality~\ref{eq:make_true_triple_false_3_full_Expr} we restructure the terms such that we can substitute the terms for the adjusted embedding vectors defined in Steps~\ref{step:induction1}-\ref{step:induction7}. Through this substitution, we obtain Inequality~\ref{eq:make_true_triple_false_4_full_Expr}, which reveals that the adjusted embeddings $\bm{e_j^*}, \bm{e_k^*}$ do not lie within the adjusted \hypPara/ of relation $r_i$. 
Therefore, we have shown that the adjustments of the complete model configuration listed in Steps~\ref{step:induction1}-\ref{step:induction7} have made the triple $r_i(e_j, e_k)$ false, as required.

Next, we need to show that the truth value of any other self-loop-free triple $r_{i'}(e_{j'}, e_{k'})$ with $j' \neq k'$ is not altered after the induction step. We start by showing that any triple $r_{i'}(e_{j'}, e_{k'})$ that is true in $\bm{m}$ remains true after the induction step. Since what follows is a highly technical proof, we give some intuition now. We make a case distinction of any possible true triple in $G$ and perform the following steps. First, we assume that the triple is true and therefore instantiate Inequalities~\ref{equation:ParallelBoundarySpace1} and \ref{equation:ParallelBoundarySpace2} with the embeddings prior to the induction step. 
Note that it is solely necessary to consider Inequality~\ref{equation:ParallelBoundarySpace1} as the proofs work vice versa for Inequality~\ref{equation:ParallelBoundarySpace2}. Thus, we solely consider Inequality~\ref{equation:ParallelBoundarySpace1} henceforth.
Next, we add terms that eliminate each other and adjustment terms $a$ such that we can substitute our inequality with the adjusted embedding values $\bm{v*}$. Finally, we show that Inequality~\ref{equation:ParallelBoundarySpace1} is satisfied for the adjusted embedding values. Note that Inequality~\ref{equation:ParallelBoundarySpace1} defines two inequalities, specifically $\bm{e_{h}} - \bm{c^{h}_{i}} - \bm{r_i^{t}} \elemwiseProd \bm{e_{t}} \elemwiseLeq \bm{d^{h}_{i}}$ and $\bm{e_{h}} - \bm{c^{h}_{i}} - \bm{r_i^{t}} \elemwiseProd \bm{e_{t}} \elemwiseGeq -\bm{d^{h}_{i}}$. Therefore, we denote with $\bm{(<)}$ the proof for the first inequality and with $\bm{(>)}$ the proof for the second inequality. Thereby, we will show that if we assume the triple  $r_{i'}(e_{j'}, e_{k'})$ to be true in the complete model configuration prior to the induction step, we can follow that $r_{i'}(e_{j'}, e_{k'})$ stays true after the adjustments of the induction step. To provide the complete formal side of our proof, we consider the following $12$ cases:

\begin{enumerate}[leftmargin=*]
    \item \textbf{Case} $i' = i, j' = j, k' = j, k' \neq k$: 

\paragraph{$\bm{(<)}$}

Let $s := (\Delta r_i^t + \bm{r_{i}^{t}}) \Delta r_i^t m + \Delta r_i^{max}$ and let $a := (\Delta r_i^{max} - \Delta r_i^{ub}) (1 - \Delta r_i^t - \bm{r_i^t} \Delta r^{ub})$. Note that $a$ is positive since $a = \Delta r_i^t m + \Delta r_i^{max}$ holds. Therefore, we can perform the following transformations:

\begin{align}
    \bm{e_j} - \bm{r_i^t} \bm{e_{j}} - \bm{c_i^h} &\leq \bm{d_i^h}
        \label{eq:self_loop_start1}\\
    \bm{e_j} - \bm{r_i^t} \bm{e_{j}} - \bm{c_i^h} - a + s - s &\leq \bm{d_i^h}
        \\
            \begin{split}
    \bm{e_j} + \Delta r_i^{ub} - (\bm{r_i^t} + \Delta r_i^{t}) (\bm{e_{j}} + \Delta r_i^{ub}) - (\bm{c_i^h} - \Delta r_i^{t} \Delta r_i^{max} \\ - \bm{r_i^{t}} \Delta r_i^{max} + \frac{s}{2}) &\leq \bm{d_i^h} + \frac{s}{2}
        \end{split}
        \\
    \bm{e_j^*} - \bm{r_i^{t*}} \bm{e_{j}^*} - \bm{c_i^{h*}} &\leq \bm{d_i^{h*}}
\end{align}

\paragraph{$\bm{(>)}$}

Let $a := (\Delta r_i^{max} - \Delta r_i^{ub}) (\Delta r_i^t + \bm{r_i^t}) + \Delta r_i^{ub} - \Delta r_i^{max}$ and let $s := (\Delta r_i^t + \bm{r_{i}^{t}}) \Delta r_i^t m + \Delta r_i^{max}$. Note that $a$ is positive since (1) $a = m \Delta r_i^t (\Delta r_i^t + \bm{r_i^t} - 1)$, (2) we initialize $\bm{r_i^t}$ in the base case to $2$ in any dimension and (3) any induction step may only increase $\bm{r_i^t}$. Therefore, we can perform the following transformations:
\begin{align}
    \bm{e_j} - \bm{r_i^t} \bm{e_{j}} - \bm{c_i^h} &\geq -\bm{d_i^h}
        \\
    \bm{e_j} - \bm{r_i^t} \bm{e_{j}} - \bm{c_i^h} + a + \frac{s}{2} - \frac{s}{2} &\geq -\bm{d_i^h}
        \\
            \begin{split}
    \bm{e_j} + \Delta r_i^{ub} - (\bm{r_i^t} + \Delta r_i^{t}) (\bm{e_{j}} + \Delta r_i^{ub}) - (\bm{c_i^h} - \Delta r_i^{t} \Delta r_i^{max} \\ - \bm{r_i^{t}} \Delta r_i^{max} + \frac{s}{2}) &\geq -(\bm{d_i^h} + \frac{s}{2})
        \end{split}
        \\
    \bm{e_j^*} - \bm{r_i^{t*}} \bm{e_{j}^*} - \bm{c_i^{h*}} &\geq -\bm{d_i^{h*}}
        \label{eq:self_loop_end1}
\end{align}

\item \textbf{Case} $i' = i, j' = j, k' \neq j, k' = k$: 

As can be seen easily this case describes the triple $r_i(e_j,e_k)$, which shall be made false in the induction step. We have shown that the induction step changes the triples truth value to false in Inequalities~\ref{eq:make_true_triple_false_1_full_Expr}-\ref{eq:make_true_triple_false_4_full_Expr} and therefore omitted the case here.

\item \textbf{Case} $i' = i, j' = j, k' \neq j, k' \neq k$: 

\paragraph{$\bm{(<)}$}

Let $s := (\Delta r_i^t + \bm{r_{i}^{t}}) \Delta r_i^t m + \Delta r_i^{max}$ and let $a :=  \Delta r_i^t \bm{e_{k'}} + s - \Delta r_i^{ub}$. Note that $a$ is positive since $a = \Delta r_i^t (\bm{e_{k'}} + m (1 + \Delta r_i^t + \bm{r_i^t}))$ holds. Therefore, we can perform the following transformations:

\begin{align}
    \bm{e_j} - \bm{r_i^t} \bm{e_{k'}} - \bm{c_i^h} &\leq \bm{d_i^h}
        \\
    \bm{e_j} - \bm{r_i^t} \bm{e_{k'}} - \bm{c_i^h} - a + \Delta r_i^t \Delta r_i^{max} - \Delta r_i^t \Delta r_i^{max} + \bm{r_i^t} \Delta r_i^{max} - \bm{r_i^t} \Delta r_i^{max} &\leq \bm{d_i^h}
        \\
            \begin{split}
    \bm{e_j} + \Delta r_i^{ub} - (\bm{r_i^t} + \Delta r_i^{t}) (\bm{e_{k'}} + \Delta r_i^{max}) - (\bm{c_i^h} - \Delta r_i^{t} \Delta r_i^{max} \\- \bm{r_i^{t}} \Delta r_i^{max} + \frac{s}{2}) &\leq \bm{d_i^h} + \frac{s}{2}
    \end{split}
        \\
    \bm{e_j^*} - \bm{r_i^{t*}} \bm{e_{k'}^*} - \bm{c_i^{h*}} &\leq \bm{d_i^{h*}}
\end{align}

\paragraph{$\bm{(>)}$}

Let $a := \Delta r_i^{ub} - \Delta r_i^t \bm{e_{k'}}$ and let $s := (\Delta r_i^t + \bm{r_{i}^{t}}) \Delta r_i^t m + \Delta r_i^{max}$. Note that $a$ is positive since $\Delta r_i^{ub} \geq \Delta r_i^t \bm{e_{k'}}$ holds. Therefore, we can perform the following transformations:
\begin{align}
    \bm{e_j} - \bm{r_i^t} \bm{e_{k'}} - \bm{c_i^h} &\geq -\bm{d_i^h}
        \\
            \begin{split}
    \bm{e_j} - \bm{r_i^t} \bm{e_{k'}} - \bm{c_i^h} + a + \Delta r_i^t \Delta r_i^{max} - \Delta r_i^t \Delta r_i^{max} + \bm{r_i^t} \Delta r_i^{max} \\- \bm{r_i^t} \Delta r_i^{max} + \frac{s}{2} - \frac{s}{2} &\geq -\bm{d_i^h}
        \end{split}
        \\
            \begin{split}
    \bm{e_j} + \Delta r_i^{ub} - (\bm{r_i^t} + \Delta r_i^{t}) (\bm{e_{k'}} + \Delta r_i^{max}) - (\bm{c_i^h} - \Delta r_i^{t} \Delta r_i^{max} \\- \bm{r_i^{t}} \Delta r_i^{max} + \frac{s}{2}) &\geq -(\bm{d_i^h} + \frac{s}{2})
        \end{split}
        \\
    \bm{e_j^*} - \bm{r_i^{t*}} \bm{e_{k'}^*} - \bm{c_i^{h*}} &\geq -\bm{d_i^{h*}}
\end{align}

\item \textbf{Case} $i' = i, j' \neq j, k' = j, k' \neq k$: 

\paragraph{$\bm{(<)}$}

Let $a := \Delta r_i^t \bm{e_{j}}$ and let $s := (\Delta r_i^t + \bm{r_{i}^{t}}) \Delta r_i^t m + \Delta r_i^{max}$. Note that $a$ is trivially positive since we initially assumed $\bm{e_{j}} > 0$ and since we assumed in Step~\ref{step:induction1} $\Delta r_i^t > 0$. Therefore, we can perform the following transformations:

\begin{align}
    \bm{e_{j'}} - \bm{r_i^t} \bm{e_{j}} - \bm{c_i^h} &\leq \bm{d_i^h}
        \\
            \begin{split}
    \bm{e_{j'}} - \bm{r_i^t} \bm{e_{j}} - \bm{c_i^h} - a + \Delta r_i^t \Delta r_i^{max} - \Delta r_i^t \Delta r_i^{max} + \bm{r_i^t} \Delta r_i^{max} \\ - \bm{r_i^t} \Delta r_i^{max} + s - s &\leq \bm{d_i^h}
        \end{split}
        \\
            \begin{split}
    \bm{e_{j'}} + \Delta r_i^{max} - (\bm{r_i^t} + \Delta r_i^{t}) (\bm{e_{j}} + \Delta r_i^{ub}) - (\bm{c_i^h} - \Delta r_i^{t} \Delta r_i^{max} \\- \bm{r_i^{t}} \Delta r_i^{max} + \frac{s}{2}) &\leq \bm{d_i^h} + \frac{s}{2}
        \end{split}
        \\
    \bm{e_{j'}^*} - \bm{r_i^{t*}} \bm{e_{j}^*} - \bm{c_i^{h*}} &\leq \bm{d_i^{h*}}
\end{align}

\paragraph{$\bm{(>)}$}

Let $a := \Delta r_i^{max} - \Delta r_i^t \bm{e_{j}} + \Delta r_i^t m (\Delta r_i^t + \bm{r_i^t})$ and let $s := (\Delta r_i^t + \bm{r_{i}^{t}}) \Delta r_i^t m + \Delta r_i^{max}$. Note that $a$ is positive since $\Delta r_i^{max} - \Delta r_i^t \bm{e_{j}} > 0$. Therefore, we can perform the following transformations:

\begin{align}
    \bm{e_{j'}} - \bm{r_i^t} \bm{e_{j}} - \bm{c_i^h} &\geq -\bm{d_i^h}
        \\
            \begin{split}
    \bm{e_{j'}} - \bm{r_i^t} \bm{e_{j}} - \bm{c_i^h} + a + \Delta r_i^t \Delta r_i^{max} - \Delta r_i^t \Delta r_i^{max} + \bm{r_i^t} \Delta r_i^{max} \\- \bm{r_i^t} \Delta r_i^{max} + \frac{s}{2} - \frac{s}{2} &\geq -\bm{d_i^h}
        \end{split}
        \\
            \begin{split}
    \bm{e_{j'}} + \Delta r_i^{max} - (\bm{r_i^t} + \Delta r_i^{t}) (\bm{e_{j}} + \Delta r_i^{ub}) - (\bm{c_i^h} - \Delta r_i^{t} \Delta r_i^{max} \\- \bm{r_i^{t}} \Delta r_i^{max} + \frac{s}{2}) &\geq -(\bm{d_i^h} + \frac{s}{2})
        \end{split}
        \\
    \bm{e_{j'}^*} - \bm{r_i^{t*}} \bm{e_{j}^*} - \bm{c_i^{h*}} &\geq -\bm{d_i^{h*}}
\end{align}

\item \textbf{Case} $i' = i, j' \neq j, k' \neq j, k' = k$: 

\paragraph{$\bm{(<)}$}

Let $s := (\Delta r_i^t + \bm{r_{i}^{t}}) \Delta r_i^t m + \Delta r_i^{max}$ and let $a :=  s + \Delta r_i^t \bm{e_{k}} - \Delta r_i^{max}$. Note that $a$ is positive since $a = \Delta r_i^t (\bm{e_{k}} + m (\Delta r_i^t + \bm{r_i^t}))$ holds. Therefore, we can perform the following transformations:

\begin{align}
    \bm{e_{j'}} - \bm{r_i^t} \bm{e_{k}} - \bm{c_i^h} &\leq \bm{d_i^h}
        \\
            \begin{split}
    \bm{e_{j'}} - \bm{r_i^t} \bm{e_{k}} - \bm{c_i^h} - a + \Delta r_i^t \Delta r_i^{max} - \Delta r_i^t \Delta r_i^{max} + \bm{r_i^t} \Delta r_i^{max} \\- \bm{r_i^t} \Delta r_i^{max} &\leq \bm{d_i^h}
        \end{split}
        \\
            \begin{split}
    \bm{e_{j'}} + \Delta r_i^{max} - (\bm{r_i^t} + \Delta r_i^{t}) (\bm{e_{k}} + \Delta r_i^{max}) - (\bm{c_i^h} - \Delta r_i^{t} \Delta r_i^{max} \\- \bm{r_i^{t}} \Delta r_i^{max} + \frac{s}{2}) &\leq \bm{d_i^h} + \frac{s}{2}
        \end{split}
        \\
    \bm{e_{j'}^*} - \bm{r_i^{t*}} \bm{e_{k}^*} - \bm{c_i^{h*}} &\leq \bm{d_i^{h*}}
\end{align}

\paragraph{$\bm{(>)}$}

Let $s := (\Delta r_i^t + \bm{r_{i}^{t}}) \Delta r_i^t m + \Delta r_i^{max}$. Using this definition, we can perform the following transformations:
\begin{align}
    \bm{e_{j'}} - \bm{r_i^t} \bm{e_{k}} - \bm{c_i^h} &\geq -\bm{d_i^h}
        \\
            \begin{split}
    \bm{e_{j'}} - \bm{r_i^t} \bm{e_{k}} - \bm{c_i^h} + \Delta r_i^{max} - \Delta r_i^{max} + \Delta r_i^t \Delta r_i^{max} - \Delta r_i^t \Delta r_i^{max} \\+ \bm{r_i^t} \Delta r_i^{max} - \bm{r_i^t} \Delta r_i^{max} - \frac{s}{2} &\geq -\bm{d_i^h} - \frac{s}{2}
        \end{split}
        \\
            \begin{split}
    \bm{e_{j'}} + \Delta r_i^{max} - (\bm{r_i^t} + \Delta r_i^{t}) (\bm{e_{k}} + \Delta r_i^{max}) - (\bm{c_i^h} - \Delta r_i^{t} \Delta r_i^{max} \\- \bm{r_i^{t}} \Delta r_i^{max} + \frac{s}{2}) &\geq -\bm{d_i^h} - \frac{s}{2}
        \end{split}
        \\
    \bm{e_{j'}^*} - \bm{r_i^{t*}} \bm{e_{k}^*} - \bm{c_i^{h*}} &\geq -\bm{d_i^{h*}}
\end{align}

\item \textbf{Case} $i' = i, j' \neq j, k' \neq j, k' \neq k$: 

\paragraph{$\bm{(<)}$}

Let $s := (\Delta r_i^t + \bm{r_{i}^{t}}) \Delta r_i^t m + \Delta r_i^{max}$ and let $a :=  s - \Delta r_i^{max} + \Delta r_i^t \bm{e_{k'}}$. Note that $a$ is positive since $a = \Delta r_i^t (\bm{e_{k'}} + m (\Delta r_i^t + \bm{r_i^t}))$ holds. Therefore, we can perform the following transformations:

\begin{align}
    \bm{e_{j'}} - \bm{r_i^t} \bm{e_{k'}} - \bm{c_i^h} &\leq \bm{d_i^h}
        \\
            \begin{split}
    \bm{e_{j'}} - \bm{r_i^t} \bm{e_{k'}} - \bm{c_i^h} - a + \Delta r_i^t \Delta r_i^{max} - \Delta r_i^t \Delta r_i^{max} + \bm{r_i^t} \Delta r_i^{max} \\- \bm{r_i^t} \Delta r_i^{max} &\leq \bm{d_i^h}
        \end{split}
        \\
            \begin{split}
    \bm{e_{j'}} + \Delta r_i^{max} - (\bm{r_i^t} + \Delta r_i^{t}) (\bm{e_{k'}} + \Delta r_i^{max}) - (\bm{c_i^h} - \Delta r_i^{t} \Delta r_i^{max} \\- \bm{r_i^{t}} \Delta r_i^{max} + \frac{s}{2}) &\leq \bm{d_i^h} + \frac{s}{2}
        \end{split}
        \\
    \bm{e_{j'}^*} - \bm{r_i^{t*}} \bm{e_{k'}^*} - \bm{c_i^{h*}} &\leq \bm{d_i^{h*}}
\end{align}

\paragraph{$\bm{(>)}$}

Let $a := \Delta r_i^{max} - \Delta r_i^t \bm{e_{k'}}$ and let $s := (\Delta r_i^t + \bm{r_{i}^{t}}) \Delta r_i^t m + \Delta r_i^{max}$. Therefore, we can perform the following transformations:
\begin{align}
    \bm{e_{j'}} - \bm{r_i^t} \bm{e_{k'}} - \bm{c_i^h} &\geq -\bm{d_i^h}
        \\
            \begin{split}
    \bm{e_{j'}} - \bm{r_i^t} \bm{e_{k'}} - \bm{c_i^h} + a + \Delta r_i^t \Delta r_i^{max} - \Delta r_i^t \Delta r_i^{max} + \bm{r_i^t} \Delta r_i^{max} \\- \bm{r_i^t} \Delta r_i^{max} + \frac{s}{2} - \frac{s}{2} &\geq -\bm{d_i^h}
        \end{split}
        \\
            \begin{split}
    \bm{e_{j'}} + \Delta r_i^{max} - (\bm{r_i^t} + \Delta r_i^{t}) (\bm{e_{k'}} + \Delta r_i^{max}) - (\bm{c_i^h} - \Delta r_i^{t} \Delta r_i^{max} \\- \bm{r_i^{t}} \Delta r_i^{max} + \frac{s}{2}) &\geq -(\bm{d_i^h} + \frac{s}{2})
        \end{split}
        \\
    \bm{e_{j'}^*} - \bm{r_i^{t*}} \bm{e_{k'}^*} - \bm{c_i^{h*}} &\geq -\bm{d_i^{h*}}
\end{align}

\item \textbf{Case} $i' \neq i, j' = j, k' \neq j, k' = k$: 

\paragraph{$\bm{(<)}$}

Let $s := \bm{r_{i'}^{t}} \Delta r_{i}^t m + \Delta r_{i}^{max}$ and let $a :=  s - \Delta r_{i}^{ub} $. Note that $a$ is positive since $a = \Delta r_{i}^t m ( 1 + \bm{r_{i'}^t}))$ holds. Therefore, we can perform the following transformations:

\begin{align}
    \bm{e_j} - \bm{r_{i'}^t} \bm{e_{k}} - \bm{c_{i'}^h} &\leq \bm{d_{i'}^h}
        \\
    \bm{e_j} - \bm{r_{i'}^t} \bm{e_{k}} - \bm{c_{i'}^h} - a + \bm{r_{i'}^t} \Delta r_{i}^{max} - \bm{r_{i'}^t} \Delta r_{i}^{max} &\leq \bm{d_{i'}^h}
        \\
    \bm{e_j} + \Delta r_{i}^{ub} - \bm{r_{i'}^t} (\bm{e_{k}} + \Delta r_{i}^{max}) - (\bm{c_{i'}^h} - \bm{r_{i'}^{t}} \Delta r_{i}^{max} + \frac{s}{2}) &\leq \bm{d_{i'}^h} + \frac{s}{2}
        \\
    \bm{e_j^*} - \bm{r_{i'}^{t*}} \bm{e_{k}^*} - \bm{c_{i'}^{h*}} &\leq \bm{d_{i'}^{h*}}
\end{align}

\paragraph{$\bm{(>)}$}

Let $a := \Delta r_i^{ub}$ and let $s := \bm{r_{i'}^{t}} \Delta r_{i}^t m + \Delta r_{i}^{max}$. Note that $a$ is trivially positive since $\Delta r_i^{ub}$ is positive. Therefore, we can perform the following transformations:

\begin{align}
    \bm{e_j} - \bm{r_{i'}^t} \bm{e_{k}} - \bm{c_{i'}^h} &\geq -\bm{d_{i'}^h}
        \\
    \bm{e_j} - \bm{r_{i'}^t} \bm{e_{k}} - \bm{c_{i'}^h} + a + \bm{r_{i'}^t} \Delta r_i^{max} - \bm{r_{i'}^t} \Delta r_i^{max} + \frac{s}{2} - \frac{s}{2} &\geq -\bm{d_{i'}^h}
        \\
    \bm{e_j} + \Delta r_i^{ub} - \bm{r_{i'}^t} (\bm{e_{k}} + \Delta r_i^{max}) - (\bm{c_{i'}^h} - \bm{r_{i'}^{t}} \Delta r_i^{max} + \frac{s}{2}) &\geq -(\bm{d_{i'}^h} + \frac{s}{2})
        \\
    \bm{e_j^*} - \bm{r_{i'}^{t*}} \bm{e_{k}^*} - \bm{c_{i'}^{h*}} &\geq -\bm{d_{i'}^{h*}}
\end{align}

\item \textbf{Case} $i' \neq i, j' = j, k' \neq j, k' \neq k$:

As can be seen easily this case generates the same inequalities as the previous case, except that $k' = k$. Therefore, no relevant difference has to be considered, which is why we omit this case.

\item \textbf{Case} ($i' \neq i, j' \neq j, k' = j, k' \neq k$): 

\paragraph{$\bm{(<)}$}

Let $s := \bm{r_i^t} \Delta r_i^t m + \Delta r_i^{max}$. Using this definition we can make the following transformations:

\begin{align}
    \bm{e_{j'}} - \bm{r_{i'}^t} \bm{e_{j}} - \bm{c_{i'}^h} &\leq \bm{d_{i'}^h}
        \\
    \bm{e_{j'}} - \bm{r_{i'}^t} \bm{e_{j}} - \bm{c_{i'}^h} + s - s &\leq \bm{d_{i'}^h}
        \\
    \bm{e_{j'}} + \Delta r_i^{max} - \bm{r_{i'}^t} (\bm{e_{j}} + \Delta r_i^{ub}) - (\bm{c_{i'}^h} - \bm{r_{i'}^{t}} \Delta r_i^{max} + \frac{s}{2}) &\leq \bm{d_{i'}^h} + \frac{s}{2}
        \\
    \bm{e_{j'}^*} - \bm{r_{i'}^{t*}} \bm{e_{j}^*} - \bm{c_{i'}^{h*}} &\leq \bm{d_{i'}^{h*}}
\end{align}

\paragraph{$\bm{(>)}$}

Let $a := \Delta r_i^{max} + \bm{r_i^t} ( \Delta r_i^{max} - \Delta r_i^{ub})$ and let $s :=  \bm{r_i^t} \Delta r_i^t m + \Delta r_i^{max}$. Note that $a$ is positive since $\Delta r_i^{max} > \Delta r_i^{ub}$.
Therefore, we can perform the following transformations:

\begin{align}
    \bm{e_{j'}} - \bm{r_{i'}^t} \bm{e_{j}} - \bm{c_{i'}^h} &\geq -\bm{d_{i'}^h}
        \\
    \bm{e_{j'}} - \bm{r_{i'}^t} \bm{e_{j}} - \bm{c_{i'}^h} + a + \frac{s}{2} - \frac{s}{2} &\geq -\bm{d_{i'}^h}
        \\
    \bm{e_{j'}} + \Delta r_i^{max} - \bm{r_{i'}^t} (\bm{e_{j}} + \Delta r_i^{ub}) - (\bm{c_{i'}^h} - \bm{r_{i'}^{t}} \Delta r_i^{max} + \frac{s}{2}) &\geq -(\bm{d_{i'}^h} + \frac{s}{2})
        \\
    \bm{e_{j'}^*} - \bm{r_{i'}^{t*}} \bm{e_{j}^*} - \bm{c_{i'}^{h*}} &\geq -\bm{d_{i'}^{h*}}
\end{align}

\item \textbf{Case} $i' \neq i, j' \neq j, k' \neq j, k' = k$: 

\paragraph{$\bm{(<)}$}

Let $s := \bm{r_{{i'}}^{t}} \Delta r_i^t m + \Delta r_i^{max}$ and let $a :=  s - \Delta r_i^{max}$. Note that $a$ is positive since $a = \bm{r_i^t} \Delta r_i^t m$ holds. Therefore, we can perform the following transformations:

\begin{align}
    \bm{e_{j'}} - \bm{r_{i'}^t} \bm{e_{k}} - \bm{c_{i'}^h} &\leq \bm{d_{i'}^h}
        \\
    \bm{e_{j'}} - \bm{r_{i'}^t} \bm{e_{k}} - \bm{c_{i'}^h} - a - \Delta r_i^{max} + \Delta r_i^{max} - \bm{r_{i'}^t} \Delta r_i^{max} + \bm{r_{i'}^t} \Delta r_i^{max} &\leq \bm{d_{i'}^h}
        \\
    \bm{e_{j'}} + \Delta r_i^{max} - \bm{r_{i'}^t} (\bm{e_{k}} + \Delta r_i^{max}) - (\bm{c_{i'}^h} - \bm{r_{i'}^{t}} \Delta r_i^{max} + \frac{s}{2}) &\leq \bm{d_{i'}^h} + \frac{s}{2}
        \\
    \bm{e_{j'}^*} - \bm{r_{i'}^{t*}} \bm{e_{k}^*} - \bm{c_{i'}^{h*}} &\leq \bm{d_{i'}^{h*}}
\end{align}

\paragraph{$\bm{(>)}$}

Let $s := \bm{r_{{i'}}^{t}} \Delta r_i^t m + \Delta r_i^{max}$ and $a := \Delta r_i^{max}$. Note that $a$ is trivially positive since $\Delta r_i^{max}$ is positive. Therefore, we can perform the following transformations:

\begin{align}
    \bm{e_{j'}} - \bm{r_{i'}^t} \bm{e_{k}} - \bm{c_{i'}^h} &\geq -\bm{d_{i'}^h}
        \\
    \bm{e_{j'}} - \bm{r_{i'}^t} \bm{e_{k}} - \bm{c_{i'}^h} + a + \bm{r_{i'}^t} \Delta r_i^{max} - \bm{r_{i'}^t} \Delta r_i^{max} + \frac{s}{2} - \frac{s}{2} &\geq -\bm{d_{i'}^h}
        \\
    \bm{e_{j'}} + \Delta r_i^{max} - \bm{r_{i'}^t} (\bm{e_{k}} + \Delta r_i^{max}) - (\bm{c_{i'}^h} - \bm{r_{i'}^{t}} \Delta r_i^{max} + \frac{s}{2}) &\geq -(\bm{d_{i'}^h} + \frac{s}{2})
        \\
    \bm{e_{j'}^*} - \bm{r_{i'}^{t*}} \bm{e_{k}^*} - \bm{c_{i'}^{h*}} &\geq -\bm{d_{i'}^{h*}}
\end{align}

\item \textbf{Case} $i' \neq i, j' \neq j, k' \neq j, k' \neq k$: 

As can be seen easily this case generates the same inequalities as the previous case, except that $k' = k$. Therefore, no relevant difference has to be considered, which is why we omit this case.

\item \textbf{Case} $i' \neq i, j' = j, k' = j, k' \neq k$: 

\paragraph{$\bm{(<)}$}

Let $s := \bm{r_{{i'}}^{t}} \Delta r_i^t m + \Delta r_i^{max}$ and let $a := \Delta r_i^{max} - \Delta r_i^{ub}$. Note that $a$ is positive since $a =  \Delta r_i^t m$. Therefore, we can perform the following transformations:

\begin{align}
    \bm{e_{j}} - \bm{r_{i'}^t} \bm{e_{j}} - \bm{c_{i'}^h} &\leq \bm{d_{i'}^h}
        \label{eq:self_loop_start2}\\
    \bm{e_{j}} - \bm{r_{i'}^t} \bm{e_{j}} - \bm{c_{i'}^h} - a - s + s &\leq \bm{d_{i'}^h}
        \\
    \bm{e_{j}} + \Delta r_i^{ub} - \bm{r_{i'}^t} (\bm{e_{j}} + \Delta r_i^{ub}) - (\bm{c_{i'}^h} - \bm{r_{i'}^{t}} \Delta r_i^{max} + \frac{s}{2}) &\leq \bm{d_{i'}^h} + \frac{s}{2}
        \\
    \bm{e_{j}^*} - \bm{r_{i'}^{t*}} \bm{e_{j}^*} - \bm{c_{i'}^{h*}} &\leq \bm{d_{i'}^{h*}}
\end{align}

\paragraph{$\bm{(>)}$}

Let $s := \bm{r_{i'}^{t}} \Delta r_i^t m + \Delta r_i^{max}$ and $a := \Delta r_i^{ub} + \Delta r_i^{t} m \bm{r_{i'}^t}$. Note that $a$ is trivially positive since we assumed any parameter to be positive. Therefore, we can perform the following transformations:

\begin{align}
    \bm{e_{j}} - \bm{r_{i'}^t} \bm{e_{j}} - \bm{c_{i'}^h} &\geq -\bm{d_{i'}^h}
        \\
    \bm{e_{j}} - \bm{r_{i'}^t} \bm{e_{j}} - \bm{c_{i'}^h} + a + \frac{s}{2} - \frac{s}{2} &\geq -\bm{d_{i'}^h}
        \\
    \bm{e_{j}} + \Delta r_i^{ub} - \bm{r_{i'}^t} (\bm{e_{j}} + \Delta r_i^{ub}) - (\bm{c_{i'}^h} - \bm{r_{i'}^{t}} \Delta r_i^{max} + \frac{s}{2}) &\geq -(\bm{d_{i'}^h} + \frac{s}{2})
        \\
    \bm{e_{j}^*} - \bm{r_{i'}^{t*}} \bm{e_{j}^*} - \bm{c_{i'}^{h*}} &\geq -\bm{d_{i'}^{h*}}
        \label{eq:self_loop_end2}
\end{align}

\end{enumerate}

We have shown in any of the twelve discussed cases that if a triple $r_{i'}(e_{j'}, e_{k'})$ with $i' \neq i$ or $j' \neq j$ or $k' \neq k$ was true in the model configuration prior to the induction step, then it is still true in the adjusted model configuration after the induction step. Hence, to show that ExpressivE can capture any self-loop-free graph, it remains to show that any triple that was false remains false after the induction step.

To verify that an initially false tripe $r_{i'}(e_{j'}, e_{k'})$ remains false we solely need to show that the embeddings of $r_{i'}$, $e_{j'}$ and $e_{k'}$ do not satisfy at least one of the Inequalities~\ref{equation:ParallelBoundarySpace1} or \ref{equation:ParallelBoundarySpace2}. We have to consider the following cases:

\begin{enumerate}[leftmargin=*]
    \item \textbf{Case $k' \neq k$:} Any changes to the dimension $\bm{v}(i,k)$ do not affect the dimension $\bm{v}(i',k')$. Therefore, if $r_{i'}(e_{j'}, e_{k'})$ for $k' \neq k$ was false before the induction step, it remains false after the induction step, as we solely alter dimension $(i,k)$.
    \item \textbf{Case $k' = k, i' = i$:} In this case $j' \neq j$ needs to hold as the triple $r_i(e_j, e_k)$ was initially assumed to be true. We can easily show that in this case any triple remains false as follows:
    
    Let $s := (\Delta r_i^t + \bm{r_{i}^{t}}) \Delta r_i^t m + \Delta r_i^{max}$, then we can show that our induction step makes $r_i(e_{j'},e_k)$ false as follows:
\begin{align}
    \bm{e_{j'}} - \bm{r_i^t} \bm{e_k} - \bm{c_i^h} &\leq -\bm{d_i^h}
        \label{eq:false_triple_1}\\
            \begin{split}
    \bm{e_{j'}} - \bm{r_i^t} \bm{e_k} - \bm{c_i^h} + \Delta r_i^{max} ( 1 - 1 + \Delta r_i^{t} - \Delta r_i^{t} + \bm{r_i^{t}} \\- \bm{r_i^{t}}) - \frac{s}{2}  &\leq -\bm{d_i^h} - \frac{s}{2}
        \label{eq:false_triple_2}
            \end{split}\\
            \begin{split}
    \bm{e_{j'}} + \Delta r_i^{max} - (\bm{r_i^t} + \Delta r_i^{t}) (\bm{e_k} + \Delta r_i^{max}) - (\bm{c_i^h} - \Delta r_i^{t} \Delta r_i^{max} \\- \bm{r_i^{t}} \Delta r_i^{max} + \frac{s}{2}) &\leq -\bm{d_i^h} - \frac{s}{2}
        \label{eq:false_triple_3}
            \end{split}\\
    \bm{e_{j'}^*} - \bm{r_i^{t*}} \bm{e_k^*} - \bm{c_i^{h*}} &\leq -\bm{d_i^{h*}}
        \label{eq:false_triple_4}
\end{align}

Since we started with the complete graph, any triple that is false was made false by an induction step. We have seen that if we apply our algorithm to make $r_i(e_j, e_k)$ false, then Inequality~\ref{eq:make_true_triple_false_4_full_Expr} holds. 
Since we assume that $r_i(e_{j'}, e_k)$ was false prior to the current induction step and Inequality~\ref{eq:make_true_triple_false_4_full_Expr} describes how induction steps make triples false, we can follow that Inequality~\ref{eq:false_triple_1} needs to hold prior to this induction step. Next, we add in Inequality~\ref{eq:false_triple_2} terms that eliminate each other. Finally, in Inequality~\ref{eq:false_triple_3} we restructure the terms such that we can substitute them for the adjusted embedding vectors defined in~\ref{step:induction1}-\ref{step:induction7}. Through this substitution, we obtain Inequality~\ref{eq:false_triple_4}, which reveals that the adjusted embeddings of $\bm{e_{j'}^*}$ and $\bm{e_k^*}$ do not lie within the adjusted \hypPara/ of relation $r_i$.
Therefore, we have shown that the adjustments of the complete model configuration stated in Steps~\ref{step:induction1}-\ref{step:induction7} preserve the false triples of this case to remain false.

\item \textbf{Case} $i' \neq i$: Any changes to the dimension $\bm{v}(i,k)$ do not affect the dimension $\bm{v}(i',k')$. Therefore, if $r_{i'}(e_{j'}, e_{k'})$ for $i' \neq i$ was false before the induction step, it remains false after the induction step, as we solely alter dimension $(i',k)$.
    
\end{enumerate}

Hence, we have shown that we can make any self-loop-free triple false in the induction step while preserving the truth value of the remaining triples in $G$. To show fully expressiveness, it remains to show that we can capture any graph $G$ even with self-loops. We started our proof in the base case with a complete graph, which means that any self-loop was initially true. Furthermore, we have shown in Inequalities~\ref{eq:self_loop_start1}-\ref{eq:self_loop_end1} and \ref{eq:self_loop_start2}-\ref{eq:self_loop_end2} that any true self-loop remains true after the induction step and that therefore any constructed complete model configuration captures any self-loop to be true. Since there are only $|R|*|E|$ possibilities to generate triples of the form $r_i(e_j, e_j)$ for any $r_i \in \bm{R}$ and $e_j \in \bm{E}$ and since we require just a single dimension where the embedding of the entity pair $e_j, e_j$ is outside of $r_i$'s \hypPara/ to make the triple $r_i(e_j,e_j)$ false, we can simply add a dimension per self-loop to our embeddings, whose sole purpose is to exclude one undesired self-loop $r_i(e_j, e_j)$. Therefore, ExpressivE can represent any possible graph $G$ in a complete model configuration of $O(|R|*|E|)$ dimensions, and our model is thus fully expressive in $O(|R|*|E|)$ dimensions.

\section{Proof of \CompDefRegionCaps/}
\label{app:ProofCompDefRegion}

In this section, we prove Theorem~\ref{theorem:CompDefShort}, which will serve as further machinery for successive appendices. 
Since we are going to prove Theorem~\ref{theorem:CompDefShort} by proving a more specific Theorem, we need to extend the notion of when a \definingComp/ pattern \emph{holds} in the \virtualtripleSpace/ first such that we can employ it later in our proof. Definition~\ref{def:TruthOfCompDefPatterns} describes when a \definingComp/ pattern holds in dependence of the spatial regions of its relations in the \virtualtripleSpace/. The definition employs the notion of logical implication, i.e., if the body of a pattern is satisfied, then its head can be inferred. 

\begin{definition}[Truth of \definingCompCaps/ in the \virtualtripleSpaceCaps/]
Let \compDef/ be a \definingComp/ pattern over some relations $r_1, r_2, r_d \in \bm{R}$ and over arbitrary entities $X,Y,Z \in \bm{E}$. Furthermore, let $\bm{f_h}$ be a relation assignment function defined over $r_1$ and $r_2$. Moreover, let $\bm{s_d}$ be the spatial region of $r_d$ in the \virtualtripleSpace/.
The \definingComp/ pattern \emph{holds} for the regions of the relations in the \virtualtripleSpace/, i.e., for $\bm{f_h}(r_1)$, $\bm{f_h}(r_2)$ and $\bm{s_d}$, if:
($\Rightarrow$) for any entity assignment function $\bm{f_e}$ and virtual assignment function $\bm{f_v}$ over $\bm{f_e}$ if $\bm{f_v}(X,Y) \in \bm{f_h}(r_1)$ and $\bm{f_v}(Y,Z) \in \bm{f_h}(r_2)$, then $\bm{f_v}(X,Z)$ must be within the region $\bm{s_d}$ of $r_d$.
($\Leftarrow$) For any entity assignment function $\bm{f_e}$ and virtual assignment function $\bm{f_v}$ over $\bm{f_e}$ if $\bm{f_v}(X,Z)$ is within the region $\bm{s_d}$ of $r_d$, then there exists an entity assignment $\bm{f_e}(Y)$ such that $\bm{f_v}(X,Y) \in \bm{f_h}(r_1)$ and $\bm{f_v}(Y,Z) \in \bm{f_h}(r_2)$.
\label{def:TruthOfCompDefPatterns}
\end{definition}

Recall that Theorem~\ref{theorem:CompDefShort} (reformulated in the definitions of Appendix~\ref{app:FormalDefinitions} and Definition~\ref{def:TruthOfCompDefPatterns}) states that if $\phi := $ \compDef/ is a \definingComp/ pattern defined over relations $r_1, r_2, r_d \in \bm{R}$ and if $\bm{f_h}$ is a relation assignment function that is defined over $r_1$ and $r_2$, then there exists a convex region $\bm{s_d}$ for $r_d$ in the \virtualtripleSpace/ $\mathbb{R}^{2d}$ such that $\phi$ holds for $\bm{f_h}(r_1)$, $\bm{f_h}(r_2)$ and $\bm{s_d}$. In particular, we are not only interested in proving the existence of the \compDefRegion/ $\bm{s_d}$, but we will even identify a system of inequalities that describes the shape of $\bm{s_d}$. Specifically, Theorem~\ref{theorem:CompDef} concretely characterizes the shape of $\bm{s_d}$, which we prove subsequently.

\begin{theorem}
Let \compDef/ be a \definingComp/ pattern over some relations $r_1, r_2, r_d \in \bm{R}$ and over arbitrary entities $X,Y,Z \in \bm{E}$.
Furthermore, let $\bm{f_h}$ be a relation assignment function that is defined over $r_1$ and $r_2$ such that for any $i \in \{1, 2\}$, $\bm{f_h}(r_i) = (\bm{c^{ht}_i}, \bm{d^{ht}_i}, \bm{r^{th}_i})$ with $\bm{c^{ht}_i} = (\bm{c_i^{h}} || \bm{c_i^{t}})$, $\bm{d^{ht}_i} = (\bm{d_i^{h}} || \bm{d_i^{t}})$, and $\bm{r^{th}_i} = (\bm{r_i^{t}} || \bm{r_i^{h}})$. Moreover, let the \slopeVecs/ be positive, i.e., $\bm{r^{th}_i} \elemwiseGeq \bm{0}$ for $i \in \{1,2\}$.
If Inequalities~\ref{equation:CompDefConstruct1}-\ref{equation:CompDefConstruct6} define the region $\bm{s_d}$ of $r_d$ in the \virtualtripleSpace/, then \compDef/ holds for $\bm{f_h}(r_1)$, $\bm{f_h}(r_2)$ and $\bm{s_d}$ in the \virtualtripleSpace/. 
\begin{align}
    (\bm{x} - \bm{z} \bm{r_1^{t}}  \bm{r_2^{t}} - \bm{c_2^{h}} \bm{r_1^{t}} - \bm{c_1^{h}})^{|.|} &\elemwiseLeq \bm{d_2^{h}} \bm{r_1^{t}} + \bm{d_1^{h}}
    \label{equation:CompDefConstruct1}\\
    (\bm{z} \bm{r_2^{t}} + \bm{c_2^{h}} - \bm{x} \bm{r_1^{h}} - \bm{c_1^{t}})^{|.|} &\elemwiseLeq \bm{d_1^{t}} + \bm{d_2^{h}}
    \label{equation:CompDefConstruct2}\\
    (\bm{z} - \bm{x} \bm{r_1^{h}}  \bm{r_2^{h}} - \bm{c_1^{t}} \bm{r_2^{h}} - \bm{c_2^{t}})^{|.|} &\elemwiseLeq \bm{d_1^{t}} \bm{r_2^{h}} + \bm{d_2^{t}}
    \label{equation:CompDefConstruct3}\\
    (\bm{z} + (\bm{c_1^{h}} - \bm{x})\bm{r_2^{h}} \elemwiseDiv \bm{r_1^{t}} - \bm{c_2^{t}})^{|.|} 
    &\elemwiseLeq \bm{d_1^{h}}\bm{r_2^{h}} \elemwiseDiv \bm{r_1^{t}} + \bm{d_2^{t}}
    \label{equation:CompDefConstruct4}\\
    (\bm{x} (\bm{1} - \bm{r_1^{h}}  \bm{r_1^{t}}) - \bm{c_1^{t}} \bm{r_1^{t}} - \bm{c_1^{h}})^{|.|} &\elemwiseLeq \bm{d_1^{t}} \bm{r_1^{t}} + \bm{d_1^{h}}
    \label{equation:CompDefConstruct5}\\
    (\bm{z} (\bm{1} - \bm{r_2^{h}}  \bm{r_2^{t}}) - \bm{c_2^{h}} \bm{r_2^{h}} - \bm{c_2^{t}})^{|.|} &\elemwiseLeq \bm{d_2^{h}} \bm{r_2^{h}} + \bm{d_2^{t}}
    \label{equation:CompDefConstruct6}
\end{align}
\label{theorem:CompDef}
\end{theorem}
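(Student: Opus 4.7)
The plan is to verify both directions of Definition~\ref{def:TruthOfCompDefPatterns} for the region $\bm{s_d}$ cut out by Inequalities~\ref{equation:CompDefConstruct1}--\ref{equation:CompDefConstruct6}, and then read off convexity at the end. Since Inequalities~\ref{equation:ParallelBoundarySpace1}--\ref{equation:ParallelBoundarySpace2} act coordinate-wise and the six claimed inequalities do too, the entire argument reduces to a one-dimensional argument repeated per coordinate $j$: I fix $j$ and treat the entries $x, y, z, c_i^p, d_i^p, r_i^p$ as scalars, with widths and slopes non-negative (if a slope entry is zero, the corresponding constraint on $y$ disappears and both the original system and the derived inequalities degenerate consistently).

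For the forward direction, suppose $Y$ exists with $\bm{f_v}(X,Y)\in \bm{f_h}(r_1)$ and $\bm{f_v}(Y,Z)\in \bm{f_h}(r_2)$. This yields four absolute-value bounds on $\bm{y}$, two from each relation. I would eliminate $\bm{y}$ by forming each of the $\binom{4}{2}=6$ pairs, rescaling one member of the pair by the appropriate non-negative slope vector so the $\bm{y}$-terms cancel after addition, and then applying the element-wise triangle inequality $(\bm{a}+\bm{b})^{|.|}\elemwiseLeq \bm{a}^{|.|}+\bm{b}^{|.|}$. For instance, scaling $(\bm{y}-\bm{c_2^h}-\bm{r_2^t}\elemwiseProd\bm{z})^{|.|}\elemwiseLeq \bm{d_2^h}$ by $\bm{r_1^t}$ and adding it to $(\bm{x}-\bm{c_1^h}-\bm{r_1^t}\elemwiseProd\bm{y})^{|.|}\elemwiseLeq \bm{d_1^h}$ cancels $\bm{y}$ and delivers precisely Inequality~\ref{equation:CompDefConstruct1}; the five other pairings (two of which additionally require dividing by $\bm{r_1^t}$ or $\bm{r_2^h}$ to align the $\bm{y}$-coefficients) produce Inequalities~\ref{equation:CompDefConstruct2}--\ref{equation:CompDefConstruct6} in the same fashion. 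Non-negativity of the slopes is used exactly to preserve the direction of the element-wise inequality under multiplication.

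For the reverse direction, assume $(\bm{x},\bm{z})$ satisfies all six inequalities; the task is to exhibit some $\bm{y}$ placing $(\bm{x},\bm{y})\in\bm{f_h}(r_1)$ and $(\bm{y},\bm{z})\in\bm{f_h}(r_2)$ simultaneously. Working in a fixed coordinate $j$, each of the four original relation constraints is equivalent to $y_j$ lying in a closed interval, and a direct calculation shows that each of Inequalities~\ref{equation:CompDefConstruct1}--\ref{equation:CompDefConstruct6} is precisely the non-emptiness condition for the intersection of one specific pair of those four intervals (concretely, unfolding the absolute value of the derived inequality gives the two endpoint comparisons defining overlap of the two intervals after multiplying through by the relevant slope). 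Hence the six inequalities say exactly that the four intervals pairwise intersect, and the one-dimensional Helly property for intervals — pairwise intersection implies common intersection — furnishes a valid $y_j$; assembling these across $j$ yields the required $\bm{y}$. Convexity of $\bm{s_d}$ follows immediately, since each bound $(\cdot)^{|.|}\elemwiseLeq(\cdot)$ unfolds into a pair of affine inequalities in $(\bm{x},\bm{z})$, making $\bm{s_d}$ an intersection of half-spaces and therefore a polytope.

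The main obstacle is verifying the tight correspondence between each of the six stated inequalities and the pairwise non-emptiness condition for one particular pair of $y_j$-intervals; the derivation for Inequality~\ref{equation:CompDefConstruct4} and the self-pair Inequalities~\ref{equation:CompDefConstruct5}--\ref{equation:CompDefConstruct6} involves dividing by a slope and so requires brief handling of zero-slope entries, but in that case the offending constraint becomes vacuous in both the original and the derived systems, so the Helly step still goes through cleanly.
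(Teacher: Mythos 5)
Your proposal is correct, and its forward direction is essentially the paper's own argument in different clothing: the paper eliminates $\bm{y}$ by substituting the eight linear inequalities instantiated from Inequalities~\ref{equation:CompDef1}--\ref{equation:CompDef8} into one another along the six cross-pairings of the four $\bm{y}$-constraints (the four self-pairings collapsing to the tautologies~\ref{equation:CompDefProofEnd7}--\ref{equation:CompDefProofEnd10}), which is exactly your scale-by-a-nonnegative-slope-and-apply-the-triangle-inequality step written out as explicit linear combinations. Where you genuinely depart from the paper is the ($\Leftarrow$) direction. The paper dispatches it with the single sentence that reading the proof bottom-up proves the other direction, which tacitly assumes that the elimination of $\bm{y}$ is lossless; a priori each derived inequality is only a necessary consequence of the original system, so satisfying all six does not by itself produce a witness $\bm{y}$. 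Your identification of the four original constraints as closed intervals for $y_j$ in each coordinate, of the six derived inequalities as exactly the $\binom{4}{2}$ pairwise-overlap conditions for those intervals, and your appeal to the one-dimensional Helly property (equivalently, the completeness of Fourier--Motzkin elimination) to extract a common point is precisely the justification the paper omits, and it is the half of the theorem that actually needs an argument. You are also more careful than the paper about the division by $\bm{r_1^t}$ in Inequality~\ref{equation:CompDefConstruct4} when a slope entry vanishes, a case the paper's statement ($\bm{r^{th}_i} \elemwiseGeq \bm{0}$) admits but its proof never addresses; and your closing convexity observation is exactly what the paper invokes, after the proof, to pass from Theorem~\ref{theorem:CompDef} to Theorem~\ref{theorem:CompDefShort}.
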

\begin{proof}
Let \compDef/ be a \definingComp/ pattern over some relations $r_1, r_2, r_d \in \bm{R}$ and over arbitrary entities $X,Y,Z \in \bm{E}$.
Furthermore, let $\bm{f_h}$ be a relation assignment function that is defined over $r_1$ and $r_2$ such that for any $i \in \{1, 2\}$, $\bm{f_h}(r_i) = (\bm{c^{ht}_i}, \bm{d^{ht}_i}, \bm{r^{th}_i})$ with $\bm{c^{ht}_i} = (\bm{c_i^{h}} || \bm{c_i^{t}})$, $\bm{d^{ht}_i} = (\bm{d_i^{h}} || \bm{d_i^{t}})$, and $\bm{r^{th}_i} = (\bm{r_i^{t}} || \bm{r_i^{h}})$. Moreover, let the \slopeVecs/ be positive, i.e., $\bm{r^{th}_i} \elemwiseGeq \bm{0}$ for $i \in \{1,2\}$.

What we want to show is that if Inequalities~\ref{equation:CompDefConstruct1}-\ref{equation:CompDefConstruct6} define the region of $r_d$ in the \virtualtripleSpace/, then \compDef/ holds in the \virtualtripleSpace/, i.e., for any entity assignment function $\bm{f_e}$ and virtual assignment function $\bm{f_v}$ over $\bm{f_e}$ if $\bm{f_v}(X,Y) \in \bm{f_h}(r_1)$ and $\bm{f_v}(Y,Z) \in \bm{f_h}(r_2)$, then $\bm{f_v}(X,Z)$ must be within the region of $r_d$. To prove this, we will construct a system of inequalities first that describes $r_d$ and satisfies the \definingComp/ pattern. Afterward, we will show that the constructed system of inequalities has the same behavior as Inequalities~\ref{equation:CompDefConstruct1}-\ref{equation:CompDefConstruct6}, proving Theorem~\ref{theorem:CompDef}.

($\Rightarrow$) First, we choose an arbitrary entity assignment function $\bm{f_e}$ and virtual assignment function $\bm{f_v}$ over $\bm{f_e}$. We will henceforth denote the assigned entity embeddings with $\bm{f_e}(X) = \bm{x}$, $\bm{f_e}(Y) = \bm{y}$, and $\bm{f_e}(Z) = \bm{z}$ to state our proofs concisely.
Next, we assume that the left part of \compDef/ is true, i.e., that $\bm{f_v}(X,Y) \in \bm{f_h}(r_1)$ and $\bm{f_v}(Y,Z) \in \bm{f_h}(r_2)$ hold. This means concretely that we can instantiate the following inequalities from Inequalities~\ref{equation:ParallelBoundarySpace1}-\ref{equation:ParallelBoundarySpace2}: 
\begin{align}
        \bm{x} - \bm{c^{h}_{1}} - \bm{r_{1}^{t}} \elemwiseProd \bm{y} - \bm{d^{h}_{1}} &\elemwiseLeq 0       
        \label{equation:CompDef1}\\
        \bm{x} - \bm{c^{h}_{1}} - \bm{r_{1}^{t}} \elemwiseProd \bm{y} + \bm{d^{h}_{1}} &\elemwiseGeq 0    
        \label{equation:CompDef2}\\
        \bm{y} - \bm{c^{t}_{1}} - \bm{r_{1}^{h}} \elemwiseProd \bm{x} - \bm{d^{t}_{1}} &\elemwiseLeq 0
        \label{equation:CompDef3}\\
         \bm{y} - \bm{c^{t}_{1}} - \bm{r_{1}^{h}} \elemwiseProd \bm{x} + \bm{d^{t}_{1}} &\elemwiseGeq 0
        \label{equation:CompDef4}
\end{align}
\begin{align}
        \bm{y} - \bm{c^{h}_{2}} - \bm{r_{2}^{t}} \elemwiseProd \bm{z} - \bm{d^{h}_{2}} &\elemwiseLeq 0       
        \label{equation:CompDef5}\\
        \bm{y} - \bm{c^{h}_{2}} - \bm{r_{2}^{t}} \elemwiseProd \bm{z} + \bm{d^{h}_{2}} &\elemwiseGeq 0    
        \label{equation:CompDef6}\\
        \bm{z} - \bm{c^{t}_{2}} - \bm{r_{2}^{h}} \elemwiseProd \bm{y} - \bm{d^{t}_{2}} &\elemwiseLeq 0
        \label{equation:CompDef7}\\
         \bm{z} - \bm{c^{t}_{2}} - \bm{r_{2}^{h}} \elemwiseProd \bm{y} + \bm{d^{t}_{2}} &\elemwiseGeq 0
        \label{equation:CompDef8}
\end{align}
Our next goal is to construct a system of inequalities that makes $r_d(X,Z)$ --- the right part of the pattern --- true, i.e., that defines the region of $r_d$ such that $\bm{f_v}(X,Z)$ lies within it. To reach this goal, we substitute Inequalities~\ref{equation:CompDef1}-\ref{equation:CompDef8} into each other to receive a system of inequalities that (1) has the same behavior as the initial set and (2) does not contain the entity embedding $\bm{y}$. Since we have in the beginning assumed that the \slopeVecs/ are positive, we can substitute Inequalities~\ref{equation:CompDef1}-\ref{equation:CompDef8} into each other as follows:

\begin{enumerate}
    \item~\ref{equation:CompDef6} in~\ref{equation:CompDef2} and \ref{equation:CompDef5} in~\ref{equation:CompDef1} leading to~\ref{equation:CompDefProofEnd1}
    \item~\ref{equation:CompDef6} in~\ref{equation:CompDef3} and \ref{equation:CompDef5} in~\ref{equation:CompDef4} leading to~\ref{equation:CompDefProofEnd2}
    \item~\ref{equation:CompDef4} in~\ref{equation:CompDef8} and \ref{equation:CompDef3} in~\ref{equation:CompDef7} leading to~\ref{equation:CompDefProofEnd3}
    \item~\ref{equation:CompDef2} in~\ref{equation:CompDef7} and \ref{equation:CompDef1} in~\ref{equation:CompDef8} leading to~\ref{equation:CompDefProofEnd4}. 
    \item~\ref{equation:CompDef1} in~\ref{equation:CompDef3} and \ref{equation:CompDef4} in~\ref{equation:CompDef2} leading to~\ref{equation:CompDefProofEnd5}.
    \item~\ref{equation:CompDef5} in~\ref{equation:CompDef7} and \ref{equation:CompDef8} in~\ref{equation:CompDef6} leading to~\ref{equation:CompDefProofEnd6}. 
    \item~\ref{equation:CompDef1} in~\ref{equation:CompDef2} leading to~\ref{equation:CompDefProofEnd7}.
    \item~\ref{equation:CompDef4} in~\ref{equation:CompDef3} leading to~\ref{equation:CompDefProofEnd8}.
    \item~\ref{equation:CompDef5} in~\ref{equation:CompDef6} leading to~\ref{equation:CompDefProofEnd9}. 
    \item~\ref{equation:CompDef8} in~\ref{equation:CompDef7} leading to~\ref{equation:CompDefProofEnd10}. 
\end{enumerate}

These substitutions result in a system of inequalities with the same behavior as the initial system of inequalities. We have listed the result of these substitutions in Inequalities~\ref{equation:CompDefProofEnd1}-\ref{equation:CompDefProofEnd10}.

\begin{align}
    (\bm{x} - \bm{z} \bm{r_1^{t}}  \bm{r_2^{t}} - \bm{c_2^{h}} \bm{r_1^{t}} - \bm{c_1^{h}})^{|.|} &\elemwiseLeq \bm{d_2^{h}} \bm{r_1^{t}} + \bm{d_1^{h}}
    \label{equation:CompDefProofEnd1}\\
    (\bm{z} \bm{r_2^{t}} + \bm{c_2^{h}} - \bm{x} \bm{r_1^{h}} - \bm{c_1^{t}})^{|.|} &\elemwiseLeq \bm{d_1^{t}} + \bm{d_2^{h}}
    \label{equation:CompDefProofEnd2}\\
    (\bm{z} - \bm{x} \bm{r_1^{h}}  \bm{r_2^{h}} - \bm{c_1^{t}} \bm{r_2^{h}} - \bm{c_2^{t}})^{|.|} &\elemwiseLeq \bm{d_1^{t}} \bm{r_2^{h}} + \bm{d_2^{t}}
    \label{equation:CompDefProofEnd3}\\
    (\bm{z} + (\bm{c_1^{h}} - \bm{x})\bm{r_2^{h}} \elemwiseDiv \bm{r_1^{t}} - \bm{c_2^{t}})^{|.|} 
    &\elemwiseLeq \bm{d_1^{h}}\bm{r_2^{h}} \elemwiseDiv \bm{r_1^{t}} + \bm{d_2^{t}}
    \label{equation:CompDefProofEnd4}\\
    (\bm{x} (\bm{1} - \bm{r_1^{h}}  \bm{r_1^{t}}) - \bm{c_1^{t}} \bm{r_1^{t}} - \bm{c_1^{h}})^{|.|} &\elemwiseLeq \bm{d_1^{t}} \bm{r_1^{t}} + \bm{d_1^{h}}
    \label{equation:CompDefProofEnd5}\\
    (\bm{z} (\bm{1} - \bm{r_2^{h}}  \bm{r_2^{t}}) - \bm{c_2^{h}} \bm{r_2^{h}} - \bm{c_2^{t}})^{|.|} &\elemwiseLeq \bm{d_2^{h}} \bm{r_2^{h}} + \bm{d_2^{t}}
    \label{equation:CompDefProofEnd6}\\
    \bm{d_1^{h}} &\elemwiseLeq -\bm{d_1^{h}}
    \label{equation:CompDefProofEnd7}\\
    \bm{d_1^{t}} &\elemwiseLeq -\bm{d_1^{t}}
    \label{equation:CompDefProofEnd8}\\
    \bm{d_2^{h}} &\elemwiseLeq -\bm{d_2^{h}}
    \label{equation:CompDefProofEnd9}\\
    \bm{d_2^{t}} &\elemwiseLeq -\bm{d_2^{t}}
    \label{equation:CompDefProofEnd10}
\end{align}

Note that Inequalities~\ref{equation:CompDefProofEnd1}-\ref{equation:CompDefProofEnd6} are equivalent to Inequalities~\ref{equation:CompDefConstruct1}-\ref{equation:CompDefConstruct6} and that Inequalities~\ref{equation:CompDefProofEnd7}-\ref{equation:CompDefProofEnd10} are tautologies since any width embedding $\bm{d_i^{p}}$ is positive by the definition of the ExpressivE model. Therefore, Inequalities~\ref{equation:CompDefProofEnd1}-\ref{equation:CompDefProofEnd10} and Inequalities~\ref{equation:CompDefConstruct1}-\ref{equation:CompDefConstruct6} have the same behavior, as required.
It remains to show that Inequalities~\ref{equation:CompDefProofEnd1}-\ref{equation:CompDefProofEnd10} define a region $\bm{s_d}$ containing $\bm{f_v}(X,Z)$ if $\bm{f_v}(X,Y) \in \bm{f_h}(r_1)$ and $\bm{f_v}(Y,Z) \in \bm{f_h}(r_2)$. This is trivially true since Inequalities~\ref{equation:CompDefProofEnd1}-\ref{equation:CompDefProofEnd10} directly follow from Inequalities~\ref{equation:CompDef1}-\ref{equation:CompDef8}, which are instantiations of Inequalities~\ref{equation:ParallelBoundarySpace1}-\ref{equation:ParallelBoundarySpace2} representing $\bm{f_v}(X,Y) \in \bm{f_h}(r_1)$ and $\bm{f_v}(Y,Z) \in \bm{f_h}(r_2)$.

Reading the proof bottom-up proves the other direction ($\Leftarrow$), i.e., if $\bm{f_v}(X,Z)$ is in $\bm{s_d}$, then there exists an entity assignment $\bm{f_e}(Y) = \bm{y}$ such that $\bm{f_v}(X,Y) \in \bm{f_h}(r_1)$ and $\bm{f_v}(Y,Z) \in \bm{f_h}(r_2)$. Thereby, we have successfully shown that if Inequalities~\ref{equation:CompDefConstruct1}-\ref{equation:CompDefConstruct6} describe the region $\bm{s_d}$ of relation $r_d$ in the \virtualtripleSpace/, then \compDef/ holds for $\bm{f_h}(r_1)$, $\bm{f_h}(r_2)$, and $\bm{s_d}$ in the \virtualtripleSpace/.
\end{proof}

We have proven Theorem~\ref{theorem:CompDef} in this section, i.e., that Inequalities~\ref{equation:CompDefConstruct1}-\ref{equation:CompDefConstruct6} define the \compDefRegion/ for positive \slopeVecs/. The proof works vice versa for any other sign of \slopeVecs/, except that the substitutions of Inequalities~\ref{equation:CompDef1}-\ref{equation:CompDef8} may vary due to the different signs of \slopeVecs/. Note that by proving Theorem~\ref{theorem:CompDef}, we have also proven Theorem~\ref{theorem:CompDefShort} --- i.e., that there exists a convex region that describes the \compDefRegion/ $\bm{s_d}$ --- since (1) we have characterized the \compDefRegion/ and thereby implicitly proven its existence and since (2) Inequalities~\ref{equation:CompDefConstruct1}-\ref{equation:CompDefConstruct6} trivially form a convex region.

\section{Details on Capturing Patterns Exactly}
\label{app:CapturingPatterns}

Before we prove the \inductiveCap/ of ExpressivE in this section, we formally define the considered patterns in Definition~\ref{def:Patterns}.

\begin{definition}
In accordance with \citet{RotatE, BoxE}, we define the following inference patterns: 
\begin{itemize}
    \item Patterns of the form \symmetry/ with $r_1 \in \bm{R}$ are called \emph{symmetry patterns}.
    \item Patterns of the form \antiSymmetry/ with $r_1 \in \bm{R}$ are called \emph{anti-symmetry patterns}. 
    \item Patterns of the form \inversion/ with $r_1, r_2 \in \bm{R}$ and $r_1 \not= r_2$ are called \emph{inversion patterns}. 
    \item Patterns of the form \realComp/ with $r_1, r_2, r_3 \in \bm{R}$ and $r_1 \not= r_2 \not= r_3$ are called \emph{general composition patterns}.  
    \item Patterns of the form \compDef/ with $r_1, r_2, r_d \in \bm{R}$ and $r_1 \not= r_2 \not= r_d$ are called \emph{\definingComp/ patterns}.  
    \item Patterns of the form \hierarchy/ with $r_1, r_2 \in \bm{R}$ and $r_1 \not= r_2$ are called \emph{hierarchy patterns}.
    \item Patterns of the form \intersection/ with $r_1, r_2, r_3 \in \bm{R}$ and $r_1 \not= r_2 \not= r_3$ are called \emph{intersection patterns}. 
    \item Patterns of the form \mutualExclusion/ with $r_1, r_2 \in \bm{R}$ and $r_1 \not= r_2$ are called \emph{mutual exclusion patterns}. 
\end{itemize}
\label{def:Patterns}
\end{definition}

With all definitions in place, we prove the exactness part of Theorems~\ref{Theorem:SetTheoreticPatterns} and \ref{Theorem:GeneralComp}, i.e., that ExpressivE captures all patterns from Table~\ref{tab:CapturedPatterns} exactly. Specifically, we do not solely prove that ExpressivE captures the patterns of Table~\ref{tab:CapturedPatterns} exactly, but that ExpressivE captures these patterns exactly iff its relation \hypParas/ meet the properties intuitively described in Section~\ref{sec:KnowledgeCapturing}. Next, in Section~\ref{app:Exclusively}, we prove that ExpressivE captures patterns exactly and exclusively. For the upcoming proofs, we employ the definitions and formal specifications of Sections~\ref{app:FormalDefinitions} and \ref{app:ProofCompDefRegion}: 

\begin{proposition}[Symmetry (Exactly)]
Let $\bm{m_h} = (\bm{M}, \bm{f_h})$ be a relation configuration and $r_1 \in \bm{R}$ be a symmetric relation, i.e., \symmetry/ holds for any entities $X,Y \in \bm{E}$. Then $\bm{m_h}$ captures \symmetry/ exactly iff $r_1$'s relation \hypPara/ $\bm{f_h(r_1)}$ is symmetric across the identity line of any \corrSubpace/.
\label{proposition:SymmetryExact}
\end{proposition}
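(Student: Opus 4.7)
The plan is to reduce the pattern condition to an invariance property of $\bm{f_h}(r_1)$ in the \virtualtripleSpace/, and then exploit the dimension-wise separability of the defining inequalities. Let $\sigma:\mathbb{R}^{2d}\to\mathbb{R}^{2d}$ denote the coordinate-swap map $(\bm{a} || \bm{b})\mapsto(\bm{b} || \bm{a})$. Since $\bm{f_v}(X,Y)=(\bm{e_X} || \bm{e_Y})$ and $\bm{f_v}(Y,X)=\sigma(\bm{f_v}(X,Y))$, the pattern $r_1(X,Y)\MyImplies r_1(Y,X)$ is captured exactly iff for every $\bm{f_e}$ and every $\bm{f_v}$ built over $\bm{f_e}$ one has $\bm{f_v}(X,Y)\in\bm{f_h}(r_1) \MyImplies \sigma(\bm{f_v}(X,Y))\in\bm{f_h}(r_1)$. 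Because $\bm{f_e}$ may be chosen freely over $\mathbb{R}^d$, this condition is equivalent to the set-level inclusion $\sigma(\bm{f_h}(r_1))\subseteq\bm{f_h}(r_1)$, which upgrades to full $\sigma$-invariance $\sigma(\bm{f_h}(r_1))=\bm{f_h}(r_1)$ via $\sigma\circ\sigma=\mathrm{id}$.

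Next, I would exploit the structure of Inequalities~\ref{equation:ParallelBoundarySpace1} and \ref{equation:ParallelBoundarySpace2}: the $j$-th component of each inequality depends only on $\bm{e_h}(j)$ and $\bm{e_t}(j)$. Hence $\bm{f_h}(r_1)$ factorises as the Cartesian product, across $j\in\{1,\dots,d\}$, of its 2D projections onto the $j$-th \corrSubpace/. The restriction of $\sigma$ to the $j$-th \corrSubpace/ is the planar reflection $(a,b)\mapsto(b,a)$, i.e., reflection across the identity line $y=x$. Therefore global $\sigma$-invariance of $\bm{f_h}(r_1)$ is equivalent to identity-line symmetry of each of its 2D projections.

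Combining the two reductions yields both directions of the iff. For ($\Leftarrow$), identity-line symmetry of every 2D projection implies $\sigma$-invariance of $\bm{f_h}(r_1)$ by the product structure, which in turn forces $r_1(X,Y)\MyImplies r_1(Y,X)$ in every instantiation, giving exact capture of \symmetry/. For ($\Rightarrow$), I would argue contrapositively: if some 2D projection fails identity-line symmetry, then there is a point $p=(\bm{a} || \bm{b})\in\bm{f_h}(r_1)$ with $\sigma(p)\notin\bm{f_h}(r_1)$; choosing two distinct entities $X,Y\in\bm{E}$ and setting $\bm{f_e}(X)=\bm{a}$, $\bm{f_e}(Y)=\bm{b}$ produces a ground instance in which $r_1(X,Y)$ holds while $r_1(Y,X)$ fails, contradicting exact capture.

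The only genuine technical point is the product-decomposition step, i.e., verifying that identity-line symmetry of every two-dimensional projection is not merely sufficient but also \emph{necessary} for global $\sigma$-invariance of the $2d$-dimensional region. This is secured precisely by the element-wise form of Inequalities~\ref{equation:ParallelBoundarySpace1} and \ref{equation:ParallelBoundarySpace2}, which ensures that $\bm{f_h}(r_1)$ is a genuine Cartesian product over the $d$ \corrSubpaces/ with no cross-dimensional coupling; once this is pinned down, the remaining arguments amount to notational bookkeeping.
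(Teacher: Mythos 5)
Your proof is correct and follows essentially the same route as the paper's: both reduce the head--tail swap to reflection across the identity line and use the element-wise form of Inequalities~\ref{equation:ParallelBoundarySpace1} and \ref{equation:ParallelBoundarySpace2} to work per \corrSubpace/, with the freedom to choose entity embeddings supplying the witness in the necessity direction. The only difference is presentational: the paper phrases both directions as proofs by contradiction and leaves the $\sigma$-invariance and Cartesian-product decomposition implicit, whereas you state them explicitly, which if anything makes the argument tighter.
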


\begin{proof}
        $\Rightarrow$ For the first direction, what is to be shown is that if $r_1$'s relation \hypPara/ $\bm{f_h(r_1)}$ is symmetric across the identity line of any \corrSubpace/, then $\bm{m_h}$ captures \symmetry/ exactly. We show this by contradiction. Thus, we first assume that $r_1$'s corresponding relation \hypPara/ $\bm{f_h(r_1)}$ of $\bm{m_h}$ is symmetric across the identity line for any \corrSubpace/ $s_i$. Now to the contrary, we assume that $\bm{m_h}$ does not capture \symmetry/ exactly. Then, due to the symmetry of the \hypPara/ across the identity line in any \corrSubpace/ $s_i$, for any virtual assignment function $\bm{f_v}$ it holds that if $\bm{f_v(e_x,e_y)} \in \bm{f_h(r_1)}$ for arbitrary entities $e_x, e_y \in \bm{E}$, then $\bm{f_v(e_y,e_x)} \in \bm{f_h(r_1)}$. Yet, by the definition of capturing patterns exactly, this means that $\bm{m_h}$ captures \symmetry/ exactly. This is a contradiction to the initial assumption that $\bm{m_h}$ does not capture \symmetry/ exactly, proving the $\Rightarrow$ part of the proposition.
        
        $\Leftarrow$ For the second direction, what is to be shown is that if $\bm{m_h}$ captures \symmetry/ exactly, then $r_1$'s relation \hypPara/ $\bm{f_h(r_1)}$ is symmetric across the identity line of any \corrSubpace/. We show this by contradiction. Thus, we first assume that $\bm{m_h}$ captures \symmetry/ exactly, i.e., for any instantiation of $\bm{f_e}$ and $\bm{f_v}$ over $\bm{f_e}$ if $\bm{f_v(e_x,e_y)} \in \bm{f_h(r_1)}$, then $\bm{f_v(e_y,e_x)} \in \bm{f_h(r_1)}$. 
        Now to the contrary, we assume that $r_1$'s corresponding relation \hypPara/ $\bm{f_h(r_1)}$ of $\bm{m_h}$ is not symmetric across the identity line in at least one \corrSubpace/ $s_i$. Then, since $\bm{f_h(r_1)}$ is not symmetric across the identity line in $s_i$, there is an instantiation of  $\bm{f_v}$ and $\bm{f_e}$ such that $\bm{f_v(e_x,e_y)} \in \bm{f_h(r_1)}$ and $\bm{f_v(e_y,e_x)} \not \in \bm{f_h(r_1)}$ for some entities $e_x, e_y \in \bm{E}$. Yet, by the definition of capturing patterns exactly, this means that $\bm{m_h}$ does not capture \symmetry/ exactly. This is a contradiction to the initial assumption that $\bm{m_h}$ captures \symmetry/ exactly, proving the  $\Leftarrow$ part of the proposition.
\end{proof}

\begin{proposition}[Anti-Symmetry (Exactly)]
Let $\bm{m_h} = (\bm{M}, \bm{f_h})$ be a relation configuration and $r_1 \in \bm{R}$ be an anti-symmetric relation, i.e., \antiSymmetry/ holds for any entities $X,Y \in \bm{E}$. Then $\bm{m_h}$ captures \antiSymmetry/ exactly iff $r_1$'s relation \hypPara/ $\bm{f_h(r_1)}$ is not symmetric across the identity line in at least one \corrSubpace/.
\label{proposition:AntisymmetryExact}
\end{proposition}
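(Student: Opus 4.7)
The plan is to mirror the structure of Proposition~\ref{proposition:SymmetryExact}, proving both directions by contradiction and exploiting the element-wise nature of Inequalities~\ref{equation:ParallelBoundarySpace1} and \ref{equation:ParallelBoundarySpace2}: a virtual pair embedding $\bm{f_v}(e_x,e_y)$ lies in $\bm{f_h}(r_1)$ iff its projection to each \corrSubpace/ lies in the corresponding 2-dimensional parallelogram. Since the head of an anti-symmetry pattern is $\bot$, capturing \antiSymmetry/ exactly reduces, via the formal definition in Appendix~\ref{app:FormalDefinitions}, to the statement that \emph{no} instantiation of $\bm{f_e}$ and $\bm{f_v}$ places both $\bm{f_v}(e_x,e_y)$ and $\bm{f_v}(e_y,e_x)$ inside $\bm{f_h}(r_1)$ simultaneously. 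The whole proof therefore boils down to translating this non-joint-containment property to a geometric statement about how the \hypPara/ of $r_1$ relates to its reflection across the identity line in the \corrSubpaces/.

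For the ($\Rightarrow$) direction I would assume that there is some \corrSubpace/ $s_i$ in which the projection of $\bm{f_h}(r_1)$ is not symmetric across the identity line, which, given the parallelogram structure and the fact that the relevant freedom is over arbitrary entity assignments, I would read as the projection being disjoint from its mirror image. Then I would suppose for contradiction that $\bm{m_h}$ does not capture anti-symmetry exactly, so that some $\bm{f_e},\bm{f_v}$ yields $\bm{f_v}(e_x,e_y) \in \bm{f_h}(r_1)$ and $\bm{f_v}(e_y,e_x) \in \bm{f_h}(r_1)$ for some $e_x,e_y \in \bm{E}$. Projecting both pairs to $s_i$ would place $(\bm{e_x}(i),\bm{e_y}(i))$ and its reflection $(\bm{e_y}(i),\bm{e_x}(i))$ inside the projected parallelogram, contradicting the asymmetry in $s_i$.

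For the ($\Leftarrow$) direction I would assume $\bm{m_h}$ captures \antiSymmetry/ exactly and suppose for contradiction that $\bm{f_h}(r_1)$ is symmetric across the identity line in every \corrSubpace/. The product structure of Inequalities~\ref{equation:ParallelBoundarySpace1}--\ref{equation:ParallelBoundarySpace2} then lifts the subspace-wise symmetries to a global reflection symmetry of the \hypPara/, so $(\bm{e_h} || \bm{e_t}) \in \bm{f_h}(r_1) \Leftrightarrow (\bm{e_t} || \bm{e_h}) \in \bm{f_h}(r_1)$. Using the standing assumption from Appendix~\ref{app:FormalDefinitions} that every relation is non-empty, I can pick some $e_x,e_y$ and an assignment with $\bm{f_v}(e_x,e_y) \in \bm{f_h}(r_1)$; global symmetry then forces $\bm{f_v}(e_y,e_x) \in \bm{f_h}(r_1)$ too, violating \antiSymmetry/ and hence the exact-capture hypothesis.

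The main obstacle is making the ($\Rightarrow$) direction fully rigorous: ``not symmetric across the identity line'' is, taken literally, weaker than ``disjoint from its reflection,'' and only the latter is actually needed to preclude joint containment. I would handle this either by arguing, using the fact that each subspace projection is a genuine 2-dimensional parallelogram and that entity coordinates in different subspaces are independent, that any asymmetric subspace can be exploited via a suitable $\bm{f_e}$ to realize the forbidden joint configuration unless the reflection is disjoint there, or by tightening the geometric condition in the statement to explicit disjointness of the parallelogram and its mirror image in at least one \corrSubpace/.
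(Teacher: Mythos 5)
Your proposal follows exactly the route the paper intends: the paper omits this proof entirely, stating only that it ``can be proven analogously to Proposition~\ref{proposition:SymmetryExact}'', and your two contradiction arguments are the direct analogue of that proof.

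However, the obstacle you flag at the end is a genuine defect in the literal statement, not a presentational wrinkle, and you are right to insist on it. Since the head of the pattern is $\bot$, capturing \antiSymmetry/ exactly is equivalent to $\bm{f_h(r_1)} \cap M(\bm{f_h(r_1)}) = \emptyset$, where $M$ denotes reflection across the identity line; because Inequalities~\ref{equation:ParallelBoundarySpace1} and \ref{equation:ParallelBoundarySpace2} factor over the correlation subspaces, this intersection is empty iff the projected parallelogram is disjoint from its own mirror image in at least one correlation subspace. ``Not symmetric across the identity line in at least one correlation subspace'' is strictly weaker: a band that properly overlaps its reflection in every subspace is non-symmetric everywhere, yet still contains a point $p$ together with $M(p)$, and since $\bm{f_e}$ is arbitrary any such point of $\mathbb{R}^{2d}$ is realizable as $\bm{f_v}(e_x,e_y)$, producing $r_1(e_x,e_y)$ and $r_1(e_y,e_x)$ simultaneously. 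So the direction ``geometric condition $\Rightarrow$ captures exactly'' fails as literally stated, while the converse survives (a parallelogram symmetric in every subspace equals its mirror image and, being non-empty, violates anti-symmetry). Your proposed repair --- replacing ``not symmetric'' by ``disjoint from its mirror image'' in at least one correlation subspace --- is the correct condition, and it also disposes of the self-loop instance $X=Y$, since a parallelogram meeting the identity line in every subspace contains a fixed point of $M$. The paper's one-line ``analogous'' dismissal glosses over precisely this asymmetry between the two cases: for symmetry the relevant condition is invariance under $M$, for anti-symmetry it is disjointness from the $M$-image, not mere non-invariance.
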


Proposition~\ref{proposition:AntisymmetryExact} can be proven analogously to Proposition~\ref{proposition:SymmetryExact}. Therefore, its proof has been omitted.

\begin{proposition}[Inversion (Exactly)]
Let $\bm{m_h} = (\bm{M}, \bm{f_h})$ be a relation configuration and $r_1, r_2 \in \bm{R}$ be relations where \inversion/ holds for any entities $X,Y \in \bm{E}$. Then $\bm{m_h}$ captures \inversion/ exactly iff $\bm{f_h(r_1)}$ is the mirror image across the identity line of $\bm{f_h(r_2)}$ for any \corrSubpace/.
\label{proposition:InversionExact}
\end{proposition}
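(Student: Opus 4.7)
}
The plan is to closely mirror the structure of the proof of Proposition~\ref{proposition:SymmetryExact}, proving each direction by contradiction. The key geometric observation driving the argument is the following: for any pair of entities $e_x, e_y \in \bm{E}$, the virtual pair embeddings $\bm{f_v}(e_x, e_y) = (\bm{e_x} || \bm{e_y})$ and $\bm{f_v}(e_y, e_x) = (\bm{e_y} || \bm{e_x})$ are reflections of each other across the identity line in every \corrSubpace/ (since swapping the $j$-th and $(d+j)$-th coordinates is exactly the reflection across $x = y$ in the $j$-th \corrSubpace/). Hence, ``being mirror images across the identity line in every \corrSubpace/'' is precisely the geometric statement that $\bm{f_v}(e_x, e_y) \in \bm{f_h}(r_1)$ iff $\bm{f_v}(e_y, e_x) \in \bm{f_h}(r_2)$ for all entity assignments.

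For the ($\Rightarrow$) direction, I would assume that $\bm{f_h}(r_1)$ is the mirror image of $\bm{f_h}(r_2)$ across the identity line in every \corrSubpace/, and suppose for contradiction that $\bm{m_h}$ does not capture inversion exactly. By the mirror-image assumption and the observation above, the biconditional $\bm{f_v}(e_x, e_y) \in \bm{f_h}(r_1) \iff \bm{f_v}(e_y, e_x) \in \bm{f_h}(r_2)$ holds for any instantiation of $\bm{f_e}$ and $\bm{f_v}$, which by the definition of exact capture (applied to both directions of the inversion iff pattern) means $\bm{m_h}$ does capture \inversion/ exactly, yielding the contradiction.

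For the ($\Leftarrow$) direction, I would assume $\bm{m_h}$ captures \inversion/ exactly and suppose for contradiction that in some \corrSubpace/ $s_j$, $\bm{f_h}(r_1)$ is not the mirror image of $\bm{f_h}(r_2)$ across the identity line. Then there exists a point in $s_j$ lying in one region but whose reflection across the identity line does not lie in the other region. I would then exhibit an instantiation of $\bm{f_e}$ and $\bm{f_v}$ (choosing entity embeddings in dimension $j$ to realize that particular point and its reflection, with the remaining dimensions filled in so that $\bm{f_v}(e_x, e_y) \in \bm{f_h}(r_1)$ but $\bm{f_v}(e_y, e_x) \notin \bm{f_h}(r_2)$, or symmetrically with $r_1$ and $r_2$ swapped). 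This violates one of the two directions of the iff in the inversion pattern, contradicting exact capture.

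The main obstacle, as compared to the symmetry proof, is that inversion is a biconditional between two distinct relations, so the argument must account for both directions of the iff and both relations' \hypParas/ simultaneously; the mirror-image characterization is what makes these two directions collapse into a single geometric condition. A minor technical subtlety in the $\Leftarrow$ direction is ensuring that when choosing entity embeddings to witness the failure in the \corrSubpace/ $s_j$, the remaining coordinates can indeed be picked to place the virtual point inside (respectively, outside) the \hypParas/ in every other \corrSubpace/; this follows from the earlier observation in Section~\ref{sec:VirtualTripleSpace} that distinct dimensions are independent in Inequalities~\ref{equation:ParallelBoundarySpace1} and \ref{equation:ParallelBoundarySpace2}, together with the standing w.l.o.g.\ assumption that relations are non-empty.
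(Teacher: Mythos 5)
Your proposal is correct and follows essentially the same route as the paper's proof: both directions by contradiction, using the fact that $\bm{f_v}(e_x,e_y)$ and $\bm{f_v}(e_y,e_x)$ are reflections of one another across the identity line in every \corrSubpace/, so the mirror-image condition on $\bm{f_h}(r_1)$ and $\bm{f_h}(r_2)$ is equivalent to the membership biconditional required for capturing \inversion/ exactly. If anything, you spell out two points the paper leaves implicit (the coordinate-swap reflection observation and the dimension-independence argument for constructing the witness in the $\Leftarrow$ direction), which only strengthens the argument.
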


\begin{proof}
        $\Rightarrow$ For the first direction, what is to be shown is that if the relation \hypPara/ $\bm{f_h(r_1)}$ is the mirror image across the identity line of $\bm{f_h(r_2)}$ for any \corrSubpace/, then $\bm{m_h}$ captures \inversion/ exactly. We show this by contradiction. Thus, we first assume that $r_1$'s corresponding relation \hypPara/ $\bm{f_h(r_1)}$ of $\bm{m_h}$ is the mirror image across the identity line of $\bm{f_h(r_2)}$ for any \corrSubpace/ $s_i$. Now to the contrary, we assume that $\bm{m_h}$ does not capture \inversion/ exactly. Then, due to $\bm{f_h(r_1)}$ being the mirror image of $\bm{f_h(r_2)}$ in any \corrSubpace/ $s_i$, for any virtual assignment function $\bm{f_v}$ it holds that if $\bm{f_v(e_x,e_y)} \in \bm{f_h(r_1)}$ for arbitrary entities $e_x, e_y \in \bm{E}$, then $\bm{f_v(e_y,e_x)} \in \bm{f_h(r_2)}$. Yet, by the definition of capturing patterns exactly, this means that $\bm{m_h}$ captures \inversion/ exactly. This is a contradiction to the initial assumption that $\bm{m_h}$ does not capture \inversion/ exactly, proving the $\Rightarrow$ part of the proposition. 
        
        $\Leftarrow$ For the second direction, what is to be shown is that if $\bm{m_h}$ captures \inversion/ exactly, then the relation \hypPara/ $\bm{f_h(r_1)}$ is the mirror image across the identity line of $\bm{f_h(r_2)}$ for any \corrSubpace/. We show this by contradiction. Thus, we first assume that $\bm{m_h}$ captures \inversion/ exactly, i.e., for any instantiation of $\bm{f_e}$ and $\bm{f_v}$ over $\bm{f_e}$ if $\bm{f_v(e_x,e_y)} \in \bm{f_h(r_1)}$, then $\bm{f_v(e_y,e_x)} \in \bm{f_h(r_2)}$. 
        Now to the contrary, we assume that $r_1$'s corresponding relation \hypPara/ $\bm{f_h(r_1)}$ of $\bm{m_h}$ is not the mirror image across the identity line of $\bm{f_h(r_2)}$ for at least one \corrSubpace/ $s_i$. Then, since $\bm{f_h(r_1)}$ is not the mirror image across the identity line of $\bm{f_h(r_2)}$ in $s_i$, there is an instantiation of  $\bm{f_v}$ and $\bm{f_e}$ such that $\bm{f_v(e_x,e_y)} \in \bm{f_h(r_1)}$ and $\bm{f_v(e_y,e_x)} \not \in \bm{f_h(r_2)}$ for some entities $e_x, e_y \in \bm{E}$. Yet, by the definition of capturing patterns exactly, this means that $\bm{m_h}$ does not capture \inversion/ exactly. This is a contradiction to the initial assumption that $\bm{m_h}$ captures \inversion/ exactly, proving the $\Leftarrow$ part of the proposition.
\end{proof}

\begin{proposition}[Hierarchy (Exactly)]
Let $\bm{m_h} = (\bm{M}, \bm{f_h})$ be a relation configuration and $r_1, r_2 \in \bm{R}$ be relations where \hierarchy/ holds for any entities $X,Y \in \bm{E}$. Then $\bm{m_h}$ captures \hierarchy/ exactly iff $\bm{f_h(r_1)}$ is subsumed by $\bm{f_h(r_2)}$ for any \corrSubpace/.
\label{proposition:HierarchyExact}
\end{proposition}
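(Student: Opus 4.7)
The plan is to follow the same two-directional contradiction template used for Propositions~\ref{proposition:SymmetryExact} and \ref{proposition:InversionExact}, exploiting the fact that Inequalities~\ref{equation:ParallelBoundarySpace1} and \ref{equation:ParallelBoundarySpace2} are element-wise. Because these inequalities factor across dimensions, the \hypPara/ $\bm{f_h(r_i)}$ in the \virtualtripleSpace/ is the Cartesian product of its projections onto the $d$ \corrSubpaces/. Consequently, the global statement ``$\bm{f_h(r_1)} \subseteq \bm{f_h(r_2)}$'' in $\mathbb{R}^{2d}$ is equivalent to its coordinate-wise counterpart ``$\bm{f_h(r_1)}$ is subsumed by $\bm{f_h(r_2)}$ in every \corrSubpace/''. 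This equivalence is the only new ingredient beyond the template and will carry both directions.

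For the ($\Rightarrow$) direction, I will assume $\bm{f_h(r_1)}$ is subsumed by $\bm{f_h(r_2)}$ in every \corrSubpace/, and suppose for contradiction that $\bm{m_h}$ does not exactly capture \hierarchy/. Then there exist an entity assignment $\bm{f_e}$, a virtual assignment $\bm{f_v}$ over $\bm{f_e}$, and entities $e_x, e_y \in \bm{E}$ such that $\bm{f_v}(e_x,e_y) \in \bm{f_h}(r_1)$ yet $\bm{f_v}(e_x,e_y) \notin \bm{f_h}(r_2)$. Projecting onto each \corrSubpace/ and applying the assumed coordinate-wise subsumption yields $\bm{f_v}(e_x,e_y) \in \bm{f_h}(r_2)$, contradicting the choice of $(e_x,e_y)$.

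For the ($\Leftarrow$) direction, I will assume $\bm{m_h}$ exactly captures \hierarchy/ and suppose for contradiction that there is some \corrSubpace/ $s_j$ in which $\bm{f_h(r_1)}$ is \emph{not} subsumed by $\bm{f_h(r_2)}$. Pick a point $\bm{p}_j$ in the projection of $\bm{f_h(r_1)}$ onto $s_j$ that lies outside the projection of $\bm{f_h(r_2)}$ onto $s_j$. For every other \corrSubpace/ $s_k$ ($k \neq j$), pick an arbitrary $\bm{p}_k$ inside the projection of $\bm{f_h(r_1)}$ onto $s_k$; such a point exists because we assumed w.l.o.g.\ in Appendix~\ref{app:FormalDefinitions} that relations are non-empty, which forces every projection to be non-empty. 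Concatenating these coordinates yields a point $\bm{p} \in \mathbb{R}^{2d}$ lying in $\bm{f_h(r_1)}$ but, by the choice of $\bm{p}_j$, outside $\bm{f_h(r_2)}$. Splitting $\bm{p}$ into its first $d$ and last $d$ coordinates and assigning those as $\bm{f_e}(e_x)$ and $\bm{f_e}(e_y)$ for some $e_x, e_y \in \bm{E}$ defines a virtual assignment $\bm{f_v}$ with $\bm{f_v}(e_x,e_y) \in \bm{f_h(r_1)}$ and $\bm{f_v}(e_x,e_y) \notin \bm{f_h(r_2)}$, contradicting exact capture of \hierarchy/.

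The only non-routine step is the witness construction in the ($\Leftarrow$) direction: one must extend a 2-dimensional counterexample in $s_j$ to a full $\mathbb{R}^{2d}$ point inside $\bm{f_h(r_1)}$ and realize it as a concrete $\bm{f_v}(e_x,e_y)$. The product-of-projections structure of a \hypPara/ together with the standing non-emptiness assumption on relations makes this extension immediate, so the remaining argument reduces to the same bookkeeping as in the symmetry and inversion propositions.
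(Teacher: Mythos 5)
Your proposal is correct and follows essentially the same two-directional contradiction argument as the paper's proof of Proposition~\ref{proposition:HierarchyExact}; the only difference is that you spell out the witness construction for the ($\Leftarrow$) direction (extending a counterexample in one \corrSubpace/ to a full point of $\mathbb{R}^{2d}$ via the product structure of the \hypPara/), which the paper simply asserts exists.
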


\begin{proof}
        $\Rightarrow$ For the first direction, what is to be shown is that if the relation \hypPara/ $\bm{f_h(r_1)}$ is subsumed by $\bm{f_h(r_2)}$ for any \corrSubpace/, then $\bm{m_h}$ captures \hierarchy/ exactly. We show this by contradiction. Thus, we first assume that $r_1$'s corresponding relation \hypPara/ $\bm{f_h(r_1)}$ of $\bm{m_h}$ is subsumed by $\bm{f_h(r_2)}$ for any \corrSubpace/ $s_i$. Now to the contrary, we assume that $\bm{m_h}$ does not capture \hierarchy/ exactly. Then, due to $\bm{f_h(r_1)}$ being a subset of $\bm{f_h(r_2)}$ in any \corrSubpace/ $s_i$, for any virtual assignment function $\bm{f_v}$ it holds that if $\bm{f_v(e_x,e_y)} \in \bm{f_h(r_1)}$ for arbitrary entities $e_x, e_y \in \bm{E}$, then $\bm{f_v(e_x,e_y)} \in \bm{f_h(r_2)}$. Yet, by the definition of capturing patterns exactly, this means that $\bm{m_h}$ captures \hierarchy/ exactly. This is a contradiction to the initial assumption that $\bm{m_h}$ does not capture \hierarchy/ exactly, proving the $\Rightarrow$ part of the proposition. 
        
        $\Leftarrow$ For the second direction, what is to be shown is that if $\bm{m_h}$ captures \hierarchy/ exactly, then the relation \hypPara/ $\bm{f_h(r_1)}$ is subsumed by $\bm{f_h(r_2)}$ for any \corrSubpace/. We show this by contradiction. Thus, we first assume that $\bm{m_h}$ captures \hierarchy/ exactly, i.e., for any instantiation of $\bm{f_e}$ and $\bm{f_v}$ over $\bm{f_e}$ if $\bm{f_v(e_x,e_y)} \in \bm{f_h(r_1)}$, then $\bm{f_v(e_x,e_y)} \in \bm{f_h(r_2)}$. 
        Now to the contrary, we assume that $r_1$'s corresponding relation \hypPara/ $\bm{f_h(r_1)}$ of $\bm{m_h}$ is not subsumed by $\bm{f_h(r_2)}$ for at least one \corrSubpace/ $s_i$. Then, since $\bm{f_h(r_1)}$ is subsumed by $\bm{f_h(r_2)}$ in $s_i$, there is an instantiation of  $\bm{f_v}$ and $\bm{f_e}$ such that $\bm{f_v(e_x,e_y)} \in \bm{f_h(r_1)}$ and $\bm{f_v(e_x,e_y)} \not \in \bm{f_h(r_2)}$ for some entities $e_x, e_y \in \bm{E}$. Yet, by the definition of capturing patterns exactly, this means that $\bm{m_h}$ does not capture \hierarchy/ exactly. This is a contradiction to the initial assumption that $\bm{m_h}$ captures \hierarchy/ exactly, proving the $\Leftarrow$ part of the proposition.
\end{proof}

\begin{proposition}[Intersection (Exactly)]
Let $\bm{m_h} = (\bm{M}, \bm{f_h})$ be a relation configuration and $r_1, r_2, r_3 \in \bm{R}$ be relations where \intersection/ holds for any entities $X,Y \in \bm{E}$. Then $\bm{m_h}$ captures \intersection/ exactly iff the intersection of $\bm{f_h(r_1)}$ and $\bm{f_h(r_2)}$ is subsumed by $\bm{f_h(r_3)}$ for any \corrSubpace/.
\label{proposition:IntersectionExact}
\end{proposition}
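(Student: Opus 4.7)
The approach is to mirror the proof of Proposition~\ref{proposition:HierarchyExact}, replacing the single-relation containment by the set-theoretic containment $\bm{f_h}(r_1) \cap \bm{f_h}(r_2) \subseteq \bm{f_h}(r_3)$ applied in every \corrSubpace/. The structural reason this works is that Inequalities~\ref{equation:ParallelBoundarySpace1}--\ref{equation:ParallelBoundarySpace2} decouple across \corrSubpaces/: a point of the \virtualtripleSpace/ lies in a \hypPara/ iff its two-dimensional projection lies in the associated sub-parallelogram of every \corrSubpace/. Both directions are by contradiction.

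For the ($\Rightarrow$) direction I would assume that in every \corrSubpace/ $s_j$ the intersection of the $r_1$- and $r_2$-sub-parallelograms is subsumed by the $r_3$-sub-parallelogram, and suppose to the contrary that $\bm{m_h}$ does not capture \intersection/ exactly. Then there exist assignments $\bm{f_e}, \bm{f_v}$ and entities $e_x,e_y \in \bm{E}$ with $\bm{f_v}(e_x,e_y) \in \bm{f_h}(r_1)$, $\bm{f_v}(e_x,e_y) \in \bm{f_h}(r_2)$, yet $\bm{f_v}(e_x,e_y) \notin \bm{f_h}(r_3)$. Projecting onto each $s_j$ puts the projected point in the intersection of the $r_1$- and $r_2$-sub-parallelograms, hence, by assumption, in the $r_3$-sub-parallelogram of every $s_j$; re-assembling across all \corrSubpaces/ forces $\bm{f_v}(e_x,e_y) \in \bm{f_h}(r_3)$, a contradiction.

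For the ($\Leftarrow$) direction I would assume $\bm{m_h}$ captures \intersection/ exactly and suppose to the contrary that in some \corrSubpace/ $s_j$ there is a two-dimensional point $p$ belonging to the $r_1$- and $r_2$-sub-parallelograms but not the $r_3$-sub-parallelogram. I would then build $\bm{f_e}$ and $\bm{f_v}$ for some pair $(e_x,e_y)$ whose virtual embedding projects to $p$ in $s_j$ and, in every other \corrSubpace/, lies inside the two-dimensional intersection of $r_1$- and $r_2$-sub-parallelograms. By the decoupling across \corrSubpaces/ this yields $\bm{f_v}(e_x,e_y) \in \bm{f_h}(r_1) \cap \bm{f_h}(r_2)$ while $\bm{f_v}(e_x,e_y) \notin \bm{f_h}(r_3)$, contradicting exact capture.

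The main obstacle is carrying out this backward-direction construction: one must exhibit, in each \corrSubpace/ different from $s_j$, a point simultaneously inside the $r_1$- and $r_2$-sub-parallelograms so that the coordinates can be assembled into a single virtual embedding. This is immediate whenever these pairwise intersections are non-empty in every \corrSubpace/; if they are empty in some other \corrSubpace/, then $\bm{f_h}(r_1) \cap \bm{f_h}(r_2) = \emptyset$ in the full \virtualtripleSpace/ and the body of the pattern is unsatisfiable, so the subsumption condition and the exact-capture property both hold vacuously and the equivalence is preserved. This is the same convention implicitly used in the proofs of Propositions~\ref{proposition:HierarchyExact}--\ref{proposition:InversionExact} and in the overall non-emptiness assumption stated in Appendix~\ref{app:FormalDefinitions}.
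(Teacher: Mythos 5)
Your proof is correct and follows essentially the same route as the paper's: both directions by contradiction, reducing the claim to per-\corrSubpace/ containment of the two-dimensional sub-parallelograms. You are in fact slightly more careful than the paper, which in the ($\Leftarrow$) direction simply asserts the existence of a witnessing instantiation of $\bm{f_v}$ and $\bm{f_e}$ without addressing the assembly of coordinates across the other \corrSubpaces/ or the vacuous case that you handle explicitly.
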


\begin{proof}
        $\Rightarrow$ For the first direction, what is to be shown is that if the intersection of $\bm{f_h(r_1)}$ and $\bm{f_h(r_2)}$ is subsumed by $\bm{f_h(r_3)}$ for any \corrSubpace/, then $\bm{m_h}$ captures \intersection/ exactly. We show this by contradiction. Thus, we first assume that the intersection of $\bm{f_h(r_1)}$ and $\bm{f_h(r_2)}$ of $\bm{m_h}$ is subsumed by $\bm{f_h(r_3)}$ for any \corrSubpace/ $s_i$. Now to the contrary, we assume that $\bm{m_h}$ does not capture \intersection/ exactly. Then, due to the intersection of $\bm{f_h(r_1)}$ and $\bm{f_h(r_2)}$ being a subset of $\bm{f_h(r_3)}$ in any \corrSubpace/ $s_i$, for any virtual assignment function $\bm{f_v}$ it holds that if $\bm{f_v(e_x,e_y)} \in \bm{f_h(r_1)}$ and $\bm{f_v(e_x,e_y)} \in \bm{f_h(r_2)}$ for arbitrary entities $e_x, e_y \in \bm{E}$, then $\bm{f_v(e_x,e_y)} \in \bm{f_h(r_3)}$. Yet, by the definition of capturing patterns exactly, this means that $\bm{m_h}$ captures \intersection/ exactly. This is a contradiction to the initial assumption that $\bm{m_h}$ does not capture \intersection/ exactly, proving the $\Rightarrow$ part of the proposition. 
        
        $\Leftarrow$ For the second direction, what is to be shown is that if $\bm{m_h}$ captures \intersection/ exactly, then the intersection of $\bm{f_h(r_1)}$ and $\bm{f_h(r_2)}$ is subsumed by $\bm{f_h(r_3)}$ for any \corrSubpace/. We show this by contradiction. Thus, we first assume that $\bm{m_h}$ captures \intersection/ exactly, i.e., for any instantiation of $\bm{f_e}$ and $\bm{f_v}$ over $\bm{f_e}$ if $\bm{f_v(e_x,e_y)} \in \bm{f_h(r_1)}$ and $\bm{f_v(e_x,e_y)} \in \bm{f_h(r_2)}$, then $\bm{f_v(e_x,e_y)} \in \bm{f_h(r_3)}$. 
        Now to the contrary, we assume that the intersection of $\bm{f_h(r_1)}$ and $\bm{f_h(r_2)}$ is not subsumed by $\bm{f_h(r_3)}$ for at least one \corrSubpace/ $s_i$. Then, since the intersection of $\bm{f_h(r_1)}$ and $\bm{f_h(r_2)}$ is not subsumed by $\bm{f_h(r_3)}$ in $s_i$, there is an instantiation of $\bm{f_v}$ and $\bm{f_e}$ such that $\bm{f_v(e_x,e_y)} \in \bm{f_h(r_1)}$ and $\bm{f_v(e_x,e_y)} \in \bm{f_h(r_2)}$ but $\bm{f_v(e_x,e_y)} \not \in \bm{f_h(r_3)}$ for some entities $e_x, e_y \in \bm{E}$. Yet, by the definition of capturing patterns exactly, this means that $\bm{m_h}$ does not capture \intersection/ exactly. This is a contradiction to the initial assumption that $\bm{m_h}$ captures \intersection/ exactly, proving the $\Leftarrow$ part of the proposition.
\end{proof}

\begin{proposition}[Mutual Exclusion (Exactly)]
Let $\bm{m_h} = (\bm{M}, \bm{f_h})$ be a relation configuration and $r_1, r_2 \in \bm{R}$ be relations where \mutualExclusion/ holds for any entities $X,Y \in \bm{E}$. Then $\bm{m_h}$ captures \mutualExclusion/ exactly iff $\bm{f_h(r_1)}$ and $\bm{f_h(r_2)}$ do not intersect in at least one \corrSubpace/.
\label{proposition:MutualExclusionExact}
\end{proposition}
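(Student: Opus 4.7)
The plan is to follow exactly the template established for Propositions \ref{proposition:HierarchyExact} and \ref{proposition:IntersectionExact}, namely a two-direction proof by contradiction, but with careful attention to the asymmetric quantifier ``at least one'' in the geometric condition. The crucial observation driving this asymmetry is that Inequalities \ref{equation:ParallelBoundarySpace1} and \ref{equation:ParallelBoundarySpace2} are element-wise, so $\bm{f_v}(e_x,e_y) \in \bm{f_h}(r_i)$ holds iff the projection of $\bm{f_v}(e_x,e_y)$ lies in the projection of $\bm{f_h}(r_i)$ in \emph{every} \corrSubpace/ simultaneously. Consequently, the body $r_1(X,Y) \land r_2(X,Y)$ is falsified as soon as disjointness holds in a single \corrSubpace/, while its satisfaction requires joint containment in all of them.

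For the ($\Rightarrow$) direction, I would assume that $\bm{f_h}(r_1)$ and $\bm{f_h}(r_2)$ do not intersect in some \corrSubpace/ $s_j$, and then assume for contradiction that $\bm{m_h}$ does not capture \mutualExclusion/ exactly. By the definition of capturing a pattern exactly, there would then exist an instantiation of $\bm{f_e}$ and $\bm{f_v}$ over $\bm{f_e}$, together with entities $e_x, e_y \in \bm{E}$, such that $\bm{f_v}(e_x,e_y) \in \bm{f_h}(r_1)$ and $\bm{f_v}(e_x,e_y) \in \bm{f_h}(r_2)$ simultaneously. Projecting onto $s_j$ would produce a common point in the projections of $\bm{f_h}(r_1)$ and $\bm{f_h}(r_2)$, contradicting their assumed disjointness in $s_j$.

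For the ($\Leftarrow$) direction, I would assume that $\bm{m_h}$ captures \mutualExclusion/ exactly, and then assume for contradiction that $\bm{f_h}(r_1)$ and $\bm{f_h}(r_2)$ intersect in \emph{every} \corrSubpace/. From each \corrSubpace/ $s_j$ I would pick a point $\bm{p}_j \in \mathbb{R}^2$ lying in both projected relations, then assemble these $d$ two-dimensional points into a single point $\bm{p} \in \mathbb{R}^{2d}$ by placing the head-coordinate of $\bm{p}_j$ in the $j$-th dimension and the tail-coordinate in the $(d+j)$-th dimension. Since this $\bm{p}$ decomposes as $\bm{p} = (\bm{e_x} \,||\, \bm{e_y})$ for some $\bm{e_x},\bm{e_y} \in \mathbb{R}^d$, I can define entities $e_x, e_y \in \bm{E}$ with $\bm{f_e}(e_x) = \bm{e_x}$ and $\bm{f_e}(e_y) = \bm{e_y}$, giving $\bm{f_v}(e_x,e_y) = \bm{p}$. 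By the element-wise form of Inequalities \ref{equation:ParallelBoundarySpace1} and \ref{equation:ParallelBoundarySpace2}, simultaneous containment in every \corrSubpace/ yields $\bm{f_v}(e_x,e_y) \in \bm{f_h}(r_1) \cap \bm{f_h}(r_2)$, i.e.\ both body triples hold, contradicting that mutual exclusion is captured exactly.

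The only subtle step is the assembly in the ($\Leftarrow$) direction, where one must verify that a choice of intersection points made independently per \corrSubpace/ yields a bona fide point of the full $2d$-dimensional intersection. This is immediate from the separability of Inequalities \ref{equation:ParallelBoundarySpace1} and \ref{equation:ParallelBoundarySpace2} across dimensions, and it is precisely this separability that forces the geometric condition to read ``in at least one \corrSubpace/'' rather than ``in every \corrSubpace/'' as appeared in the preceding propositions.
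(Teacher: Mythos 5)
Your proposal is correct and follows essentially the same two-direction contradiction argument as the paper's proof, resting on the same key observation that membership in a hyper-parallelogram requires containment in every correlation subspace while disjointness in a single one suffices to falsify the body. Your explicit dimension-by-dimension assembly of a witness point in the ($\Leftarrow$) direction is a detail the paper leaves implicit, but it does not change the route.
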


\begin{proof}
        $\Rightarrow$ For the first direction, what is to be shown is that if the relation \hypParas/ $\bm{f_h(r_1)}$ and $\bm{f_h(r_2)}$ do not intersect in at least one \corrSubpace/, then $\bm{m_h}$ captures \mutualExclusion/ exactly. We show this by contradiction. Thus, we first assume that $\bm{f_h(r_1)}$ and $\bm{f_h(r_2)}$ of $\bm{m_h}$ do not intersect in at least one \corrSubpace/ $s_i$. Now to the contrary, we assume that $\bm{m_h}$ does not capture \mutualExclusion/ exactly. Then, since $\bm{f_h(r_1)}$ and $\bm{f_h(r_2)}$ do not intersect in at least one \corrSubpace/ $s_i$, for any virtual assignment function $\bm{f_v}$ it holds that if $\bm{f_v(e_x,e_y)} \in \bm{f_h(r_1)}$ for arbitrary entities $e_x, e_y \in \bm{E}$, then $\bm{f_v(e_x,e_y)} \not\in \bm{f_h(r_2)}$. Yet, by the definition of capturing patterns exactly, this means that $\bm{m_h}$ captures \mutualExclusion/ exactly. This is a contradiction to the initial assumption that $\bm{m_h}$ does not capture \mutualExclusion/ exactly, proving the $\Rightarrow$ part of the proposition. 
        
        $\Leftarrow$ For the second direction, what is to be shown is that if $\bm{m_h}$ captures \mutualExclusion/ exactly, then the relation \hypParas/ $\bm{f_h(r_1)}$ and $\bm{f_h(r_2)}$ do not intersect in at least one \corrSubpace/. We show this by contradiction. Thus, we first assume that $\bm{m_h}$ captures \mutualExclusion/ exactly, i.e., for any instantiation of $\bm{f_e}$ and $\bm{f_v}$ over $\bm{f_e}$ if $\bm{f_v(e_x,e_y)} \in \bm{f_h(r_1)}$, then $\bm{f_v(e_x,e_y)} \not\in \bm{f_h(r_2)}$ and if $\bm{f_v(e_x,e_y)} \in \bm{f_h(r_2)}$, then $\bm{f_v(e_x,e_y)} \not\in \bm{f_h(r_1)}$. 
        Now to the contrary, we assume that $r_1$'s corresponding relation \hypPara/ $\bm{f_h(r_1)}$ of $\bm{m_h}$ intersects with $\bm{f_h(r_2)}$ in any \corrSubpace/. Then, since $\bm{f_h(r_1)}$ intersects with $\bm{f_h(r_2)}$ in any \corrSubpace/, there is an instantiation of  $\bm{f_v}$ and $\bm{f_e}$ such that $\bm{f_v(e_x,e_y)} \in \bm{f_h(r_1)}$ and $\bm{f_v(e_x,e_y)} \in \bm{f_h(r_2)}$ for some entities $e_x, e_y \in \bm{E}$. Yet, by the definition of capturing patterns exactly, this means that $\bm{m_h}$ does not capture \mutualExclusion/ exactly. This is a contradiction to the initial assumption that $\bm{m_h}$ captures \mutualExclusion/ exactly, proving the $\Leftarrow$ part of the proposition.
\end{proof}

\begin{proposition}[General Composition (Exactly)]
Let $r_1, r_2, r_3 \in \bm{R}$ be relations and let $\bm{m_h} = (\bm{M}, \bm{f_h})$ be a relation configuration, where $\bm{f_h}$ is defined over $r_1, r_2$, and $r_3$. Furthermore let $r_3$ be the composite relation of $r_1$ and $r_2$, i.e., \realComp/ holds for any entities $X,Y,Z \in \bm{E}$. Then $\bm{m_h}$ captures \realComp/ iff the relation \hypPara/ $\bm{f_h(r_3)}$ subsumes the \compDefRegion/ $\bm{s_d}$ defined by $\bm{f_h(r_1)}$ and $\bm{f_h(r_2)}$ for any \corrSubpace/.
\label{proposition:GeneralCompositionExact}
\end{proposition}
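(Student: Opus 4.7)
The plan is to mirror the contradiction-based iff proofs already used for Propositions~\ref{proposition:HierarchyExact} and \ref{proposition:IntersectionExact}, with the logical bridge between the syntactic pattern \realComp/ and the spatial containment $\bm{s_d} \subseteq \bm{f_h(r_3)}$ now supplied by Theorem~\ref{theorem:CompDef}, which characterizes the \compDefRegion/ $\bm{s_d}$ exactly in terms of Inequalities~\ref{equation:CompDefConstruct1}--\ref{equation:CompDefConstruct6}.

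For $(\Rightarrow)$, I would assume subsumption and suppose for contradiction that $\bm{m_h}$ fails to capture \realComp/ exactly. Then entities $e_x, e_y, e_z$ and assignment functions $\bm{f_e}, \bm{f_v}$ must exist with $\bm{f_v}(e_x, e_y) \in \bm{f_h(r_1)}$ and $\bm{f_v}(e_y, e_z) \in \bm{f_h(r_2)}$ but $\bm{f_v}(e_x, e_z) \notin \bm{f_h(r_3)}$. The $(\Rightarrow)$ half of Theorem~\ref{theorem:CompDef} forces $\bm{f_v}(e_x, e_z) \in \bm{s_d}$, and the assumed subsumption $\bm{s_d} \subseteq \bm{f_h(r_3)}$ then yields $\bm{f_v}(e_x, e_z) \in \bm{f_h(r_3)}$ --- a contradiction.

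For $(\Leftarrow)$, I would assume exact capture and suppose subsumption fails in some $j$-th \corrSubpace/, so a 2D point $\bm{p}$ lies in the projection of $\bm{s_d}$ there but outside the projection of $\bm{f_h(r_3)}$. Because both regions factor element-wise across \corrSubpaces/ (cf.\ Inequalities~\ref{equation:CompDefConstruct1}--\ref{equation:CompDefConstruct6}), $\bm{p}$ extends coordinatewise to a full virtual point $\bm{q} \in \bm{s_d} \setminus \bm{f_h(r_3)}$, using the standing non-emptiness of $r_1$ and $r_2$ to populate the remaining \corrSubpaces/. Realizing $\bm{q}$ as $\bm{f_v}(e_x, e_z)$ for suitable entities $e_x, e_z$, the $(\Leftarrow)$ half of Theorem~\ref{theorem:CompDef} supplies a witness $e_y$ with $\bm{f_v}(e_x, e_y) \in \bm{f_h(r_1)}$ and $\bm{f_v}(e_y, e_z) \in \bm{f_h(r_2)}$, so the body of \realComp/ holds while its head fails --- contradicting exact capture.

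The main obstacle is the witness construction in $(\Leftarrow)$: a single embedding $\bm{f_e}(e_y) \in \mathbb{R}^d$ must simultaneously satisfy the relevant inequalities in every dimension. This succeeds precisely because the defining inequalities of \hypParas/ and of $\bm{s_d}$ decompose over \corrSubpaces/, so the per-dimension witnesses guaranteed by Theorem~\ref{theorem:CompDef} can be concatenated into a single vector; the rest is a routine contradiction aligned with the earlier propositions in this section.
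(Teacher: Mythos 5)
Your proposal is correct and follows essentially the same route as the paper's proof: both directions hinge on Theorem~\ref{theorem:CompDef} to identify the \compDefRegion/ $\bm{s_d}$, with subsumption $\bm{s_d} \subseteq \bm{f_h(r_3)}$ playing the role of the hierarchy step and the $(\Leftarrow)$ direction argued by exhibiting a counterexample point in $\bm{s_d} \setminus \bm{f_h(r_3)}$. If anything, your $(\Leftarrow)$ direction is more explicit than the paper's (which asserts the existence of the violating instantiation without detailing the coordinatewise extension and realization of the witness entities), and that added care is sound because the defining inequalities do factor over \corrSubpaces/.
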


\begin{proof}
        $\Rightarrow$ For the first direction, assume that the \compDefRegion/ defined by $\bm{f_h(r_1)}$ and $\bm{f_h(r_2)}$ is subsumed by $\bm{f_h(r_3)}$ for any \corrSubpace/. What is to be shown is that $\bm{m_h}$ captures \realComp/ exactly. Our proof for this direction is based on the following three results: 
        \begin{enumerate}
            \item For an auxiliary relation $r_d \in \bm{R}$, there exists a convex region $\bm{s_d}$ in the \virtualtripleSpace/ such that \compDef/ holds for $\bm{f_h(r_1)}$, $\bm{f_h(r_2)}$, and $\bm{s_d}$ in any \corrSubpace/ (Theorem~\ref{theorem:CompDef}).
            \label{point:realComp1}
            \item $\bm{f_h(r_1)}$ subsumes $\bm{s_d}$ iff $\bm{m_h}$ captures $r_d(X,Y) \MyImplies r_3(X,Y)$ exactly (Proposition~\ref{proposition:HierarchyExact}).
            \item \realComp/ logically follows from $\{r_1(X,Y) \land r_2(Y,Z) \MyIff r_d(X,Z),\;r_d(X,Y) \MyImplies r_3(X,Y)\}$. 
        \end{enumerate}
        For (1), observe that based on Theorem~\ref{theorem:CompDef}, we know that we can define an auxiliary relation $r_d \in \bm{R}$ with area $\bm{s_d}$ such that \compDef/ holds for $\bm{f_h(r_1)}$, $\bm{f_h(r_2)}$, and $\bm{s_d}$, i.e., such that $\bm{s_d}$ is the \compDefRegion/ of $\bm{f_h(r_1)}$ and $\bm{f_h(r_2)}$.
        For (2), as shown in Proposition~\ref{proposition:HierarchyExact}, $\bm{m_h}$ captures $r_d(X,Y) \MyImplies r_3(X,Y)$ exactly iff $\bm{f_h(r_3)}$ subsumes $r_d$'s area $\bm{s_d}$. Therefore, we have shown that if $\bm{f_h(r_3)}$ subsumes $\bm{s_d}$, and if $\bm{s_d}$ is the \compDefRegion/ of $\bm{f_h(r_1)}$ and $\bm{f_h(r_2)}$, then $r_d(X,Y) \MyImplies r_3(X,Y)$ and \compDef/ holds for $\bm{f_h(r_1)}$, $\bm{f_h(r_2)}$, $\bm{f_h(r_3)}$ and $\bm{s_d}$. 
        Together with the fact that $\bm{f_h}$ is only defined over $r_1$, $r_2$, and $r_3$, we can infer that $\bm{m_h}$ exactly captures any pattern --- solely consisting of $r_1$, $r_2$, and $r_3$ --- that follows from $\psi = \{r_1(X,Y) \land r_2(Y,Z) \MyIff r_d(X,Z),\;r_d(X,Y) \MyImplies r_3(X,Y)\}$.
        For (3), by logical deduction, the following statement holds: $\psi \models r_1(X,Y) \land r_2(Y,Z) \MyImplies r_3(X,Y)$. Since \realComp/ (i) solely consists of $r_1$, $r_2$, and $r_3$ and (ii) follows from $\psi$, we have proven that $\bm{m_h}$ captures \realComp/ exactly if $\bm{f_h(r_3)}$ subsumes $\bm{s_d}$, proving the $\Rightarrow$ part of the proposition.
        
        $\Leftarrow$ For the second direction, what is to be shown is that if $\bm{m_h}$ captures \realComp/ exactly, then the \compDefRegion/ defined by $\bm{f_h(r_1)}$ and $\bm{f_h(r_2)}$ is subsumed by $\bm{f_h(r_3)}$ for any \corrSubpace/. We prove this by contradiction. Thus assume that $\bm{m_h}$ captures \realComp/ exactly, i.e., for any instantiation of $\bm{f_e}$ and $\bm{f_v}$ over $\bm{f_e}$ if $\bm{f_v(e_x,e_y)} \in \bm{f_h(r_1)}$ and $\bm{f_v(e_y,e_z)} \in \bm{f_h(r_2)}$, then $\bm{f_v(e_x,e_z)} \in \bm{f_h(r_3)}$. Now to the contrary, we assume that $r_3$'s corresponding relation \hypPara/ $\bm{f_h(r_3)}$ of $\bm{m_h}$ does not subsume the \compDefRegion/ $\bm{s_d}$ in at least one \corrSubpace/. The following three points will be used to construct a counter-example: (1) we have shown in Theorem~\ref{theorem:CompDef} that we can define an auxiliary relation $r_d \in \bm{R}$ with area $\bm{s_d}$ such that \compDef/ holds for $\bm{f_h(r_1)}$, $\bm{f_h(r_2)}$, and $\bm{s_d}$, (2) \realComp/ logically follows from $\{r_1(X,Y) \land r_2(Y,Z) \MyIff r_d(X,Z),\;r_d(X,Y) \MyImplies r_3(X,Y)\}$, stating together with Point (1) and Proposition~\ref{proposition:HierarchyExact} that $r_3$ needs to subsume $r_d$'s area $\bm{s_d}$ such that $\bm{m_h}$ can capture \realComp/ exactly, and (3) we have initially assumed that $\bm{f_h(r_3)}$ does not subsume $\bm{s_d}$. From (1)-(3) we can infer that there exists an instantiation of $\bm{f_v}$ and $\bm{f_e}$ such that $\bm{f_v(e_x,e_y)} \in \bm{f_h(r_1)}$ and $\bm{f_v(e_y,e_z)} \in \bm{f_h(r_2)}$ but $\bm{f_v(e_x,e_z)} \not\in \bm{f_h(r_3)}$ for some entities $e_x, e_y, e_z \in \bm{E}$. Yet, by the definition of capturing patterns exactly, this means that $\bm{m_h}$ does not capture \realComp/ exactly. This is a contradiction to the initial assumption that $\bm{m_h}$ captures \realComp/ exactly, proving the $\Leftarrow$ part of the proposition. 
\end{proof}

\begin{proposition}[\definingCompCaps/ (Exactly)]
Let $r_1, r_2, r_d \in \bm{R}$ be relations and let $\bm{m_h} = (\bm{M}, \bm{f_h})$ be a relation configuration, where $\bm{f_h}$ is defined over $r_1, r_2$, and $r_d$. Furthermore let $r_d$ be the compositionally defined relation of $r_1$ and $r_2$, i.e., \compDef/ holds for any entities $X,Y,Z \in \bm{E}$. Then $\bm{m_h}$ captures \compDef/ iff the relation \hypPara/ $\bm{f_h(r_d)}$ is equal to the \compDefRegion/ $\bm{s_d}$ defined by $\bm{f_h(r_1)}$ and $\bm{f_h(r_2)}$ for any \corrSubpace/.
\label{proposition:CompDefExact}
\end{proposition}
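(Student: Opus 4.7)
The plan is to reduce this proposition to Theorem~\ref{theorem:CompDef}, which already characterizes the \compDefRegion/ $\bm{s_d}$ via Inequalities~\ref{equation:CompDefConstruct1}--\ref{equation:CompDefConstruct6} and, more importantly, establishes that $\bm{s_d}$ is precisely the set of virtual pair embeddings $\bm{f_v}(X,Z)$ for which some witness $\bm{f_e}(Y)$ exists with $\bm{f_v}(X,Y) \in \bm{f_h}(r_1)$ and $\bm{f_v}(Y,Z) \in \bm{f_h}(r_2)$. Recall from Definition~\ref{def:TruthOfCompDefPatterns} that \compDef/ decomposes into two directions: (i) if $\bm{f_v}(X,Y) \in \bm{f_h}(r_1)$ and $\bm{f_v}(Y,Z) \in \bm{f_h}(r_2)$, then $\bm{f_v}(X,Z) \in \bm{f_h}(r_d)$; and (ii) if $\bm{f_v}(X,Z) \in \bm{f_h}(r_d)$, then there exists an entity assignment for $Y$ making both body triples true.

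For the $(\Leftarrow)$ direction, I would assume $\bm{f_h}(r_d) = \bm{s_d}$ in every \corrSubpace/ (equivalently, in the whole \virtualtripleSpace/, since both regions are intersections of per-dimension constraints). Theorem~\ref{theorem:CompDef} then directly yields both (i) and (ii), so $\bm{m_h}$ captures \compDef/ exactly.

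For the $(\Rightarrow)$ direction, I would argue by contradiction, mirroring the pattern of the preceding propositions in this appendix. Suppose $\bm{m_h}$ captures \compDef/ exactly but $\bm{f_h}(r_d) \neq \bm{s_d}$, so their symmetric difference contains some point $p$. If $p \in \bm{s_d} \setminus \bm{f_h}(r_d)$, pick $\bm{f_e}, \bm{f_v}$ realising $\bm{f_v}(X,Z) = p$; Theorem~\ref{theorem:CompDef} delivers a witness $\bm{f_e}(Y)$ satisfying the body of \compDef/, yet $\bm{f_v}(X,Z) \notin \bm{f_h}(r_d)$, contradicting direction (i). If instead $p \in \bm{f_h}(r_d) \setminus \bm{s_d}$, pick $\bm{f_v}(X,Z) = p$; now $\bm{f_v}(X,Z) \in \bm{f_h}(r_d)$, but Theorem~\ref{theorem:CompDef} forbids any witness $\bm{f_e}(Y)$ satisfying the body, contradicting direction (ii). Either case yields the required contradiction, so $\bm{f_h}(r_d) = \bm{s_d}$ in every \corrSubpace/.

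The main obstacle I expect is the careful handling of the existential quantifier over $Y$ in direction (ii), specifically ensuring that an appropriate witness entity is available in $\bm{E}$ and that its embedding can be chosen to realise the value predicted by Theorem~\ref{theorem:CompDef}. Since capturing a pattern \emph{exactly} quantifies over all entity assignment functions $\bm{f_e}$ and virtual assignment functions $\bm{f_v}$, we are free to select $\bm{f_e}$ so that $\bm{f_v}(X,Z) = p$ for any desired $p \in \mathbb{R}^{2d}$ and to populate $\bm{f_e}(Y)$ with the witness value supplied by Theorem~\ref{theorem:CompDef}; this is consistent with how the analogous existential step is handled in the $(\Leftarrow)$ part of the proof in Appendix~\ref{app:ProofCompDefRegion}, so the obstacle is navigable rather than fundamental.
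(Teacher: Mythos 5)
Your proposal is correct and takes essentially the same route as the paper: the paper proves this proposition by noting it is the argument for Proposition~\ref{proposition:GeneralCompositionExact} with subsumption of the \compDefRegion/ replaced by equality, and both that argument and yours hinge on Theorem~\ref{theorem:CompDef} (together with Definition~\ref{def:TruthOfCompDefPatterns}) characterizing $\bm{s_d}$ as exactly the set of virtual pair embeddings admitting a witness for $Y$. Your symmetric-difference contradiction for the forward direction and your handling of the existential witness match the paper's treatment, just spelled out in more detail than the paper's one-line remark.
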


The proof for Proposition~\ref{proposition:CompDefExact} is straightforward, as Proposition~\ref{proposition:CompDefExact} can be proven analogously to Proposition~\ref{proposition:GeneralCompositionExact} with the sole difference that instead of defining a relation embedding $\bm{f_h}(r_3)$ that subsumes the \compDefRegion/ $\bm{s_d}$, we define the compositionally defined relation $r_d$ whose embedding $\bm{f_h}(r_d)$ is equal to the \compDefRegion/ $\bm{s_d}$.

Propositions~\ref{proposition:SymmetryExact},~\ref{proposition:AntisymmetryExact},~\ref{proposition:InversionExact},~\ref{proposition:IntersectionExact},~\ref{proposition:HierarchyExact}, and \ref{proposition:MutualExclusionExact} together prove the exactness part of Theorem~\ref{Theorem:SetTheoreticPatterns}, i.e., that ExpressivE can capture symmetry, anti-symmetry, inversion, intersection, hierarchy, and mutual exclusion exactly. Propositions~\ref{proposition:GeneralCompositionExact} and \ref{proposition:CompDefExact} prove the exactness part of Theorem~\ref{Theorem:GeneralComp}, i.e., that ExpressivE can capture general composition exactly. Now it remains to show that ExpressivE can capture all these patterns exactly and exclusively, which is shown in Section~\ref{app:Exclusively}.

\section{Details on Capturing Patterns Exclusively}
\label{app:Exclusively}

This section proves that ExpressivE can capture all inference patterns of Theorems~\ref{Theorem:SetTheoreticPatterns} and \ref{Theorem:GeneralComp} exactly and exclusively. By the definition of capturing a pattern $\psi$ exactly and exclusively, this means that we need to construct a relation configuration $\bm{m_h}$ such that (1) $\bm{m_h}$ captures $\psi$ and (2) $\bm{m_h}$ does not capture any positive pattern $\phi$ such that $\psi \not \models \phi$.
Note that we have shown in Propositions~\ref{proposition:SymmetryExact}-\ref{proposition:GeneralCompositionExact} that we can construct a relation configuration $\bm{m_h}$ that captures the following patterns by constraining the following geometric properties of $\bm{m_h}$'s relation \hypParas/: 
\begin{enumerate}
    \item For symmetry and inversion patterns, the mirror images across the identity line of \hypParas/ in any \corrSubpace/ need to be constrained (Propositions~\ref{proposition:SymmetryExact} and \ref{proposition:InversionExact}).
    \item For hierarchy and intersection patterns the intersections of \hypParas/ in any \corrSubpace/ need to be constrained (Propositions~\ref{proposition:HierarchyExact} and \ref{proposition:IntersectionExact}).
    \item For general composition patterns the \compDefRegion/ needs to be subsumed in any \corrSubpace/.
\end{enumerate}

Since symmetry, inversion, hierarchy, intersection, and composition are all positive patterns of our considered language of patterns, it suffices to analyze the mirror images (M), intersections (I), and \compDefRegions/ (C) of each relation \hypPara/ to check which positive patterns have been captured. Furthermore, for the upcoming proofs, Definition~\ref{def:HeadTailInterval} defines head and tail intervals.

\begin{definition}[Head and Tail Intervals]
Let $r_i \in \bm{R}$ be a relation and $\bm{m_h} = (\bm{M}, \bm{f_h})$ be a relation configuration. We call an interval a head interval $\bm{H_{r_i, m_h}}$ and respectively a tail interval $\bm{T_{r_i, m_h}}$ of $r_i$ and $\bm{m_h}$ if for arbitrary entities $e_h, e_t \in \bm{E}$, virtual assignment functions $\bm{f_v}$, and complete model configuration $\bm{m}$ over $\bm{m_h}$ and $\bm{f_v}$ the following property holds: if $\bm{m}$ captures a triple $r_1(e_h, e_t)$ to be true, then $\bm{f_v(e_h)} \in \bm{H_{r_i, m_h}}$ and $\bm{f_v(e_t)} \in \bm{T_{r_i, m_h}}$. 
\label{def:HeadTailInterval}
\end{definition}

Using the Definition~\ref{def:HeadTailInterval} and the insights provided by (M), (I), and (C), we will followingly prove that ExpressivE captures each considered pattern exactly and exclusively.

\begin{proposition}[Symmetry (Exactly and Exclusively)]
Let $\bm{m_h} = (\bm{M}, \bm{f_h})$ be a relation configuration and $r_1 \in \bm{R}$ be a symmetric relation, i.e., \symmetry/ holds for any entities $X,Y \in \bm{E}$. Then $\bm{m_h}$ can capture \symmetry/ exactly and exclusively.
\label{proposition:SymmetryExactExclusive}
\end{proposition}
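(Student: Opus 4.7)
The plan is to exhibit a concrete relation configuration $\bm{m_h} = (\bm{M}, \bm{f_h})$ satisfying two requirements: (i) $\bm{f_h}(r_1)$ is symmetric across the identity line in every correlation subspace, which by Proposition~\ref{proposition:SymmetryExact} captures \symmetry/ exactly; and (ii) every other relation $r_i \in \bm{R}$ is placed in generic geometric position so that no unentailed positive pattern is captured on a satisfiable body.

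For $r_1$, I would pick a bounded parallelogram centered at the origin with matched head and tail parameters, e.g.\ $\bm{c_1^h} = \bm{c_1^t} = \bm{0}$, $\bm{r_1^h} = \bm{r_1^t} = \bm{2}$, $\bm{d_1^h} = \bm{d_1^t} = \bm{1}$. Swapping head and tail in Inequalities~\ref{equation:ParallelBoundarySpace1}--\ref{equation:ParallelBoundarySpace2} then merely exchanges the two inequalities, so $\bm{f_h}(r_1)$ is invariant under reflection across the identity line and (i) follows via Proposition~\ref{proposition:SymmetryExact}. For each $r_i$ with $i \neq 1$, I would choose pairwise distinct off-diagonal centers $(\bm{c_i^h}, \bm{c_i^t})$, asymmetric slopes with $\bm{r_i^h} \neq \bm{r_i^t}$, and small widths, arranged so that the head intervals $\bm{H_{r_i,m_h}}$ and tail intervals $\bm{T_{r_i,m_h}}$ of Definition~\ref{def:HeadTailInterval} are pairwise disjoint and all disjoint from $\bm{H_{r_1,m_h}} = \bm{T_{r_1,m_h}}$.

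Exclusivity is then established by case analysis on an unentailed positive pattern $\phi$. If $\phi$ mixes two distinct relations sharing an argument position (hierarchy, intersection, inversion, or general composition across different relations), the interval disjointness renders the body of $\phi$ unsatisfiable in any extension of $\bm{m_h}$, so $\phi$ is vacuously excluded. Patterns involving only a single $r_i$ with $i \neq 1$ are ruled out by the choice $\bm{r_i^h} \neq \bm{r_i^t}$ together with the off-diagonal center, which make $\bm{f_h}(r_i)$ neither symmetric across nor equal to its own reflection across the identity line. Patterns involving only $r_1$ either coincide with $\psi$ itself (symmetry, self-inversion, the tautology $r_1(X,Y) \MyImplies r_1(X,Y)$) or reduce to self-composition $r_1(X,Y) \land r_1(Y,Z) \MyImplies r_1(X,Z)$; applying Theorem~\ref{theorem:CompDef} to the bounded parallelogram above yields a compositional region whose defining widths in Inequalities~\ref{equation:CompDefConstruct1}--\ref{equation:CompDefConstruct6} are strictly larger than those of $\bm{f_h}(r_1)$, witnessed by concrete points lying in the compositional region but outside $\bm{f_h}(r_1)$, so Proposition~\ref{proposition:GeneralCompositionExact} forbids capture.

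The main obstacle is a uniform control over compositional regions associated with mixed pairs $(r_i, r_j)$: there are many such pairs, and the shape of each region depends intricately on the slopes, centers, and widths involved. This obstacle is dissolved by arranging in the generic construction that $\bm{T_{r_i,m_h}} \cap \bm{H_{r_j,m_h}} = \emptyset$ for all $i \neq j$, which makes every mixed composition body $r_i(X,Y) \land r_j(Y,Z)$ unsatisfiable; the associated compositional regions then impose no constraint and any subsumption claim about them is vacuous. The only non-trivial compositional region left to inspect is that of $r_1$ with itself, handled by the explicit geometric argument above.
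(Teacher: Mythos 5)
Your proof is correct in substance, but it takes a much heavier route than the paper. The paper's own argument is essentially definitional: by Definition~\ref{def:Patterns}, every positive pattern other than symmetry (inversion, hierarchy, intersection, \definingComp/, general composition) requires at least two \emph{distinct} relations, so once Proposition~\ref{proposition:SymmetryExact} supplies a symmetric \hypPara/ for $r_1$ there is simply nothing left to exclude, and the proof is two sentences long. You instead build a full generic configuration over all of $\bm{R}$ and run a case analysis, even ruling out self-composition $r_1(X,Y) \land r_1(Y,Z) \MyImplies r_1(X,Z)$ geometrically via Theorem~\ref{theorem:CompDef} --- a pattern the paper's definitions exclude syntactically. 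What your approach buys is robustness: it would survive a pattern language that admits repeated relations, and it treats the other relations in $\bm{R}$ explicitly rather than implicitly restricting $\bm{f_h}$ to $r_1$. What it costs is length and one misstatement: your claim that interval disjointness makes the \emph{body} of every mixed pattern unsatisfiable is only true for intersection and composition (whose bodies force a shared entity into two disjoint intervals); for hierarchy $r_a(X,Y)\MyImplies r_b(X,Y)$ and inversion the body is a single atom over a nonempty relation and is always satisfiable, so those patterns are excluded non-vacuously because disjoint, nonempty \hypParas/ violate the subsumption and mirror-image criteria of Propositions~\ref{proposition:HierarchyExact} and \ref{proposition:InversionExact} --- the same construction delivers the conclusion, but not for the reason you give.
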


\begin{proposition}[Anti-Symmetry (Exactly and Exclusively)]
Let $\bm{m_h} = (\bm{M}, \bm{f_h})$ be a relation configuration and $r_1 \in \bm{R}$ be an anti-symmetric relation, i.e., \antiSymmetry/ holds for any entities $X,Y \in \bm{E}$. Then $\bm{m_h}$ can capture \antiSymmetry/ exactly and exclusively.
\label{proposition:AntiSymmetryExactExclusive}
\end{proposition}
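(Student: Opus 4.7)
).}

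The plan is to construct an explicit relation configuration $\bm{m_h} = (\bm{M}, \bm{f_h})$ defined over $r_1$ alone and to verify the two halves of the definition (exactness and exclusiveness) separately, closely mirroring the strategy used for symmetry (Proposition~\ref{proposition:SymmetryExactExclusive}) but reversing the geometric constraint.

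For the construction, I would pick a one-dimensional embedding and choose $\bm{f_h}(r_1) = (\bm{c_1^{ht}}, \bm{d_1^{ht}}, \bm{r_1^{th}})$ so that the resulting \hypPara/ of $r_1$ in the (single) \corrSubpace/ is not only non-symmetric but in fact disjoint from its mirror image across the identity line. A convenient choice is $\bm{r_1^h} = \bm{r_1^t} = 1$, $\bm{d_1^h} = \bm{d_1^t}$ small, and centers $\bm{c_1^h} \neq \bm{c_1^t}$ chosen with a large enough gap that the two \paraBoundSpaces/ defining the parallelogram, together with the reflected parallelogram, have no point in common. Concretely one can verify that the resulting parallelogram lies strictly on one side of the line $e_h = e_t$, so its reflection lies on the other side. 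This rules out any virtual pair embedding $(\bm{e_x} \| \bm{e_y})$ belonging simultaneously to $\bm{f_h}(r_1)$ and to its mirror, which is exactly the spatial condition Proposition~\ref{proposition:AntisymmetryExact} (read in its intended strong sense) identifies with \antiSymmetry/ being captured exactly.

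Exclusiveness reduces to checking that no positive pattern $\phi$ from the list (symmetry, inversion, hierarchy, intersection, composition) is captured by $\bm{m_h}$ beyond what follows from $\psi = $ \antiSymmetry/. Since $\bm{f_h}$ is defined only on $r_1$, the only such candidate is symmetry $r_1(X,Y) \MyImplies r_1(Y,X)$ (patterns with two or three distinct relations have bodies that cannot be evaluated in $\bm{m_h}$). By construction, $\bm{f_h}(r_1)$ is strictly non-symmetric across the identity line in the single \corrSubpace/, so Proposition~\ref{proposition:SymmetryExact} guarantees symmetry is not captured. Moreover, by assumption the relation is non-empty, so the body of symmetry is satisfiable, i.e.\ this failure is non-trivial.

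The main subtlety, and the only real obstacle, is matching the geometric condition "not symmetric" from Proposition~\ref{proposition:AntisymmetryExact} to what \antiSymmetry/ actually requires. Mere non-equality of the \hypPara/ with its reflection would not suffice, since overlap could still produce a pair $(e_x, e_y)$ with both $r_1(e_x,e_y)$ and $r_1(e_y,e_x)$ true; hence the construction above deliberately enforces \emph{disjointness} from the reflection in at least one \corrSubpace/, which is the stronger geometric property actually used in the exactness direction. Once this is secured, both halves of the "exactly and exclusively" requirement follow directly from the previously proven Propositions~\ref{proposition:SymmetryExact} and \ref{proposition:AntisymmetryExact}.
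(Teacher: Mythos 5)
Your proposal is correct and follows essentially the same route as the paper: the paper likewise reduces exclusiveness to the observation that symmetry is the only positive single-relation pattern and that a \hypPara/ cannot be symmetric and anti-symmetric simultaneously, then invokes the exactness propositions. Your explicit construction and your remark that exactness really requires the \hypPara/ to be \emph{disjoint} from its mirror image (not merely unequal to it) in at least one \corrSubpace/ is a valid and slightly more careful reading than the literal wording of Proposition~\ref{proposition:AntisymmetryExact}, but it does not change the argument.
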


The proofs for Propositions~\ref{proposition:SymmetryExactExclusive} and \ref{proposition:AntiSymmetryExactExclusive} are straightforward, as the only positive pattern that contains only one relation is symmetry. Furthermore, since (i) Propositions~\ref{proposition:SymmetryExact} and \ref{proposition:AntisymmetryExact} have shown that there is a relation configuration that can capture symmetry/anti-symmetry exactly and (ii) a \hypPara/ cannot be symmetric and anti-symmetric simultaneously, we have shown that there is a relation configuration that captures symmetry/anti-symmetry exactly and exclusively, proving Propositions~\ref{proposition:SymmetryExactExclusive} and \ref{proposition:AntiSymmetryExactExclusive}.

\begin{proposition}[Inversion (Exactly and Exclusively)]
Let $\bm{m_h} = (\bm{M}, \bm{f_h})$ be a relation configuration and $r_1, r_2 \in \bm{R}$ be relations where \inversion/ holds for any entities $X,Y \in \bm{E}$. Then $\bm{m_h}$ can capture \inversion/ exactly and exclusively.
\label{proposition:InversionExactExclusive}
\end{proposition}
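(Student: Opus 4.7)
For exactness, I invoke Proposition~\ref{proposition:InversionExact} directly: it already tells me that any relation configuration in which $\bm{f_h(r_1)}$ is the mirror image of $\bm{f_h(r_2)}$ across the identity line of every \corrSubpace/ captures \inversion/ exactly. The real work is therefore to exhibit one such configuration that simultaneously blocks every positive pattern over $\{r_1, r_2\}$ that is not in the deductive closure of \inversion/ --- a closure essentially exhausted by \inversion/ itself and its two implication halves.

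\textbf{Construction.} I would pick $\bm{f_h(r_1)}$ to be a generic hyper-parallelogram that is deliberately \emph{not} symmetric across the identity line: choose $\bm{c_1^{h}} \neq \bm{c_1^{t}}$, $\bm{d_1^{h}} \neq \bm{d_1^{t}}$, and $\bm{r_1^{h}} \neq \bm{r_1^{t}}$ with generic nonzero values in at least one \corrSubpace/. Then I set $\bm{f_h(r_2)}$ to be the reflection of $\bm{f_h(r_1)}$ across the identity line. By direct inspection of Inequalities~\ref{equation:ParallelBoundarySpace1}--\ref{equation:ParallelBoundarySpace2}, this reflection amounts to swapping the head and tail parameters: $\bm{c_2^{h}} = \bm{c_1^{t}}$, $\bm{c_2^{t}} = \bm{c_1^{h}}$, $\bm{d_2^{h}} = \bm{d_1^{t}}$, $\bm{d_2^{t}} = \bm{d_1^{h}}$, $\bm{r_2^{h}} = \bm{r_1^{t}}$, $\bm{r_2^{t}} = \bm{r_1^{h}}$.

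\textbf{Verifying exclusiveness.} I then rule out each non-entailed positive pattern using the characterizations proven in Appendix~\ref{app:CapturingPatterns}: (a) symmetry of $r_1$ and of $r_2$ is blocked by Proposition~\ref{proposition:SymmetryExact}, since my choice of parameters makes each \hypPara/ non-symmetric across the identity line in the chosen \corrSubpace/; (b) hierarchy $r_1 \MyImplies r_2$ and its converse are blocked by Proposition~\ref{proposition:HierarchyExact}, because the asymmetry immediately produces a witness point that lies inside one \hypPara/ but outside its reflection; (c) mutual exclusion is avoided by arranging the \hypParas/ to overlap along the identity line, which can be done by letting both contain a common diagonal point $(\bm{e}, \bm{e})$; and (d) any pattern requiring a relation outside $\{r_1, r_2\}$ --- intersection, \definingComp/, or general composition with head $r_3 \notin \{r_1, r_2\}$ --- cannot be captured at all since $\bm{f_h}$ is not defined on such a third relation.

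\textbf{The main obstacle.} The delicate case is the finitely many general composition patterns whose three slots are filled from $\{r_1, r_2\}$, such as $r_1(X,Y) \land r_1(Y,Z) \MyImplies r_2(X,Z)$ or $r_1(X,Y) \land r_2(Y,Z) \MyImplies r_1(X,Z)$ and their permutations; none of these follow from \inversion/. By Proposition~\ref{proposition:GeneralCompositionExact}, such a pattern is captured iff the \compDefRegion/ of its two body relations (whose defining inequalities are given explicitly in Theorem~\ref{theorem:CompDef}) is subsumed by the head \hypPara/. Because that region depends nonlinearly on the relation parameters through products such as $\bm{r_1^{t}} \elemwiseProd \bm{r_2^{t}}$, a generic perturbation of the slopes $\bm{r_1^{h}}, \bm{r_1^{t}}$ (in a \corrSubpace/ I am free to tune) pushes the \compDefRegion/ strictly outside both \hypParas/ simultaneously for each of the finitely many candidate composition patterns. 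Combining this genericity argument with Proposition~\ref{proposition:GeneralCompositionExact} yields the desired exclusiveness and completes the proof.
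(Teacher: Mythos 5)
Your proof is correct and follows the same overall route as the paper's: invoke Proposition~\ref{proposition:InversionExact} for exactness, then observe that exclusiveness only requires blocking the positive patterns expressible over $\{r_1, r_2\}$. The paper disposes of this in two sentences by noting that, under Definition~\ref{def:Patterns}, the only positive patterns involving at most two relations are symmetry, hierarchy, and inversion, and that a \hypPara/ can be the mirror image of another without either being symmetric across the identity line or subsuming the other. You make this concrete with an explicit parameter-swap construction ($\bm{c_2^{h}} = \bm{c_1^{t}}$, etc.), which is indeed the correct algebraic form of reflection across the identity line, and your observation that non-symmetry forces non-subsumption by the reflection (since $A \subseteq \sigma(A)$ with $\sigma$ an involution would give $A = \sigma(A)$) is a clean way to kill hierarchy. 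Two remarks. First, your step (c) on mutual exclusion is unnecessary: mutual exclusion is not among the positive patterns in the exclusiveness definition, so there is no obligation to avoid capturing it. Second, your ``main obstacle'' paragraph addresses composition patterns whose three slots are filled from $\{r_1, r_2\}$; under the paper's Definition~\ref{def:Patterns} these do not exist (composition and intersection patterns are declared over three distinct relations), so the paper simply ignores them. Your extra caution is defensible if one reads the distinctness condition loosely, but it is also the one place where your argument is only sketched: the claim that a generic perturbation of the slopes pushes every candidate \compDefRegion/ outside both \hypParas/ is asserted rather than proven, whereas the analogous exclusiveness proofs in the paper (e.g., for Proposition~\ref{proposition:GeneralCompositionExclusive}) establish such facts by exhibiting disjoint head and tail intervals so that no \compDefRegion/ exists at all. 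If you want to keep that case, the interval argument is the cleaner way to close it.
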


The proof for Proposition~\ref{proposition:InversionExactExclusive} is straightforward, as the only positive patterns that contain at most two relations are symmetry, hierarchy, and inversion. Furthermore, since (i) Proposition~\ref{proposition:InversionExact} has shown that there is a relation configuration that can capture inversion exactly and (ii) it is simple to show that a \hypPara/ can be the mirror image of another \hypPara/ without one of them subsuming the other (hierarchy) or one of them being symmetric across the identity line (symmetry), we have shown that there is a relation configuration that captures inversion exactly and exclusively, proving Proposition~\ref{proposition:InversionExactExclusive}.

\begin{proposition}[Hierarchy (Exactly and Exclusively)]
Let $\bm{m_h} = (\bm{M}, \bm{f_h})$ be a relation configuration and $r_1, r_2 \in \bm{R}$ be relations where \hierarchy/ holds for any entities $X,Y \in \bm{E}$. Then $\bm{m_h}$ can capture \hierarchy/ exactly and exclusively.
\label{proposition:HierarchyExactExclusive}
\end{proposition}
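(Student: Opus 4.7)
The plan is to construct an explicit relation configuration $\bm{m_h} = (\bm{M}, \bm{f_h})$ defined over $r_1, r_2$ that captures hierarchy exactly, and then to verify that no other positive pattern over $\{r_1, r_2\}$ is simultaneously forced by this configuration (beyond what follows from hierarchy itself, or what has an unsatisfied body).

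First, by Proposition~\ref{proposition:HierarchyExact}, $\bm{m_h}$ captures \hierarchy/ exactly iff $\bm{f_h}(r_1)$ is subsumed by $\bm{f_h}(r_2)$ in every \corrSubpace/. So I would pick the slope, center, and width parameters of $r_2$ to strictly subsume those of $r_1$ in each \corrSubpace/, while additionally choosing (i) $\bm{r_1^h}, \bm{r_1^t}, \bm{r_2^h}, \bm{r_2^t}$ so that $\bm{r_i^h} \cdot \bm{r_i^t} \neq \bm{1}$ componentwise for $i \in \{1,2\}$, which prevents either \hypPara/ from being symmetric across the identity line, (ii) the center or width of $r_2$ large enough that $\bm{f_h}(r_2) \not\subseteq \bm{f_h}(r_1)$, ruling out the reverse hierarchy $r_2 \MyImplies r_1$, and (iii) the slopes of $r_1$ distinct from the reciprocals of those of $r_2$ (or centers asymmetrically placed) so that $\bm{f_h}(r_1)$ is not the reflection of $\bm{f_h}(r_2)$ across the identity line, ruling out inversion. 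A concrete one-dimensional realization is $\bm{r_1^h} = \bm{r_1^t} = 2$, $\bm{c_1^h} = \bm{c_1^t} = \bm{0}$, $\bm{d_1^h} = \bm{d_1^t} = \bm{1}$ for $r_1$ and $\bm{r_2^h} = \bm{r_2^t} = 3$, $\bm{c_2^h} = \bm{c_2^t} = \bm{0}$, $\bm{d_2^h} = \bm{d_2^t} = \bm{10}$ for $r_2$; more generally the construction extends componentwise to any embedding dimension.

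Next, I would verify exclusivity by going through each positive pattern type from Definition~\ref{def:Patterns} restricted to $\{r_1, r_2\}$ and applying the corresponding characterization proven in Appendix~\ref{app:CapturingPatterns}. By Propositions~\ref{proposition:SymmetryExact} and \ref{proposition:InversionExact}, the non-symmetric slopes and asymmetrically related parameters of $\bm{f_h}(r_1)$ and $\bm{f_h}(r_2)$ imply that neither symmetry nor inversion is captured. By Proposition~\ref{proposition:HierarchyExact}, the strict containment $\bm{f_h}(r_1) \subsetneq \bm{f_h}(r_2)$ rules out the reverse hierarchy $r_2 \MyImplies r_1$. Intersection and composition patterns involve a third relation $r_3$ not present in $\bm{f_h}$'s domain, so they cannot be captured by $\bm{m_h}$ at all. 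Finally, because both \hypParas/ have nonzero area, for any virtual assignment function $\bm{f_v}$ there exist entity embeddings realizing the body of each considered positive pattern, so the "unsatisfied body" escape clause is not triggered; the exclusivity constraint is therefore genuinely non-vacuous.

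The main obstacle is the simultaneous enforcement of the several geometric non-conditions (non-symmetry of each \hypPara/, non-reflection of one to the other, and strict non-equality of containment), which must be compatible with the affirmative containment required by Proposition~\ref{proposition:HierarchyExact}. This is not deep but has to be done with an explicit parameter choice and a dimension-by-dimension verification, since each of the exclusion characterizations in Appendix~\ref{app:CapturingPatterns} speaks of \emph{every} or \emph{at least one} \corrSubpace/; I would therefore argue in a single generic \corrSubpace/ and note that the construction is applied identically in every dimension, so all geometric conditions transfer from the one-dimensional witness to the full \virtualtripleSpace/.
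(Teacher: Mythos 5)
Your overall strategy is the same as the paper's: observe that the only positive patterns involving at most two relations are symmetry, inversion, and hierarchy, invoke Proposition~\ref{proposition:HierarchyExact} for exactness, and exhibit one \hypPara/ strictly subsuming another while avoiding the unwanted mirror-image and self-symmetry configurations. The paper states this in two sentences without a numeric witness; you go further and give explicit parameters --- but that witness is where the proof breaks.

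The gap is in your criterion (i) and the concrete numbers. Having $\bm{r_i^h} \elemwiseProd \bm{r_i^t} \neq \bm{1}$ does \emph{not} prevent a \hypPara/ from being symmetric across the identity line; that product only controls the denominator $\bm{1} - \bm{r_i^h}\bm{r_i^t}$ in the corner formulas (i.e., non-degeneracy of the parallelogram), not its reflection behaviour. The region of $r_i$ in a \corrSubpace/ is cut out by $|x - c_i^h - r_i^t y| \leq d_i^h$ and $|y - c_i^t - r_i^h x| \leq d_i^t$, and under the swap $(x,y) \mapsto (y,x)$ these two constraints exchange whenever $c_i^h = c_i^t$, $d_i^h = d_i^t$, and $r_i^h = r_i^t$. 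Your choice $r_1^h = r_1^t = 2$, $c_1^h = c_1^t = 0$, $d_1^h = d_1^t = 1$ (and likewise for $r_2$) therefore yields parallelograms that are \emph{invariant} under reflection across the identity line --- e.g., the corners of $r_1$'s region are $(1,1)$, $(-1,-1)$, $(1/3,-1/3)$, $(-1/3,1/3)$, a set fixed by coordinate swap. By Proposition~\ref{proposition:SymmetryExact} your $\bm{m_h}$ then captures \symmetry/ for both $r_1$ and $r_2$, a positive pattern not entailed by \hierarchy/ and with a satisfiable body, so exclusivity fails. The repair is easy and is what the paper does implicitly (and explicitly in its intersection/composition exclusivity constructions): place both parallelograms strictly on one side of the identity line, e.g.\ by taking $c_i^h$ and $c_i^t$ with opposite signs and widths small enough that the regions are disjoint from their own mirror images; then neither symmetry nor inversion can be captured, and the rest of your argument (strict containment ruling out the reverse hierarchy, absence of a third relation ruling out intersection and composition) goes through.
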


The proof for Proposition~\ref{proposition:HierarchyExactExclusive} is straightforward, as the only positive patterns that contain at most two relations are symmetry, hierarchy, and inversion. Furthermore, since (i) Proposition~\ref{proposition:HierarchyExact} has shown that there is a relation configuration that can capture hierarchy exactly and (ii) it is simple to show that a \hypPara/ can subsume another \hypPara/ without one of them being the mirror image across the identity line of the other (inversion) or one of them being symmetric across the identity line (symmetry), we have shown that there is a relation configuration that captures hierarchy exactly and exclusively, proving Proposition~\ref{proposition:HierarchyExactExclusive}.

\begin{proposition}[Intersection (Exactly and Exclusively)]
Let $\bm{m_h} = (\bm{M}, \bm{f_h})$ be a relation configuration and $r_1, r_2, r_3 \in \bm{R}$ be relations where \intersection/ holds for any entities $X,Y \in \bm{E}$. Then $\bm{m_h}$ can capture \intersection/ exactly and exclusively.
\label{proposition:IntersectionExactExclusive}
\end{proposition}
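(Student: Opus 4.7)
The plan is to proceed analogously to Propositions~\ref{proposition:HierarchyExactExclusive} and \ref{proposition:InversionExactExclusive}. First I would invoke Proposition~\ref{proposition:IntersectionExact} to obtain a relation configuration $\bm{m_h}$ in which $\bm{f_h}(r_3)$ subsumes $\bm{f_h}(r_1) \cap \bm{f_h}(r_2)$ in every \corrSubpace/, so that the given intersection pattern is captured exactly. It then remains to show that the three \hypParas/ can be chosen so that no other positive pattern from Definition~\ref{def:Patterns} is accidentally captured.

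I would enumerate the positive patterns over the three relations $r_1, r_2, r_3$ that could potentially arise: (a) symmetry of some $r_i$, (b) inversion between some pair, (c) hierarchy between some pair, (d) further intersection patterns beyond the given one, for instance $r_1(X,Y) \land r_3(X,Y) \MyImplies r_2(X,Y)$ or $r_2(X,Y) \land r_3(X,Y) \MyImplies r_1(X,Y)$, and (e) composition patterns of the form \compDef/ or \realComp/ built from $r_1, r_2, r_3$. The plan is to exhibit a single \corrSubpace/ in which $\bm{f_h}(r_1), \bm{f_h}(r_2), \bm{f_h}(r_3)$ are given pairwise distinct \slopeVecs/ and generic centers so that (i) $\bm{f_h}(r_1) \cap \bm{f_h}(r_2) \subseteq \bm{f_h}(r_3)$, (ii) the two symmetric intersections from (d) fail --- witnessed by a point in $\bm{f_h}(r_1) \cap \bm{f_h}(r_3)$ outside $\bm{f_h}(r_2)$ and symmetrically for the other --- and (iii) none of the three \hypParas/ is symmetric across the identity line, the mirror image of another, or subsumes another. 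This rules out (a)--(d) via Propositions~\ref{proposition:SymmetryExact}, \ref{proposition:InversionExact}, \ref{proposition:HierarchyExact}, and \ref{proposition:IntersectionExact}.

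The main obstacle will be ruling out (e), since each ordered pair from $\{r_1, r_2, r_3\}$ gives rise to a \compDefRegion/ $\bm{s_d}$ described by the complex system of Inequalities~\ref{equation:CompDefConstruct1}-\ref{equation:CompDefConstruct6}, and by Proposition~\ref{proposition:GeneralCompositionExact} we must ensure that no third \hypPara/ subsumes any such $\bm{s_d}$. I would handle this by observing that when the \slopeVecs/ $\bm{r_i^h}, \bm{r_i^t}$ are chosen generically --- in particular so that the products $\bm{r_i^{t}} \bm{r_j^{t}}$ and $\bm{r_i^{h}} \bm{r_j^{h}}$ appearing in Inequalities~\ref{equation:CompDefConstruct1} and \ref{equation:CompDefConstruct3} produce directions of extent in the \virtualtripleSpace/ different from those of any of the three \hypParas/ --- the \compDefRegion/ of any pair extends along a direction along which none of $\bm{f_h}(r_1), \bm{f_h}(r_2), \bm{f_h}(r_3)$ is unbounded, hence cannot be subsumed by any of them. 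Exhibiting a single witness point in each $\bm{s_d}$ lying outside each of the three \hypParas/ suffices.

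Finally, to assemble the full relation configuration, I would combine the above construction in one \corrSubpace/ with auxiliary dimensions in which the three \hypParas/ cover the entire \corrSubpace/, so that the head and tail intervals (in the sense of Definition~\ref{def:HeadTailInterval}) of the auxiliary dimensions impose no further constraint. Since, by the argument in Section~\ref{sec:VirtualTripleSpace}, a pattern holds for $\bm{m_h}$ iff it is witnessed by the intersection behaviour in every \corrSubpace/, the single distinguished \corrSubpace/ already witnesses the failure of every unwanted positive pattern, completing the proof that $\bm{m_h}$ captures \intersection/ exactly and exclusively.
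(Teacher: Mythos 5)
Your overall skeleton matches the paper's: invoke Proposition~\ref{proposition:IntersectionExact} for exactness, then exhibit a concrete relation configuration and check that no unwanted positive pattern (mirror image, subsumption, composition) is captured. You are, if anything, more explicit than the paper in listing hierarchy and the two permuted intersection patterns among the things to rule out. The genuine gap is in your treatment of composition, which is exactly where the paper does something different and simpler. The paper's construction (Table~\ref{param:ExclusiveIntersection}) places every head interval in the negatives and every tail interval in the positives, so $\bm{T_{r_i, m_h}} \cap \bm{H_{r_j, m_h}} = \emptyset$ for every ordered pair; the body $r_i(x,y) \land r_j(y,z)$ is then never satisfiable, no \compDefRegion/ exists at all, and all unwanted composition patterns are excluded via the ``body not satisfied'' exception built into the definition of capturing exactly and exclusively. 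Your proposed mechanism --- that for generic slopes the \compDefRegion/ ``extends along a direction along which none of the three \hypParas/ is unbounded'' --- is incorrect as a general geometric claim: Inequalities~\ref{equation:CompDefConstruct5} and \ref{equation:CompDefConstruct6} confine $\bm{x}$ and $\bm{z}$ to intervals whenever $\bm{1} - \bm{r_i^{h}} \elemwiseProd \bm{r_i^{t}} \neq \bm{0}$, so the \compDefRegion/ is generically a \emph{bounded} region, and an argument based on directions of unboundedness cannot show that it escapes a subsuming \hypPara/.

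Your fallback --- exhibiting, for each ordered pair and each candidate head relation, a witness point of $\bm{s_d}$ outside the corresponding \hypPara/ --- is logically sufficient, but it is not carried out: it would require actually solving Inequalities~\ref{equation:CompDefConstruct1}--\ref{equation:CompDefConstruct6} for each of the candidate pairs, and this is precisely the step where concrete parameters are indispensable. Without either an explicit parameter table or the disjoint head/tail-interval device, the exclusivity half of the proposition remains unproven. As a minor point, the auxiliary dimensions in which the \hypParas/ cover the whole \corrSubpace/ are unnecessary: the paper's witness configuration is one-dimensional, and a single \corrSubpace/ in which each unwanted pattern fails is all that the definitions of Propositions~\ref{proposition:SymmetryExact}--\ref{proposition:GeneralCompositionExact} require.
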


\begin{proof}

    What is to be shown is that $\bm{m_h}$ can capture intersection (\intersection/) exactly and exclusively. We have already shown that $\bm{m_h}$ can capture \intersection/ exactly in Proposition~\ref{proposition:IntersectionExact}. Now, to show that $\bm{m_h}$ can capture intersection exactly and exclusively, we construct an instance of $\bm{m_h}$ such that (1) $\bm{m_h}$ captures intersection \intersection/ and (2) $\bm{m_h}$ does not capture any positive pattern $\phi$ such that \intersection/ $\not\models \phi$.

    \begin{table}[h!]
    \renewcommand*{\arraystretch}{1.3}
    \caption{One-dimensional relation embeddings of a relation configuration $\bm{m_h}$ that captures intersection (i.e., \intersection/) exactly and exclusively.}
    \label{param:ExclusiveIntersection}
	\centering
    \begin{tabular}{|c|r|r|r|r|r|r|}
	\hline
      & $\bm{c^{h}}$ & $\bm{d^{h}}$ & $\bm{r^{t}}$ & $\bm{c^{t}}$ & $\bm{d^{t}}$ & $\bm{r^{h}}$ \\ \hline
	$r_1$ & $-6$         & $2$          & $2$          & $8$          & $2$          & $3$          \\
	$r_2$ & $-11.5$      & $3$          & $5$          & $11$         & $3$          & $3$          \\
	$r_3$ & $-9.5$       & $5$          & $5$          & $9$          & $1$          & $3$          \\ \hline
	\end{tabular}
    \end{table}

    Figure~\ref{fig:ExclusiveIntersection} visualizes the \hypParas/ defined by the one-dimensional relation embeddings of Table~\ref{param:ExclusiveIntersection}. In particular, it displays the \hypParas/ of $r_1$, $r_2$, $r_3$. As can be easily seen in Figure~\ref{fig:ExclusiveIntersection} (and proven using Proposition~\ref{proposition:IntersectionExact}), the relation configuration $\bm{m_h}$ described by Table~\ref{param:ExclusiveIntersection} captures \intersection/ exactly, as $\bm{f_h(r_3)}$ subsumes the intersection of $\bm{f_h(r_1)}$ and $\bm{f_h(r_2)}$.
    
\begin{figure}[ht]
    \vskip 0.25in
    \begin{center}
    \centerline{\includegraphics[scale=1.4]{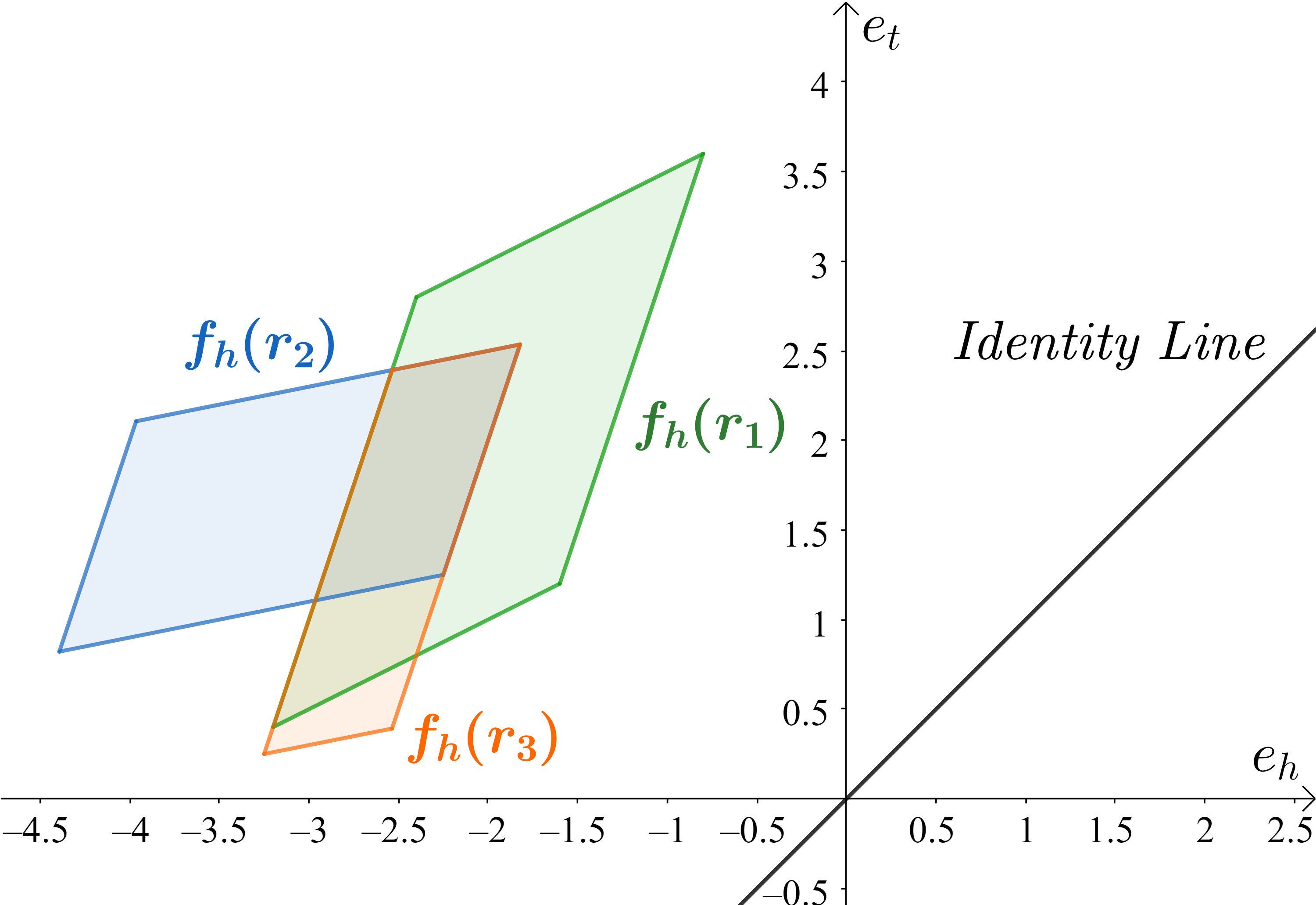}}
    \caption{Visualization of the relation configuration $\bm{m_h}$ described by Table~\ref{param:ExclusiveIntersection}.
    }
    \label{fig:ExclusiveIntersection}
    \end{center}
    \vskip -0.2in
\end{figure}%

    Now it remains to show that $\bm{m_h}$ does not capture any positive pattern $\phi$ such that \intersection/ $\not\models \phi$. To show this, we will show that (M) the mirror image of any relation \hypPara/ is not subsumed by any other relation \hypPara/ (i.e., no unwanted symmetry nor inversion pattern is captured) and (C) the \compDefRegion/ defined by any pair of \hypParas/ is not subsumed by any relation \hypPara/ (i.e., no unwanted composition pattern is captured). We do not need to show that (I) no unwanted relation \hypParas/ intersect, as by the nature of the intersection pattern, $\bm{f_h(r_1)}$, $\bm{f_h(r_2)}$, and $\bm{f_h(r_3)}$ should intersect.
    
    For (M), observe in Figure~\ref{fig:ExclusiveIntersection} that all \hypParas/ $\bm{f_h(r_1)}$, $\bm{f_h(r_2)}$, and $\bm{f_h(r_3)}$ of $\bm{m_h}$ are on the same side of the identity line. Thus, the mirror images of $\bm{f_h(r_1)}$, $\bm{f_h(r_2)}$, and $\bm{f_h(r_3)}$ across the identity line must be on the other side. Therefore, we have shown (M), i.e., that no relation \hypParas/ subsume the mirror image of any other relation \hypPara/ and thus that $\bm{m_h}$ does not capture any unwanted symmetry nor inversion pattern.
    
    For (C), observe in Figure~\ref{fig:ExclusiveIntersection} that for the displayed relation configuration $\bm{m_h}$, the head intervals of any relation \hypPara/ of $\bm{m_h}$ contain only negative values and the tail intervals contain only positive values. Thus, for any pair $(r_i, r_j) \in \{r_1, r_2, r_3\}^2$, there is no virtual assignment function $\bm{f_v}$ such that $\bm{m}$ over $\bm{m_h}$ and $\bm{f_v}$ captures $r_i(x, y)$ and $r_j(y, z)$ for arbitrary entities $x,y,z \in \bm{E}$. Therefore, no pair of relations $(r_i, r_j)$ defines a \compDefRegion/. Thus, we have shown (C) that no \compDefRegion/ is subsumed by any relation \hypPara/ (as no \compDefRegion/ exists) and thus that $\bm{m_h}$ does not capture any unwanted general composition pattern.
    
    By Proposition~\ref{proposition:IntersectionExact} and by proving (M) and (C), we have shown that the constructed relation configuration $\bm{m_h}$ of Table~\ref{param:ExclusiveIntersection} captures the intersection pattern \intersection/ and does not capture any positive pattern $\phi$ such that \intersection/ $\not\models \phi$. This means by the definition of capturing patterns exactly and exclusively that $\bm{m_h}$ captures intersection (\intersection/) exactly and exclusively, proving the proposition.
\end{proof}

\begin{proposition}[General Composition (Exactly and Exclusively)]
Let $r_1, r_2, r_3 \in \bm{R}$ be relations and let $\bm{m_h} = (\bm{M}, \bm{f_h})$ be a relation configuration, where $\bm{f_h}$ is defined over $r_1, r_2$, and $r_3$. Furthermore let $r_3$ be the composite relation of $r_1$ and $r_2$, i.e., \realComp/ holds for all entities $X,Y,Z \in \bm{E}$. Then $\bm{m_h}$ can capture \realComp/ exactly and exclusively.
\label{proposition:GeneralCompositionExclusive}
\end{proposition}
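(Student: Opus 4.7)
The plan is to mirror the construction used in Proposition~\ref{proposition:IntersectionExactExclusive}: exhibit an explicit one-dimensional relation configuration $\bm{m_h}$ realising \realComp/, and then rule out every other positive pattern by separately analysing mirror images (M), pairwise subsumptions and intersections (H, I) of the three relation \hypParas/, and all nine potential \compDefRegions/ (C) arising from the ordered pairs of relations in $\{r_1,r_2,r_3\}$. Proposition~\ref{proposition:GeneralCompositionExact} already yields the ``exactly'' half of the claim for any configuration in which $\bm{f_h}(r_3)$ subsumes the \compDefRegion/ defined by $\bm{f_h}(r_1)$ and $\bm{f_h}(r_2)$, so the whole effort goes into engineering a single configuration that additionally enforces exclusiveness.

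For the construction, I would pick parameters so that the three \hypParas/ live in three geometrically separated boxes of the \virtualtripleSpace/: $\bm{f_h}(r_1)$ with a negative head interval (e.g.\ $[-10,-5]$) and a small positive tail interval (e.g.\ $[0,5]$); $\bm{f_h}(r_2)$ with a head interval overlapping $r_1$'s tail (inside $[0,5]$) and a tail interval in a larger positive range (e.g.\ $[10,15]$); and $\bm{f_h}(r_3)$ with head interval inside $[-10,-5]$ and tail interval inside $[10,15]$. The slopes, centres, and widths of $\bm{f_h}(r_3)$ are then tuned so that the \compDefRegion/ $\bm{s_d}$ characterised by Inequalities~\ref{equation:CompDefConstruct1}--\ref{equation:CompDefConstruct6} of Theorem~\ref{theorem:CompDef} sits inside $\bm{f_h}(r_3)$, with $\bm{f_h}(r_3)$ strictly overshooting $\bm{s_d}$ on at least one boundary so that $r_3$ does not reduce to the compositionally defined relation. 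Since $\bm{s_d}$ is a bounded convex polygon, the subsumption reduces to a finite number of corner inequalities.

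The disjointness of the three interval boxes immediately yields (H) and (I): no two of $\bm{f_h}(r_1),\bm{f_h}(r_2),\bm{f_h}(r_3)$ intersect, so no hierarchy, intersection, or mutual-exclusion consequence is captured among the three. For (M), the mirror image of each $\bm{f_h}(r_i)$ across the identity line swaps its head and tail intervals, and the resulting region is disjoint from every $\bm{f_h}(r_j)$ by construction, ruling out symmetry and inversion. For (C), I use the interval trick from Proposition~\ref{proposition:IntersectionExactExclusive}: a composition $r_i \circ r_j$ produces a non-empty \compDefRegion/ only when the tail interval of $r_i$ overlaps the head interval of $r_j$. With the placements above, the only such overlap is $(i,j)=(1,2)$, so every other ordered pair gives an empty \compDefRegion/ and contributes no unwanted composition pattern.

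The expected main obstacle is the geometric balancing act on $\bm{f_h}(r_3)$: it must be wide enough to subsume $\bm{s_d}$, whose boundary slopes are dictated by those of $r_1$ and $r_2$ in a non-trivial way via Inequalities~\ref{equation:CompDefConstruct1}--\ref{equation:CompDefConstruct6}, yet confined enough to stay inside the prescribed $[-10,-5]\times[10,15]$ window so that no new \compDefRegion/ (arising, e.g., from $r_3\circ r_1$, $r_2\circ r_3$, or $r_3\circ r_3$) becomes non-empty and so that no accidental hierarchy between $r_3$ and $r_1$ or $r_2$ is introduced. Choosing the six parameters of $r_3$ consistently with all of these constraints, and then carrying out the routine verification that the corners of $\bm{s_d}$ satisfy Inequalities~\ref{equation:ParallelBoundarySpace1}--\ref{equation:ParallelBoundarySpace2} for $r_3$, is where the bulk of the remaining calculation lies.
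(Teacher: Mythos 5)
Your proposal is correct and follows essentially the same route as the paper's proof: an explicit one-dimensional configuration whose head/tail intervals overlap only for the pair $(r_1,r_2)$, with exclusiveness verified by the same three checks (mirror images for symmetry/inversion, pairwise disjointness for hierarchy/intersection, and the interval-overlap argument showing that only one \compDefRegion/ is non-empty and is subsumed by $\bm{f_h(r_3)}$). The paper simply instantiates the concrete parameters you leave as a ``balancing act'' in Table~\ref{param:ExclusiveGeneralComp} and reads the required containments off Figure~\ref{fig:ExclusiveGeneralComp}.
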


\begin{proof}
    What is to be shown is that $\bm{m_h}$ can capture general composition (\realComp/) exactly and exclusively. We have already shown that $\bm{m_h}$ can capture \realComp/ exactly in Proposition~\ref{proposition:GeneralCompositionExact}. Now, to show that $\bm{m_h}$ can capture general composition exactly and exclusively, we construct an instance of $\bm{m_h}$ such that (1) $\bm{m_h}$ captures general composition and (2) $\bm{m_h}$ does not capture any positive pattern $\phi$ such that \realComp/ $\not\models \phi$. 
    
        \begin{table}[h!]
    \renewcommand*{\arraystretch}{1.3}
    \caption{One-dimensional relation embeddings of a relation configuration $\bm{m_h}$ that captures general composition (i.e., \realComp/) and that captures \definingComp/ (i.e., \compDef/) exactly and exclusively.}
    \label{param:ExclusiveGeneralComp}
    \centering
    \begin{tabular}{|c|r|r|r|r|r|r|}
    \hline
          & $\bm{c^{h}}$ & $\bm{d^{h}}$ & $\bm{r^{t}}$ & $\bm{c^{t}}$ & $\bm{d^{t}}$ & $\bm{r^{h}}$ \\ \hline
    $r_1$ & $-6$         & $0$          & $2$          & $8$          & $5$          & $3$          \\
    $r_2$ & $-35$        & $5$          & $5$          & $-1$         & $2$          & $5$          \\
    $r_d$ & $-76$         & $10$          & $10$        & $14$          & $2$          & $2.5$        \\
    $r_3$ & $-46$         & $11$          & $6$        & $19$          & $6$          & $4$        \\
    \hline 
    \end{tabular}
    \end{table}

    Figure~\ref{fig:ExclusiveGeneralComp} visualizes the \hypParas/ defined by the one-dimensional relation embeddings of Table~\ref{param:ExclusiveGeneralComp}. In particular, it displays the \hypParas/ of $r_1$, $r_2$, $r_3$, and the \compDefRegion/ $\bm{s_d}$ of auxiliary relation $r_d$ such that \compDef/ holds for $\bm{f_h(r_1)}$, $\bm{f_h(r_2)}$, and $\bm{s_d}$. As can be easily seen in Figure~\ref{fig:ExclusiveGeneralComp} (and proven using Theorem~\ref{theorem:CompDef} and Proposition~\ref{proposition:GeneralCompositionExact}), the relation configuration $\bm{m_h}$ described by Table~\ref{param:ExclusiveGeneralComp} captures \realComp/ exactly, as $\bm{f_h(r_3)}$ subsumes the \compDefRegion/ $\bm{s_d}$.

\begin{figure}[ht]
    \vskip 0.25in
    \begin{center}
    \centerline{\includegraphics[scale=0.9]{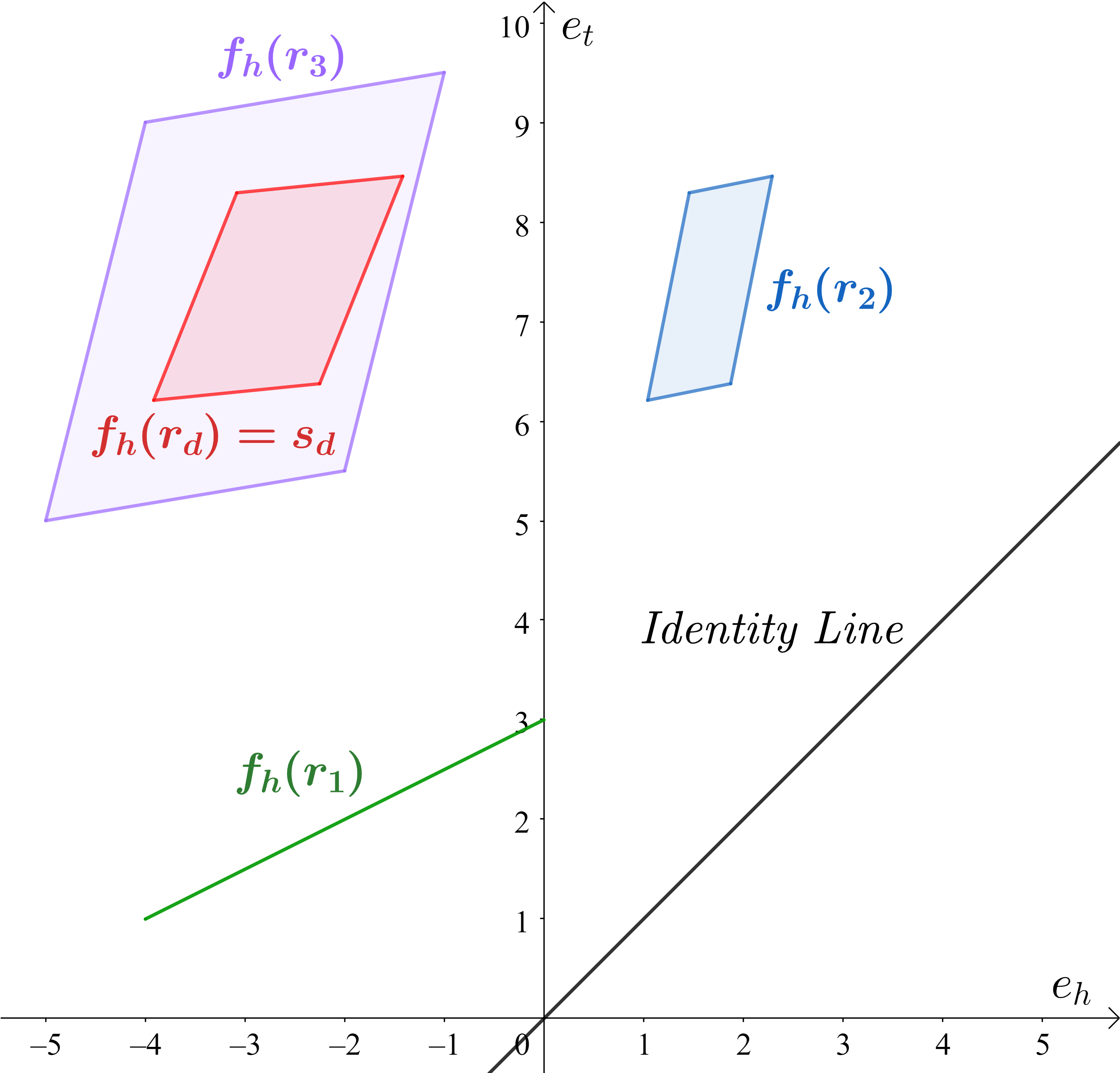}}
    \caption{Visualization of the relation configuration $\bm{m_h}$ described by Table~\ref{param:ExclusiveGeneralComp}.
    }
    \label{fig:ExclusiveGeneralComp}
    \end{center}
    \vskip -0.2in
\end{figure}%
    
    Now it remains to show that $\bm{m_h}$ does not capture any positive pattern $\phi$ such that \realComp/ $\not\models \phi$. To show this, we will show that (M) the mirror image of any relation \hypPara/ is not subsumed by any other relation \hypPara/ (i.e., no unwanted symmetry nor inversion pattern is captured), (I) no relation \hypParas/ intersect with each other (i.e., no unwanted hierarchy nor intersection pattern is captured), and (C) solely the \compDefRegion/ $\bm{s_d}$ defined by $\bm{f_h(r_1)}$ and $\bm{f_h(r_2)}$ is subsumed by $\bm{f_h(r_3)}$ and no other \compDefRegion/ is subsumed by any other relation \hypPara/ (i.e., no unwanted composition pattern is captured). 
    
    For (M), observe in Figure~\ref{fig:ExclusiveGeneralComp} that all \hypParas/ $\bm{f_h(r_1)}$, $\bm{f_h(r_2)}$, and $\bm{f_h(r_3)}$ of $\bm{m_h}$ are on the same side of the identity line. Thus, the mirror images of $\bm{f_h(r_1)}$, $\bm{f_h(r_2)}$, and $\bm{f_h(r_3)}$ across the identity line must be on the other side. Therefore, we have shown (M), i.e., that no relation \hypParas/ subsume the mirror image of any other relation \hypPara/ and thus that $\bm{m_h}$ does not capture any unwanted symmetry nor inversion pattern.
    
    For (I), observe in Figure~\ref{fig:ExclusiveGeneralComp} that no relation \hypParas/ $\bm{f_h(r_1)}$, $\bm{f_h(r_2)}$, and $\bm{f_h(r_3)}$ of $\bm{m_h}$ intersect with each other. 
    Thus, we have shown (I), i.e., that $\bm{m_h}$ does not capture any unwanted hierarchy nor intersection pattern.
    
    For (C), observe in Figure~\ref{fig:ExclusiveGeneralComp} that for the displayed relation configuration $\bm{m_h}$, the following head and tail intervals can be defined: (i) $\bm{H_{r_1, m_h}} = [-4, 0]$ and $\bm{T_{r_1, m_h}} = [1, 3]$, (ii) $\bm{H_{r_2, m_h}} = [1, 3]$ and $\bm{T_{r_2, m_h}} = [6, 9]$, and (iii) $\bm{H_{r_3, m_h}} = [-6, -1]$ and $\bm{T_{r_3, m_h}} = [4, 10]$. The tail intervals solely overlap with the head intervals for $\bm{T_{r_1, m_h}}$ and $\bm{H_{r_2, m_h}}$, i.e., $\bm{T_{r_i, m_h}} \cap \bm{H_{r_j, m_h}} = \emptyset, (r_i, r_j) \in \{r_1, r_2, r_3\}^2 \setminus (r_1, r_2)$. Thus, for any pair $(r_i, r_j) \in \{r_1, r_2, r_3\}^2 \setminus (r_1, r_2)$ there is no virtual assignment function $\bm{f_v}$ such that $\bm{m}$ over $\bm{m_h}$ and $\bm{f_v}$ captures $r_i(x, y)$ and $r_j(y, z)$ for arbitrary entities $x,y,z \in \bm{E}$. Therefore, $(r_1, r_2)$ is the only pair of relations that defines a \compDefRegion/, i.e., no other pair of relations defines a \compDefRegion/. Thus, we have shown (C) that no other \compDefRegion/ is subsumed by any other relation (as no other \compDefRegion/ exists) and thus that no unwanted composition pattern is captured by $\bm{m_h}$.
    
    By Proposition~\ref{proposition:GeneralCompositionExact} and by proving (I), (M), and (C), we have shown that the constructed relation configuration $\bm{m_h}$ of Table~\ref{param:ExclusiveGeneralComp} captures the general composition pattern \realComp/ and does not capture any positive pattern $\phi$ such that \realComp/ $\not\models \phi$. This means by the definition of capturing patterns exactly and exclusively that $\bm{m_h}$ captures general composition (\realComp/) exactly and exclusively, proving the proposition.
\end{proof}

\begin{proposition}[\definingCompCaps/ (Exactly and Exclusively)]
Let $r_1, r_2, r_d \in \bm{R}$ be relations and let $\bm{m_h} = (\bm{M}, \bm{f_h})$ be a relation configuration, where $\bm{f_h}$ is defined over $r_1, r_2$, and $r_d$. Furthermore, let $r_d$ be the compositionally defined relation of $r_1$ and $r_2$, i.e., \compDef/ holds for all entities $X,Y,Z \in \bm{E}$. Then $\bm{m_h}$ can capture \compDef/ exactly and exclusively.
\label{proposition:CompDefExclusive}
\end{proposition}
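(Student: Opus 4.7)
The plan is to mirror the strategy of Proposition~\ref{proposition:GeneralCompositionExclusive}, but adapted to the tighter \emph{equality} condition required by \compDef/. The exactness half is already handed to us by Proposition~\ref{proposition:CompDefExact}: it characterizes capturing \compDef/ exactly as the condition that $\bm{f_h}(r_d)$ coincides with the \compDefRegion/ $\bm{s_d}$ described by Inequalities~\ref{equation:CompDefConstruct1}--\ref{equation:CompDefConstruct6} of Theorem~\ref{theorem:CompDef}. It therefore suffices to exhibit a one-dimensional relation configuration over $r_1, r_2, r_d$ in which (i) the \compDefRegion/ collapses to a genuine \hypPara/ so that an admissible $\bm{f_h}(r_d)$ can be equal to it, and (ii) no accidental positive pattern outside the logical consequences of \compDef/ is captured.

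For (i), I would reuse the $r_1, r_2, r_d$ rows of Table~\ref{param:ExclusiveGeneralComp} (omitting $r_3$). The $r_d$ row is designed so that its head band corresponds exactly to Inequality~\ref{equation:CompDefConstruct1} and its tail band to Inequality~\ref{equation:CompDefConstruct4}. The verification amounts to computing the \compDefRegion/ from the $r_1, r_2$ parameters via Inequalities~\ref{equation:CompDefConstruct1}--\ref{equation:CompDefConstruct6} and checking that Inequalities~\ref{equation:CompDefConstruct2}, \ref{equation:CompDefConstruct3}, \ref{equation:CompDefConstruct5}, and \ref{equation:CompDefConstruct6} are redundant on the parallelogram defined by Inequalities~\ref{equation:CompDefConstruct1} and \ref{equation:CompDefConstruct4}; this is a finite linear check at the vertices of that parallelogram.

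For (ii), the exclusiveness analysis follows the (M)--(I)--(C) template of Proposition~\ref{proposition:GeneralCompositionExclusive}. In (M), all three \hypParas/ sit on the same side of the identity line, ruling out any unwanted symmetry or inversion. In (I), one verifies that $\bm{f_h}(r_1)$, $\bm{f_h}(r_2)$, and $\bm{f_h}(r_d)$ are pairwise disjoint in the \corrSubpace/, ruling out unintended hierarchy and intersection. In (C), one computes the head/tail intervals $\bm{H_{r_i,m_h}}, \bm{T_{r_i,m_h}}$ for $r_i \in \{r_1, r_2, r_d\}$ and checks that $(r_1, r_2)$ is the \emph{only} ordered pair for which $\bm{T_{r_i,m_h}} \cap \bm{H_{r_j,m_h}} \neq \emptyset$; then no other pair induces a nonempty \compDefRegion/, so no spurious composition pattern can be captured.

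The main obstacle is step (i). Unlike in Proposition~\ref{proposition:GeneralCompositionExclusive}, where $r_3$ merely has to \emph{subsume} $\bm{s_d}$ and can be chosen as a loose enclosing \hypPara/, here $\bm{f_h}(r_d)$ must coincide with $\bm{s_d}$ exactly; this places real constraints on $r_1$ and $r_2$, because the \compDefRegion/ is a priori an intersection of up to six bands and need not be a \hypPara/. A secondary subtlety in (C) is that one must also exclude compositions involving $r_d$ itself --- i.e., rule out that $(r_d, r_1)$, $(r_1, r_d)$, $(r_d, r_2)$, or $(r_2, r_d)$ induces a \compDefRegion/ subsumed by one of $r_1, r_2, r_d$. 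This does not follow from the proof of general composition and must be checked explicitly via the pairwise disjointness of the relevant head and tail intervals.
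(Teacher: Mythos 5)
Your proposal takes essentially the same route as the paper: the paper proves this proposition by direct analogy to Proposition~\ref{proposition:GeneralCompositionExclusive}, reusing the $r_1$, $r_2$, $r_d$ embeddings of Table~\ref{param:ExclusiveGeneralComp} with $\bm{f_h}(r_d)$ set equal to the \compDefRegion/ $\bm{s_d}$, followed by the same (M)--(I)--(C) exclusiveness analysis. The two points you flag as needing explicit verification --- that $\bm{s_d}$ genuinely collapses to a \hypPara/ for the chosen parameters, and that ordered pairs involving $r_d$ itself induce no spurious \compDefRegion/ --- are details the paper leaves implicit in its ``analogous'' argument, and your handling of both is sound.
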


The proof for Proposition~\ref{proposition:CompDefExclusive} is straightforward, as it can be proven analogously to Proposition~\ref{proposition:GeneralCompositionExclusive} with the only difference that instead of defining a relation embedding $\bm{f_h}(r_3)$ that subsumes the \compDefRegion/, we define the compositionally defined relation $r_d$ whose embedding $\bm{f_h}(r_d)$ is equal to the \compDefRegion/ $\bm{s_d}$. We have stated the relation embeddings for $r_d$ in Table~\ref{param:ExclusiveGeneralComp} and also visualized $\bm{f_h}(r_d)$ in Figure~\ref{fig:ExclusiveGeneralComp}.

Finally, the sum of Propositions~\ref{proposition:SymmetryExactExclusive}-\ref{proposition:CompDefExclusive} proves Theorems~\ref{Theorem:SetTheoreticPatterns} and \ref{Theorem:GeneralComp}. Thus, we have theoretically shown that ExpressivE can capture any pattern from Table~\ref{tab:CapturedPatterns} exactly and exclusively. 

\section{Extended Compositions}
\label{app:ExtendedComposition}

This section provides theoretical evidence that ExpressivE is not limited to capturing a single composition pattern. Specifically, we prove that ExpressivE can capture more than one application of a composition pattern. The following theoretical result is empirically backed up by further experimental results of Appendix~\ref{app:ExtendedCompositionEmpirical}. 

\begin{proposition}
Let $r_1, r_2, r_3,  r_{1,2}, r_{1,2,3} \in \bm{R}$ be relations and let $\bm{m_h} = (\bm{M}, \bm{f_h})$ be a relation configuration, where $\bm{f_h}$ is defined over $r_1, r_2, r_3,  r_{1,2}$, and $r_{1,2,3}$. Furthermore, let $r_1(X, Y) \land r_2(Y, Z) \MyImplies r_{1,2}(X, Z)$ and $r_{1,2}(X, Y) \land r_3(Y, Z) \MyImplies r_{1,2,3}(X, Z)$ hold for all entities $X,Y,Z \in \bm{E}$. Then $\bm{m_h}$ can capture $r_1(X, Y) \land r_2(Y, Z) \MyImplies r_{1,2}(X, Z)$ and $r_{1,2}(X, Y) \land r_3(Y, Z) \MyImplies r_{1,2,3}(X, Z)$ exactly and exclusively.
\label{proposition:ChainedCompositionExclusive}
\end{proposition}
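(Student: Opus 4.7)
The plan is to prove Proposition~\ref{proposition:ChainedCompositionExclusive} by chaining two applications of the machinery developed for a single composition step. First, construct embeddings for $r_1$ and $r_2$ along the lines of Table~\ref{param:ExclusiveGeneralComp}, and invoke Theorem~\ref{theorem:CompDef} to obtain the \compDefRegion/ $\bm{s_{1,2}}$ of the pair $(r_1, r_2)$ in the \virtualtripleSpace/. Embed $r_{1,2}$ as a \hypPara/ $\bm{f_h(r_{1,2})}$ that subsumes $\bm{s_{1,2}}$, so that by Proposition~\ref{proposition:GeneralCompositionExact} the first composition pattern is captured exactly. Then apply Theorem~\ref{theorem:CompDef} a second time to the pair $(r_{1,2}, r_3)$, producing a second \compDefRegion/ $\bm{s_{1,2,3}}$, and finally embed $r_{1,2,3}$ so that $\bm{f_h(r_{1,2,3})}$ subsumes $\bm{s_{1,2,3}}$. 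A second application of Proposition~\ref{proposition:GeneralCompositionExact} then captures the second composition pattern exactly.

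The remaining task is exclusiveness, for which I would follow the template of Proposition~\ref{proposition:GeneralCompositionExclusive}: verify conditions (M), (I), and (C) for all five \hypParas/. For (M) and (I), I place every \hypPara/ strictly on one side of the identity line and pairwise disjoint, by suitable choices of the centers and widths; this rules out symmetry, inversion, hierarchy, and intersection patterns. The substantive content lies in (C): I must arrange the head intervals $\bm{H_{r_i, m_h}}$ and tail intervals $\bm{T_{r_i, m_h}}$ (in the sense of Definition~\ref{def:HeadTailInterval}) so that $\bm{T_{r_i, m_h}} \cap \bm{H_{r_j, m_h}} \neq \emptyset$ holds \emph{only} for $(r_i, r_j) \in \{(r_1, r_2), (r_{1,2}, r_3)\}$. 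Arranging intervals in a staircase along the real line — e.g., $\bm{T_{r_1}} = \bm{H_{r_2}}$, $\bm{T_{r_2}} = \bm{H_{r_{1,2}}}$ (up to a small expansion to allow $\bm{f_h(r_{1,2})}$ to subsume $\bm{s_{1,2}}$), and $\bm{T_{r_{1,2}}} = \bm{H_{r_3}}$, with all other intervals pushed far away — forces every other pair to produce an empty \compDefRegion/, so no unintended composition is captured.

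The main obstacle is ensuring that enlarging $\bm{f_h(r_{1,2})}$ to subsume $\bm{s_{1,2}}$ is compatible with the subsequent requirement that $(r_{1,2}, r_3)$ yields a well-behaved \compDefRegion/ $\bm{s_{1,2,3}}$ with positive \slopeVecs/, while simultaneously keeping $\bm{f_h(r_{1,2})}$ disjoint from $\bm{f_h(r_1)}, \bm{f_h(r_2)}, \bm{f_h(r_3)}, \bm{f_h(r_{1,2,3})}$ and from their mirror images across the identity line. These constraints couple the slope, center, and width parameters of $r_{1,2}$ nontrivially. The cleanest way to decouple them is to use additional embedding dimensions, in each of which only a subset of the five relations imposes a non-trivial constraint while the others are embedded as \hypParas/ large enough to hold any required virtual pair embedding; this is the same flexibility already exploited in the proof of Theorem~\ref{Theorem:FullyExpressive} in Appendix~\ref{app:FullyExpressiveness}.

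Once this flexibility is in place, the construction reduces to a finite numerical verification in two or three \corrSubpaces/ — one to enforce each composition pattern and, if needed, a further one to break any spurious coincidence — entirely analogous to the explicit tables in Propositions~\ref{proposition:IntersectionExactExclusive} and \ref{proposition:GeneralCompositionExclusive}. Combining (M), (I), and (C) with the two applications of Proposition~\ref{proposition:GeneralCompositionExact} then yields that the constructed $\bm{m_h}$ captures both $r_1(X,Y) \land r_2(Y,Z) \MyImplies r_{1,2}(X,Z)$ and $r_{1,2}(X,Y) \land r_3(Y,Z) \MyImplies r_{1,2,3}(X,Z)$ exactly and exclusively, proving the proposition.
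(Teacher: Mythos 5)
Your overall strategy---an explicit construction, two invocations of Theorem~\ref{theorem:CompDef} and Proposition~\ref{proposition:GeneralCompositionExact}, followed by the (M)/(I)/(C) checks---is exactly the route the paper takes (it exhibits concrete one-dimensional embeddings in Table~\ref{param:ChainedComposition}). However, your sketched interval arrangement is wrong. The \compDefRegion/ $\bm{s_{1,2}}$ consists of pairs $(x,z)$ with $x$ ranging inside $\bm{H_{r_1, m_h}}$ and $z$ inside $\bm{T_{r_2, m_h}}$, so for $\bm{f_h(r_{1,2})}$ to subsume it you need $\bm{H_{r_{1,2}, m_h}}$ to sit over $\bm{H_{r_1, m_h}}$ and $\bm{T_{r_{1,2}, m_h}}$ to cover the relevant part of $\bm{T_{r_2, m_h}}$. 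Setting $\bm{H_{r_{1,2}, m_h}} = \bm{T_{r_2, m_h}}$ both places the head interval in the wrong location (making the required subsumption impossible) and manufactures the unwanted composable pair $(r_2, r_{1,2})$, defeating your own stated goal that only $(r_1,r_2)$ and $(r_{1,2},r_3)$ compose. Compare the paper's configuration: $\bm{H_{r_1}}=[-4,0]$, $\bm{T_{r_1}}=\bm{H_{r_2}}=[1,3]$, $\bm{T_{r_2}}=[6,9]$, $\bm{H_{r_{1,2}}}=[-6,-1]$, $\bm{T_{r_{1,2}}}=[4,9.7]$, $\bm{H_{r_3}}=[7,9]$, $\bm{T_{r_3}}=[10,12]$.

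The substantive gap is that you defer precisely the step the proposition asserts: you acknowledge that the slope, center, and width constraints on $r_{1,2}$ couple nontrivially and then wave at extra embedding dimensions without carrying the construction out. The paper resolves this differently: it stays in a single \corrSubpace/, writes down one explicit table of parameters, and accepts that more pairs than $(r_1,r_2)$ and $(r_{1,2},r_3)$ are composable---in its configuration $(r_2,r_3)$ and $(r_1, r^d_{2,3})$ also yield non-empty \compDefRegions/. Exclusiveness is then argued by showing the extra region $\bm{s^d_{2,3}}$ is disjoint from every relation \hypPara/, while $\bm{s^d_{(1,2),3}}$ and $\bm{s^d_{1,(2,3)}}$ (both parenthesizations of the three-step chain, which lie in the deductive closure of the two patterns) are verified to be subsumed by $\bm{f_h(r_{1,2,3})}$---a point your plan does not address. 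Until you either produce concrete numbers or prove that your stricter staircase is feasible, what you have is a proof plan rather than a proof.
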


\begin{proof}
    What is to be shown is that $\bm{m_h}$ can capture $\phi_1 := r_1(X, Y) \land r_2(Y, Z) \MyImplies r_{1,2}(X, Z)$ and $\phi_2 := r_{1,2}(X, Y) \land r_3(Y, Z) \MyImplies r_{1,2,3}(X, Z)$ exactly and exclusively. 
    To show that there is an $\bm{m_h}$ that captures $\phi_1$ and $\phi_2$ exactly and exclusively, we construct an instance of $\bm{m_h}$ such that (1) $\bm{m_h}$ captures $\phi_1$ and $\phi_2$ exactly, and (2) $\bm{m_h}$ does not capture any positive pattern $\psi$ such that $(\phi_1 \land \phi_2) \not\models \psi$. 
    
\begin{table}[h!]
    \renewcommand*{\arraystretch}{1.3}
    \caption{One-dimensional relation embeddings of a relation configuration $\bm{m_h}$ that captures two general compositions (i.e., $r_1(X,Y) \land r_2(Y,Z) \MyImplies r_{1,2}(X,Z)$ and $r_{1,2}(X,Y) \land r_3(Y,Z) \MyImplies r_{1,2,3}(X,Z)$) exactly and exclusively.}
    \label{param:ChainedComposition}
    \centering
    \begin{tabular}{|c|r|r|r|r|r|r|}
    \hline
          & $\bm{c^{h}}$ & $\bm{d^{h}}$ & $\bm{r^{t}}$ & $\bm{c^{t}}$ & $\bm{d^{t}}$ & $\bm{r^{h}}$ \\ \hline
    $r_1$ & $-6$         & $0$          & $2$          & $8$          & $5$          & $3$          \\
    $r_2$ & $-35$        & $5$          & $5$          & $-1$         & $2$          & $5$          \\
    $r_{1,2}$ & $-46$         & $11$          & $6$        & $19$          & $6$          & $4$     \\
    $r_3$ & $-45$         & $3$          & $5$        & $-20$          & $0$          & $4$        \\
    $r_{1,2,3}$ &    $-215$      &      $20$     &      $20$   &    $22$       &     $8$      & $4$  \\
    \hline 
    \end{tabular}
\end{table}
    
    Figure~\ref{fig:ChainedComposition} visualizes the \hypParas/ defined by the one-dimensional relation embeddings of Table~\ref{param:ChainedComposition}. In particular, it displays the \hypParas/ of $r_1$, $r_2$, $r_{1,2}$, $r_3$, $r_{1,2,3}$, and the \compDefRegions/ $\bm{s^d_{1,2}}$, $\bm{s^d_{2,3}}$, $\bm{s^d_{(1,2),3}}$, $\bm{s^d_{1,(2,3)}}$ of auxiliary relation $r^d_{1,2}$, $r^d_{2,3}$ $r^d_{(1,2),3}$, and $r^d_{1,(2,3)}$ such that $r_1(X,Y) \land r_2(Y,Z) \MyIff r^d_{1,2}(X,Z)$, $r_2(X,Y) \land r_3(Y,Z) \MyIff r^d_{2,3}(X,Z)$, $r_{1,2}(X,Y) \land r_3(Y,Z) \MyIff r^d_{(1,2),3}(X,Z)$, and $r_{1}(X,Y) \land r^d_{2,3}(Y,Z) \MyIff r^d_{1,(2,3)}(X,Z)$ hold for $\bm{f_h(r_1)}$, $\bm{f_h(r_2)}$, $\bm{f_h(r_{3})}$, $\bm{f_h(r_{1,2})}$, $\bm{f_h(r_{1,2,3})}$, $\bm{s^d_{1,2}}$, $\bm{s^d_{2,3}}$, $\bm{s^d_{(1,2),3}}$, and $\bm{s^d_{1,(2,3)}}$. Note that from $\phi_1$ and $\phi_2$ together with the auxiliary relation $r^d_{1,(2,3)}$ --- defined above --- follows that $r^d_{1,2}(X,Y) \MyImplies r_{1,2}(X,Y)$, $r^d_{(1,2),3}(X,Y) \MyImplies r_{1,2,3}(X,Y)$, and $r^d_{1,(2,3)}(X,Y) \MyImplies r_{1,2,3}(X,Y)$ need to be satisfied. Thus, as can be easily seen in Figure~\ref{fig:ChainedComposition} (and proven using Theorem~\ref{theorem:CompDef} and Proposition~\ref{proposition:GeneralCompositionExact}), the relation configuration $\bm{m_h}$ described by Table~\ref{param:ChainedComposition} captures $\phi_1$ and $\phi_2$ exactly, as $\bm{f_h(r_{1,2})}$ subsumes the \compDefRegion/ $\bm{s^d_{1,2}}$ and as $\bm{f_h(r_{1,2,3})}$ subsumes the \compDefRegions/ $\bm{s^d_{(1,2),3}}$ and $\bm{s^d_{1,(2,3)}}$.

\begin{figure}[ht]
    \vskip 0.25in
    \begin{center}
    \centerline{\includegraphics[scale=0.8]{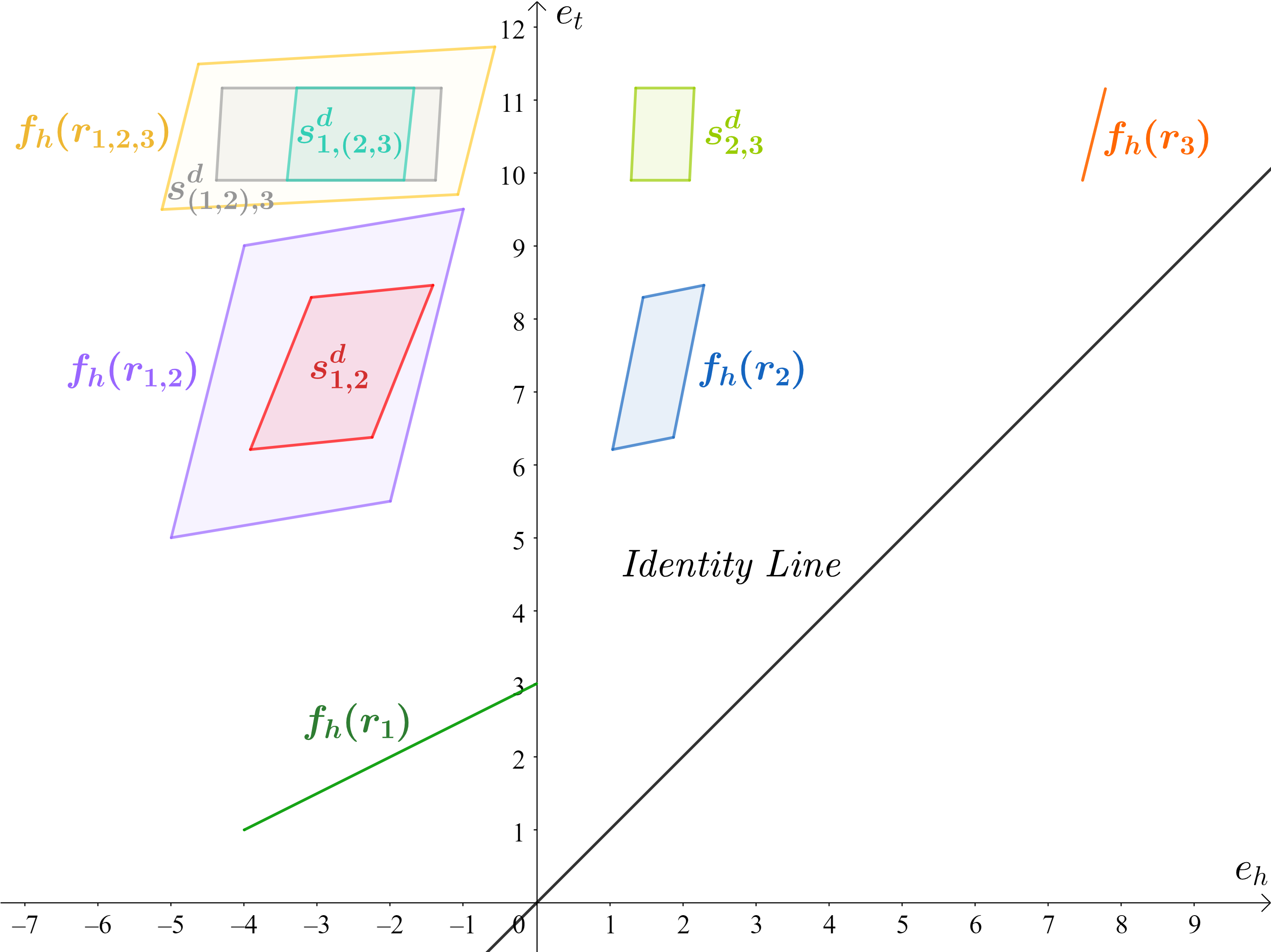}}
    \caption{Visualization of the relation configuration $\bm{m_h}$ described by Table~\ref{param:ChainedComposition}.
    }
    \label{fig:ChainedComposition}
    \end{center}
    \vskip -0.2in
\end{figure}%

    Now it remains to show that $\bm{m_h}$ does not capture any positive pattern $\psi$ such that $(\phi_1 \land \phi_2) \not\models \psi$. To show this, we will show that (M) the mirror image of any relation \hypPara/ is not subsumed by any other relation \hypPara/ (i.e., no unwanted symmetry nor inversion pattern is captured), (I) no relation \hypParas/ intersect with each other (i.e., no unwanted hierarchy nor intersection pattern is captured), and (C) solely that $\bm{s^d_{1,2}} \subseteq \bm{f_h(r_{1,2})}$ and $(\bm{s^d_{(1,2),3}} \cup \bm{s^d_{1,(2,3)}}) \subseteq \bm{f_h(r_{1,2,3})}$ are satisfied, and no other \compDefRegion/ is subsumed by any other relation \hypPara/ (i.e., no unwanted composition pattern is captured). 
    
    For (M), observe in Figure~\ref{fig:ChainedComposition} that all \hypParas/ $\bm{f_h(r_1)}$, $\bm{f_h(r_2)}$, $\bm{f_h(r_3)}$, $\bm{f_h(r_{1,2})}$, and $\bm{f_h(r_{1,2,3})}$ of $\bm{m_h}$ are on the same side of the identity line. Thus, the mirror images of any of these \hypParas/ across the identity line must be on the other side. Therefore, we have shown (M), i.e., that no relation \hypParas/ subsume the mirror image of any other relation \hypPara/ and thus that $\bm{m_h}$ does not capture any unwanted symmetry nor inversion pattern.
    
    For (I), observe in Figure~\ref{fig:ChainedComposition} that no relation \hypParas/ $\bm{f_h(r_1)}$, $\bm{f_h(r_2)}$, $\bm{f_h(r_3)}$, $\bm{f_h(r_{1,2})}$, and $\bm{f_h(r_{1,2,3})}$ of $\bm{m_h}$ intersect with each other. 
    Thus, we have shown (I), i.e., that $\bm{m_h}$ does not capture any unwanted hierarchy nor intersection pattern.
    
    For (C), recall Definition~\ref{def:HeadTailInterval}, describing head and tail intervals. We observe in Figure~\ref{fig:ChainedComposition} that for the displayed relation configuration $\bm{m_h}$, the following head and tail intervals can be defined: (i) $\bm{H_{r_1, m_h}} = [-4, 0]$ and $\bm{T_{r_1, m_h}} = [1, 3]$, (ii) $\bm{H_{r_2, m_h}} = [1, 3]$ and $\bm{T_{r_2, m_h}} = [6, 9]$, (iii) $\bm{H_{r_{1,2}, m_h}} = [-6, -1]$ and $\bm{T_{r_{1,2}, m_h}} = [4, 9.7]$, (iv) $\bm{H_{r_{3}, m_h}} = [7, 9]$ and $\bm{T_{r_{3}, m_h}} = [10, 12]$, (v) $\bm{H_{r_{1,2,3}, m_h}} = [-6, 0]$ and $\bm{T_{r_{1,2,3}, m_h}} = [9.8, 12]$, and (vi) $\bm{H_{r^d_{2,3}, m_h}} = [1, 3]$ and $\bm{T_{r^d_{2,3}, m_h}} = [9.8, 12]$. The tail intervals solely overlap with the head intervals for the pairs $\{(r_1, r_2), (r_2, r_3), (r_{1,2}, r_3), (r_{1}, r^d_{2,3})\}$, i.e., $\bm{T_{r_i, m_h}} \cap \bm{H_{r_j, m_h}} = \emptyset, (r_i, r_j) \in \{r_1, r_2, r_3\}^2 \setminus \{(r_1, r_2), (r_2, r_3), (r_{1,2}, r_3), (r_{1}, r^d_{2,3})\}$. Thus, for any pair $(r_i, r_j) \in \{r_1, r_2, r_3\}^2 \setminus \{(r_1, r_2), (r_2, r_3), (r_{1,2}, r_3), (r_{1}, r^d_{2,3})\}$ there is no virtual assignment function $\bm{f_v}$ such that $\bm{m}$ over $\bm{m_h}$ and $\bm{f_v}$ captures $r_i(x, y)$ and $r_j(y, z)$ for arbitrary entities $x,y,z \in \bm{E}$. Therefore, $\{(r_1, r_2), (r_2, r_3), (r_{1,2}, r_3), (r_{1}, r^d_{2,3})\}$ are the only pairs of relations that define a \compDefRegion/, i.e., no other pair of relations defines a \compDefRegion/. Thus we have shown that (1) $\bm{m_h}$ captures $\phi_1$ and $\phi_2$ exactly --- since $\bm{s^d_{1,2}} \subseteq \bm{f_h(r_{1,2})}$ and $(\bm{s^d_{(1,2),3}} \cup \bm{s^d_{1,(2,3)}}) \subseteq \bm{f_h(r_{1,2,3})}$ --- and (2) the only other existing compositionally defined region $\bm{s^d_{2,3}}$ is disjoint with any other relation \hypParas/. By (1) and (2), we have shown (C) that no other \compDefRegion/ (specifically $\bm{s^d_{1,2}}$) is subsumed by any other relation and thus that no unwanted composition pattern is captured by $\bm{m_h}$.
    
    By proving that the constructed $\bm{m_h}$ captures $\phi_1$ and $\phi_2$ exactly and by (I), (M), and (C), we have shown that the constructed relation configuration $\bm{m_h}$ of Table~\ref{param:ChainedComposition} captures $\phi_1$ and $\phi_2$ and does not capture any positive pattern $\psi$ such that $(\phi_1 \land \phi_2) \not\models \psi$. This means by the definition of capturing patterns exactly and exclusively that $\bm{m_h}$ captures $\phi_1$ and $\phi_2$ exactly and exclusively, proving the proposition.
\end{proof}

\section{Additional Experiments}
\label{app:AdditionalExperiments}

This section presents additional experiments, providing further empirical evidence for our theoretical results. Specifically, Section~\ref{app:CardinalityExp} studies the benchmark performances of ExpressivE and its closest relatives on WN18RR stratified by the cardinality of each relation, providing empirical evidence that ExpressivE performs well on 1-1, 1-N, N-1, and N-N relations. Section~\ref{app:EmpiricalGeneralComp} provides empirical evidence that ExpressivE can capture general composition and provides empirical support for a link between ExpressivE's significant performance gain on WN18RR and inference capabilities. Finally, Section~\ref{app:ExtendedCompositionEmpirical} discusses empirical results, revealing that ExpressivE can reason over more than one step of composition patterns.

\subsection{Cardinality Experiments}
\label{app:CardinalityExp}

This section provides empirical evidence for our theoretical result that ExpressivE performs well on 1-N, N-1, and N-N relations.
 
\textbf{Experiment Setup.} Following the procedure of \citet{TransE}, we have categorized the relations of WN18RR into four cardinality classes, specifically 1-1, 1-N, N-1, and N-N. As in \citet{TransE}, we have classified a relation $r \in \bm{R}$ by computing:
\begin{itemize}
    \item $\mu_{rt}$ the averaged number of head entities $h \in \bm{E}$ per tail entity $t \in \bm{E}$, appearing in a triple $r(h, t)$ of WN18RR.
    \item $\mu_{rh}$ the averaged number of tail entities $t \in \bm{E}$ per head entity $h \in \bm{E}$, appearing in a triple $r(h, t)$ of WN18RR.
\end{itemize}

Following the soft classification of \citet{TransE}, a relation is:
\begin{itemize}
    \item \textbf{1-1} if $\mu_{rt} \leq 1.5$ and $\mu_{rh} \leq 1.5$
    \item \textbf{1-N} if $\mu_{rt} \leq 1.5$ and $\mu_{rh} \geq 1.5$
    \item \textbf{N-1} if $\mu_{rt} \geq 1.5$ and $\mu_{rh} \leq 1.5$
    \item \textbf{N-N} if $\mu_{rt} \geq 1.5$ and $\mu_{rh} \geq 1.5$
\end{itemize}

\begin{table}[h!]
\caption{MRR of ExpressivE, RotatE, and BoxE on WN18RR stratified by cardinality classes (1-1, 1-N, N-1, N-N). The best results are bold, and the second-best are underlined.}
\label{tab:Card}
\centering
\begin{tabular}{lllllllll}
\toprule
Task          & \multicolumn{4}{c}{\textbf{Predicting Head}} & \multicolumn{4}{c}{\textbf{Predicting Tail}} \\
\cmidrule(lr){2-5} \cmidrule(lr){6-9}
Cardinality & 1-1     & 1-N     & N-1    & N-N    & 1-1     & 1-N     & N-1    & N-N    \\
ExpressivE    & \textbf{0.976}   & \underline{0.290}   & \underline{0.105}  & \textbf{0.941}  & \textbf{0.976}   & \underline{0.141}   & \textbf{0.327}  & \textbf{0.938}  \\
RotatE        & 0.833   & \textbf{0.294}   & 0.103  & \underline{0.930}  & 0.875   & 0.107   & \underline{0.288}  & \underline{0.925}  \\
BoxE          & \underline{0.877}   & 0.272   & \textbf{0.146}  & 0.883  & \underline{0.893}   & \textbf{0.147}   & 0.246  & 0.884 \\
\bottomrule
\end{tabular}
\end{table}

\textbf{Results.} Table~\ref{tab:Card} summarizes the performance results of ExpressivE and its closest spatial relative BoxE and functional relative RotatE on WN18RR, stratified by the four cardinality classes defined previously. It reveals that ExpressivE almost exclusively reaches a SotA or close-to-SotA performance on 1-N, N-1, and N-N relations. In particular, ExpressivE outperforms both RotatE and BoxE consistently on N-N relations, which are often considered the most complex relations to capture in KGC with regard to cardinalities. Thus, Table~\ref{tab:Card} provides empirical results supporting our theoretical claim that ExpressivE can capture 1-1, 1-N, N-1, and N-N relations well.

\subsection{General Composition and Link to Performance Gain}
\label{app:EmpiricalGeneralComp}

This section provides empirical evidence for the theoretical result of Appendices~\ref{app:CapturingPatterns} and \ref{app:Exclusively} that ExpressivE can capture general composition exactly and exclusively. Even more, the experiments of this section give evidence for a direct link between the support of general composition and ExpressivE's performance gain on WN18RR. In the following, we first discuss our experiments' preparation and setup details, followed by the considered hypotheses and final results.

\textbf{Pattern Identification.} Our first goal, to provide empirical evidence for the discussed points, was to identify patterns occurring in WN18RR. To reach this goal, we have analyzed patterns mined with AMIE+ \citep{AMIE+} from WN18RR by \citet{RealisticReEval} that were provided in a GitHub repository\footnote{https://github.com/idirlab/kgcompletion}. To identify the most relevant patterns, we have --- similar to the discussion of \citep{AMIE, AMIE+} --- sorted the patterns $\rho = \phi_{B1} \land \dots \land \phi_{Bm} \MyImplies r(X,Y)$ by their head coverage $h(\rho)$, which is formally defined as \citep{AMIE}:
\begin{align*}
    h(\rho)=\frac{|\{(x,y) \in \bm{E}^2 \mid  r(x,y) \in \bm{G} \land \exists z_1 \dots z_k (\phi_{B1}(z_1, z_2) \in \bm{G} \land \dots \land \phi_{Bm}(z_{k-1}, z_k) \in \bm{G}) \}|}{
    |\{(x,y) \in \bm{E}^2 \mid r(x,y) \in \bm{G}\}|}
\end{align*}

On an intuitive level, the head coverage $h(\rho)$ represents the ratio of true triples implied by the pattern $\rho$ on a given knowledge graph $(\bm{G}, \bm{E}, \bm{R})$. 

\textbf{Pattern Selection.} To analyze the most relevant patterns in the following experiments, we have selected any patterns whose head coverage is greater than 15\% (as inspection of the head coverage of AMIE shows a very low number of inferred triples contained in the test set below that). From these patterns, we have left out any pattern with the head relation $\mathit{\_similar\_to}$, as ExpressivE, BoxE, and RotatE already have an MRR of $1$ on this relation, thus further stratifying $\mathit{\_similar\_to}$'s test triples will not reveal novel information. This procedure leads to the following set of patterns, where relations $r^{-1}$ represent the inverse counterpart of relations $r \in \bm{R}$:

\begin{align*}
S_1 := &\;\mathit{\_verb\_group}(Y, X) \MyImplies \mathit{\_verb\_group}(X, Y) \\
C_2 := &\;\mathit{\_derivationally\_related\_form}(X,Y) \;\land \\ &\;\mathit{\_derivationally\_related\_form}(Y,Z) \MyImplies \mathit{\_verb\_group}(X,Z) \\
C_3 := &\;\mathit{\_derivationally\_related\_form}(X,Y) \;\land \\ &\;\mathit{\_derivationally\_related\_form}^{-1}(Y, Z) \MyImplies \_verb\_group(X, Z) \\
C_4 := &\;\mathit{\_derivationally\_related\_form}^{-1}(X, Y) \;\land \\
&\;\mathit{\_derivationally\_related\_form}(Y, Z) \MyImplies \mathit{\_verb\_group}(X, Z) \\
C_5 := &\;\mathit{\_also\_see}(X,Y) \land \mathit{\_also\_see}(Y, Z) \MyImplies \mathit{\_also\_see}(X, Z) \\
C_6 := &\;\mathit{\_also\_see}(X,Y) \land \mathit{\_also\_see}^{-1}(Y, Z) \MyImplies \mathit{\_also\_see}(X, Z) \\
S_7 := &\;\mathit{\_also\_see}(Y, X) \MyImplies \mathit{\_also\_see}(X, Y) \\
C_8 := &\;\mathit{\_hypernym}(X, Y) \;\land\\
&\; \mathit{\_synset\_domain\_topic\_of}(Y, Z) \MyImplies \mathit{\_synset\_domain\_topic\_of}(X, Z)
\end{align*}

\textbf{Experimental Setup.} For each of these patterns $\rho$ we have computed all triples that (i) can be derived by $\rho$ from the data known to our model and (ii) are known to be true in the KG, yet unseen to our models. Thus, for each pattern $\rho$, we have computed the set $s_\rho$, containing all triples that (i) can be derived with $\rho$ from the training set and (ii) are contained in the test set of WN18RR. We have used each of the computed sets of triples $s_\rho$ to evaluate the performance of ExpressivE, BoxE, and RotatE on the corresponding pattern $\rho$. 

\textbf{Hypotheses.} Note that (as discussed in Appendix~\ref{app:AnalysisOfFunctionalModels}) compositional definition $r_1(X,Y) \land r_2(Y,Z) \MyIff r_3(X,Z)$ defines the triples of the composite relation $r_3$ completely, whereas general composition $r_1(X,Y) \land r_2(Y,Z) \MyImplies r_3(X,Z)$ allows $r_3$ to contain more triples than those that the compositional definition pattern can directly infer. Thus, if ExpressivE captures general composition and if RotatE captures compositional definition, we expect the following behavior:
\begin{itemize}
    \item \textbf{H1.} RotatE will perform well solely on relations occurring as the head of maximally one composition pattern, as RotatE solely supports compositional definition.
    \item \textbf{H2.} ExpressivE will perform well even when a relation is defined by multiple composition patterns and/or multiple other patterns since ExpressivE supports general composition.

\end{itemize}
 
\begin{table}[h!]
\caption{MRR of ExpressivE, RotatE, and BoxE on WN18RR stratified by patterns $S_1$-$C_8$. $S_i$ represents a [S]ymmetry pattern, $C_i$ a [C]omposition pattern ($i \in \{1,\dots,8\}$).}
\label{tab:GeneralCompRules}
\centering
\begin{tabular}{lcccccccc}
\toprule
 Head Rel. & \multicolumn{4}{c}{\textbf{\_verb\_group}} & \multicolumn{3}{c}{\textbf{\_also\_see}} &   \multicolumn{1}{c}{\textbf{\_syn\_dto}}        \\
 \cmidrule(lr){2-5} \cmidrule(lr){6-8} \cmidrule{9-9}
Model                             & $S_1$ & $C_2$ & $C_3$ & $C_4$ & $C_5$ & $C_6$ & $S_7$ & $C_8$ \\
Base Exp.              & \textbf{1.000}  & \textbf{1.000}  & \textbf{1.000}  & \textbf{1.000} & \textbf{0.818}  & \textbf{0.907}  & \textbf{0.985} & \textbf{0.621}  \\
RotatE                       & 0.865  & 0.760  & 0.760  & 0.760 & 0.771  & 0.893  & 0.975 & 0.599  \\
BoxE                         & 0.906  & 0.801  & 0.806  & 0.806 & 0.632  & 0.645  & 0.727 & 0.547 \\
\bottomrule
\end{tabular}
\end{table}
 
\textbf{Results.} Table~\ref{tab:GeneralCompRules} lists for each pattern $S_1$ to $C_8$ the performances of BoxE, RotatE, and ExpressivE on $s_\rho$, where $\rho \in \{S_1, \dots, C_8\}$ and where $S_i$ represents a symmetry pattern and $C_i$ represents a composition pattern.  Table~\ref{tab:GeneralCompRules} provides evidence for both hypotheses:

\begin{itemize}
    \item \textbf{Evidence for H1.} In the case of the relation $\mathit{\_synset\_domain\_topic\_of}$ ($\mathit{\_syn\_dto}$), there is only one pattern that has $\mathit{\_synset\_domain\_topic\_of}$ as its head relation, specifically the composition pattern $C_8$. RotatE achieves comparable performance to ExpressivE on $s_{C_8}$ as RotatE is capable of defining $\mathit{\_synset\_domain\_topic\_of}$ using compositional definition, providing evidence for H1. 
    \item \textbf{Evidence for H2.} Yet, when a relation is defined via multiple patterns, RotatE’s performance decreases drastically on most composition patterns compared to ExpressivE’s performance, as can be seen for the patterns $C_2$, $C_3$, $C_4$, and $C_5$, giving evidence for H2.
\end{itemize}

\textbf{Conclusion} Thus, these experiments provide empirical evidence for (1) ExpressivE can capture general composition, as ExpressivE and RotatE perform as expected by H1 and H2 under the assumption that ExpressivE captures general composition and that RotatE captures compositional definition. Furthermore, the experiments also provide evidence for (2) ExpressivE’s ability to capture general composition contributes to the performance gain on WN18RR, as ExpressivE consistently outperforms RotatE and BoxE on the predicted triples of composition patterns.

\subsection{Multiple Steps of Composition}
\label{app:ExtendedCompositionEmpirical}

In this section, we provide empirical evidence for the theoretical results of Appendix~\ref{app:ExtendedComposition}. To evaluate how well ExpressivE supports more than one step of a composition pattern, our first goal was to identify multi-step patterns (i.e., patterns that can be ``chained'' in multiple steps) occurring in WN18RR.
We now recall parts of Appendix~\ref{app:EmpiricalGeneralComp} for the self-containedness of this section -- readers who have read that section can skip ahead to the ``experimental setup'' paragraph. To reach the goal of identifying multi-step patterns occurring in WN18RR, we have analyzed patterns mined with AMIE+ \citep{AMIE+} from WN18RR by \citet{RealisticReEval} that were provided in a GitHub repository\footnote{https://github.com/idirlab/kgcompletion}. To identify the most relevant patterns, we have --- similar to the discussion of \mbox{\citep{AMIE, AMIE+}} --- sorted the patterns $\rho = \phi_{B1} \land \dots \land \phi_{Bm} \MyImplies r(X,Y)$ by their head coverage $h(\rho)$, which is formally defined as \citep{AMIE}:
\begin{align*}
    h(\rho) = \frac{|\{(x,y) \in \bm{E}^2 \mid r(x,y) \in \bm{G} \land \exists z_1 \dots z_k (\phi_{B1}(z_1, z_2) \in \bm{G} \land \dots \land \phi_{Bm}(z_{k-1}, z_k) \in \bm{G}) \}|}{
    |\{(x,y) \in \bm{E}^2 \mid r(x,y) \in \bm{G}\}|}
\end{align*}

On an intuitive level, the head coverage $h(\rho)$ represents the ratio of true triples implied by the pattern $\rho$ on a given knowledge graph $(\bm{G}, \bm{E}, \bm{R})$.

Next, we present the four multi-step patterns with head coverage of at least 15\%, as discussed in Appendix~\ref{app:EmpiricalGeneralComp}:
\begin{align*}
    R_1 :=\;& \mathit{\_hypernym}(X,Y) \;\land \\
    \;& \mathit{\_synset\_domain\_topic\_of}(Y, Z) \MyImplies \mathit{\_synset\_domain\_topic\_of}(X, Z) \\
    R_2 :=\;& \mathit{\_also\_see}(X,Y) \land \mathit{\_also\_see}(Y, Z) \MyImplies \mathit{\_also\_see}(X, Z) \\
    R_3 :=\;& \mathit{\_also\_see}(X,Y) \land \mathit{\_also\_see}^{-1}(Y, Z) \MyImplies \mathit{\_also\_see}(X, Z) \\
    R_4 :=\;& \mathit{\_also\_see}^{-1}(X, Y) \land \mathit{\_also\_see}^{-1}(Y, Z) \MyImplies \mathit{\_also\_see}(X, Z)
\end{align*}

The relation $\textit{\_also\_see}^{-1}$ of $R_3$ and $R_4$ represents the inverse relation of $\textit{\_also\_see}$. 

\textbf{Experimental Setup.} For each of the selected multi-step patterns $\rho \in \{R_1, R_2, R_3, R_4\}$, we have generated three datasets, the $\textit{1-Step}$, $\textit{2-Steps}$, and $\textit{3-Steps}$ sets.
Specifically, we have generated for each $\rho$ a $\textit{j-Step(s)}$ set by computing all triples that (i) can be derived by $\rho$ in $j$ steps from the data known to our model and (ii) are known to be true in the KG, yet unseen to our model. Thus, we have computed for each $\rho$ a $\textit{j-Step(s)}$ set, containing all triples that (i) can be derived with $\rho$ by $j$ applications on the training set and (ii) are contained in the test set of WN18RR. 
The performance of ExpressivE on the computed datasets is summarised in Table~\ref{tab:ChainRules}.

\begin{table}[h!]
\caption{ExpressivE's MRR on WN18RR in dependence on the number of reasoning steps. Hyphens represent that no new triples can be inferred with additional steps.}
\label{tab:ChainRules}
\centering
\begin{tabular}{lccccccccc}
\toprule & $\textit{1-Step}$ & $\textit{2-Steps}$ & $\textit{3-Steps}$ & $\textit{4-Steps+}$ \\
$R_1$              & \textbf{0.627} & 0.621 & - & - \\
$R_2$              & 0.720 & 0.804 & \textbf{0.818} & - \\
$R_3$              & 0.768 & \textbf{0.907} & - & - \\
$R_4$              & 0.716 & \textbf{0.922} & - & - \\
\bottomrule
\end{tabular}
\end{table}
 
\textbf{Results.} We report the performance of at most two steps of $R_1$/$R_3$/$R_4$ as after applying $R_1$/$R_3$/$R_4$ twice on the training set; no new triples are derived. Similarly, no new triples are derived after at most three steps of $R_2$ on the training set. We can see that the performance of ExpressivE increases by a large margin when more than one step of reasoning is considered, depicted by the performance gain of the 2-Steps and 3-Steps set over the 1-Step set. Interestingly, a small exception for this is $R_1$, where we see a slightly worse behavior -- inspection of the results shows that this is due to a single triple. In total, Table~\ref{tab:ChainRules} provides empirical evidence that ExpressivE can capture chained composition patterns and thus perform more than one step of reasoning.

\section{Details of the Distance Function}
\label{app:DistanceFunction}

In this section, we give additional details on the distance function of Equation~\ref{eq:distance}.
As in Section~\ref{sec:VirtualTripleSpace}, let $\bm{\tau_{r_i(h,t)}}$ denote the embedding of a triple $r_i(h, t)$, i.e. $\bm{\tau_{r_i(h,t)}} = (\bm{e_{ht}} - \bm{c_i^{ht}} - \bm{r_i^{th}} \elemwiseProd \bm{e_{th}})^{|.|}$, with $\bm{e_{xy}} = (\bm{e_{x}} || \bm{e_{y}})$ and $\bm{a_i^{xy}} = (\bm{a_i^{x}} || \bm{a_i^{y}})$ for $\bm{a} \in \{\bm{c}, \bm{r} , \bm{d}\}$ and $\bm{x}, \bm{y} \in \{\bm{h}, \bm{t}\}$. 

The distance function $D: \bm{E} \times \bm{R} \times \bm{E} \rightarrow \mathbb{R}^{2d}$ of Equation~\ref{eq:distance} — measuring the distance of entity pair embeddings (points) to relation embeddings (\hypParas/) — is split into two parts: 

\begin{itemize}
    \item $D_i(h,r_i,t) = \bm{\tau_{{r_i}(h,t)}} \elemwiseDiv \bm{w_i}$ for points inside the corresponding relation \hypPara/, i.e., $\bm{\tau_{r_i(h, t)}} \elemwiseLeq \bm{d_i}$. 
    \item $D_o(h,r_i,t)  = \bm{\tau_{{r_i}(h,t)}} \elemwiseProd \bm{w_i} - \bm{k}$ for points outside the corresponding relation \hypPara/, i.e., $\bm{\tau_{r_i(h, t)}} \not\elemwiseLeq \bm{d_i}$. 
\end{itemize}

\textbf{Intuition.} As briefly explained in Section~\ref{sec:VirtualTripleSpace}, the general idea of splitting the distance function is to assign high scores to entity pair embeddings within a \hypPara/ and low scores to entity pair embeddings outside the \hypPara/.
Specifically, if a triple $r_i(h,t)$ is captured to be true by an ExpressivE embedding, i.e., if $\bm{\tau_{r_i(h,t)}} \elemwiseLeq \bm{d_i^{ht}}$, then the distance correlates inversely with the \hypPara/'s width --- through the width-dependent factor $\bm{w_i}$ --- keeping low distances/gradients for points within the \hypPara/. Otherwise, the distance correlates --- again through the width-dependent factor $\bm{w_i}$ --- linearly with the width to penalize points outside larger parallelograms.

\section{ExpressivE's Two Natures}
\label{app:ExpressivesNatures}

In this section, we analyze functional and spatial models in more detail and outline how ExpressivE combines the capabilities of both model families. ExpressivE has two natures, specifically:
\begin{itemize}
    \item ExpressivE has a functional nature (in the spirit of functional models such as TransE and RotatE), allowing it to capture functional composition, discussed in detail in Appendix~\ref{app:AnalysisOfFunctionalModels}.
    \item ExpressivE has a spatial nature (in the spirit of spatial models such as BoxE), allowing it to capture hierarchy, discussed in detail in Appendix~\ref{app:AnalysisOfSpatialModels}.
\end{itemize}
 
The combination of the functional and spatial nature is precisely the reason that allows ExpressivE to capture hierarchy and composition patterns jointly. 
In the following, we review the inference capabilities of spatial and functional models and discuss how ExpressivE combines both the spatial and functional nature.

\subsection{Analysis of Functional Models}
\label{app:AnalysisOfFunctionalModels}

We recall the definition of \emph{functional models} provided in Section~\ref{sec:RelatedWork}, which states that functional models basically embed relations as functions $\bm{f_{r_i}}: \mathbb{K}^d \rightarrow \mathbb{K}^d$ and entities as vectors $\bm{e_j} \in \mathbb{K}^d$ over some field $\mathbb{K}$. These models represent true triples $r_i(e_h, e_t)$ as $\bm{e_t} = \bm{f_{r_i}(e_h)}$ in the embedding space.

Our analysis has revealed that the root cause that functional models cannot capture general composition patterns lies within the functional nature of these models.
In essence, these models employ mainly functions to embed relations. This allows them to employ functional composition $\bm{f_{r_d}} = \bm{f_{r_2}} \circ \bm{f_{r_1}}$ to capture composition patterns. Yet, employing functional composition \emph{defines} the composite relation $r_d$ completely and thus represents a more restricted pattern that we call \emph{\definingComp/} \compDef/. 

In contrast, general composition \realComp/ does \emph{not} completely define its composite relation $r_3$. This means that in the case of general composition, the composite relation $r_3$ may contain more triples than those that are directly \emph{inferable} by \definingComp/ patterns. Due to this notion of extensibility, we can describe general composition as a combination of \definingComp/ and hierarchy, i.e., a general composition pattern defines its composite relation $r_3$ as a superset (hierarchy component) of the compositionally defined relation $r_{d}$. This explains why no KGE has managed to capture general composition, as any SotA KGE that supports some notion of composition cannot represent hierarchy and vice versa (as will be discussed in Appendix~\ref{app:AnalysisOfSpatialModels}) , yet both are essential to support general composition.  Therefore, to capture general composition, ExpressivE combines hierarchy and \definingComp/ patterns, as discussed in more detail in Section~\ref{sec:InferencePatterns}.

\subsection{Analysis of Spatial Models}
\label{app:AnalysisOfSpatialModels}

Spatial models embed a relation $r \in \bm{R}$ via spatial regions in the embedding space. Furthermore, they embed an entity $e_a \in \bm{E}$ in the role of a head and tail entity with two independent embeddings $\bm{e^h_{a}} \in \mathbb{K}^d$ and $\bm{e^t_{a}} \in \mathbb{K}^d$. A triple $r(e_h,e_t)$ is true for spatial models if the embeddings of the entities $e_h$ and $e_t$ lie within the respective spatial regions of the relation $r$. Thus, spatial models may capture hierarchy patterns via the spatial subsumption of the regions defined by the relations. However, since there is no relation between $\bm{e^h_{a}}$ and $\bm{e^t_{a}}$, spatial models --- such as BoxE \citep{BoxE} --- cannot capture composition. 

ExpressivE embeds relations as regions (spatial nature). Yet to achieve the functional nature, it cannot use two independent entity embeddings in the typical embedding space - as we discussed above. The solution and key difference to BoxE is to define the \virtualtripleSpace/, which is formed by concatenating head and tail entity embeddings of the same embedding space (as described in detail in Section~\ref{sec:VirtualTripleSpace}). More specifically, any line through the \virtualtripleSpace/ defines a function between head and tail entity embeddings of the same space - the key to the functional nature:
\begin{itemize}
    \item \textbf{Functional nature.} Regions in this \virtualtripleSpace/ establish a mathematical relation between head and tail entities of the same space, by which composition can be captured.
    \item \textbf{Spatial nature.} At the same time, regions can subsume each other, by which - as is intuitive - hierarchy patterns can be captured.
\end{itemize}

Finally, it is precisely the combination of the functional and spatial nature that allows ExpressivE to capture general composition, as described in detail in Section~\ref{sec:InferencePatterns}.

\section{Trade-Off: ExpressivE Power vs. Degrees of Freedom}
\label{app:Tradeoff}

This section discusses the trade-off between a higher expressive power and lower degrees of freedom, observable in the results of Table~\ref{tab:Results}. Specifically, this trade-off manifests in Table~\ref{tab:Results}'s benchmark results in the following way:

\begin{itemize}
    \item \textbf{Functional ExpressivE} has a lower expressive power compared to Base ExpressivE as it effectively loses the ability to capture hierarchy patterns. The effect of the reduced expressive power of Functional ExpressivE can be seen in the performance drop on WN18RR over Functional ExpressivE in Table~\ref{tab:Results}. However,  since Functional ExpressivE uses fewer parameters than Base ExpressivE, it has a lower degree of freedom, making it less likely to stop in a local minimum than Base ExpressivE as can be seen on Functional ExpressivE's performance on FB15k-237 in Table~\ref{tab:Results}.

    \item \textbf{Base ExpressivE} has the full expressive powers - the high degree of freedom heightening the chance of ending in a local minimum. Table~\ref{tab:Results} reveals the significant performance increase of Base ExpressivE over Functional ExpressivE on WN18RR, giving evidence that the expressive power is helpful, but the downside being that its higher degrees of freedom may make it likelier to stop in a local optimum, manifesting in its performance drop over Functional ExpressivE on FB15k-237.
\end{itemize}

Further analyzing this trade-off to establish a link between dataset properties and the necessary expressive power of a KGE will be subject for interesting future work.

\section{Experimental Details}
\label{app:ExpDetails}

This section discusses our experiment setup, benchmark datasets, and evaluation metrics in detail. 
The concrete experiment setups, including details of our implementation, used hardware, learning setup, and chosen hyperparameters, are discussed in Subsection~\ref{app:ExperimentSetup}. Subsection~\ref{app:BenchData} lists properties of the used benchmark datasets and Subsection~\ref{app:Metrics} lists properties of the used ranking metrics.

\subsection{Experiment Setup and Emissions}
\label{app:ExperimentSetup}

\paragraph{Implementation Details.} We have implemented ExpressivE in PyKEEN 1.7 \citep{PyKeen}, which is a Python library that uses the MIT license and supports many benchmark KGs and KGEs. Thereby, we make ExpressivE comfortably accessible to the community for future benchmarks and experiments. We have made our code publicly available in a GitHub repository\footnote{https://github.com/AleksVap/ExpressivE}. It contains, in addition to the code of ExpressivE, a setup file to install the necessary libraries and a ReadMe.md file containing library versions and running instructions to facilitate the reproducibility of our results.

\paragraph{Training Setup.} Each model was trained and evaluated on one of $4$ GeForce RTX 2080 GPUs of our internal cluster. Specifically, the training process uses the Adam optimizer \citep{Adam} to optimize the self-adversarial negative sampling loss \citep{RotatE}. ExpressivE is trained with gradient descent for up to $1000$ epochs with early stopping, finishing the training if after $100$ epochs the Hits@10 score did not increase by at least $0.5\%$ for WN18RR and $1\%$ for FB15k-237. We have increased the patience for OneBand ExpressivE to $150$ epochs for FB15k-237, as it converges slower than the other ablation versions of ExpressivE. We use the model of the final epoch for testing. Each experiment was repeated three times to account for small performance fluctuations. In particular, the MRR values fluctuate by less than $0.003$ between runs for Base and Functional ExpressivE on any dataset.
We performed hyperparameter tuning over the learning rate $\lambda$, embedding dimensionality $d$, number of negative samples $\mathit{neg}$, loss margin $\gamma$, adversarial temperature $\alpha$, and minimal denominator $D_{\mathit{min}}$. Specifically, two mechanisms were employed to implicitly regularize the \hypPara/: (1) the hyperbolic tangent function $\mathit{tanh}$ was element-wise applied to each entity embedding $\bm{e_p}$, \slopeVec/ $\bm{r_i^p}$, and center vector $\bm{c_i^p}$, projecting them into the bounded space $[-1, 1]^d$, and (2) the size of each \hypPara/ is limited by the novel $D_{\mathit{min}}$ parameter. In the following, we will briefly introduce the $D_{\mathit{min}}$ parameter and its function.

\paragraph{Minimal Denominator $D_{\mathit{min}}$.} As can be easily shown, Equations~\ref{eq:center} describe the relation \hypPara/'s center, and Equations~\ref{eq:corner1}-\ref{eq:corner2} its corners in the \virtualtripleSpace/.

\begin{alignat}{5}
& \bm{\mathit{center_i^h}} &&= \frac{\bm{c^h_i} + \bm{r^t_i} \bm{c^t_i}}{\bm{1} - \bm{r_i^h} \bm{r_i^t} } \qquad &&\text{and}\qquad \bm{\mathit{center_i^t}} &&= \frac{\bm{r^h_i} \bm{c^h_i} + \bm{c^t_i}}{\bm{1} - \bm{r_i^h} \bm{r_i^t} } &&
\label{eq:center}\\
& \bm{\mathit{cornA_{i}^h}} &&= \bm{\mathit{center_i^h}} \pm \frac{\bm{d^h_i} + \bm{r^t_i} \bm{d^t_i}}{\bm{1} - \bm{r_i^h} \bm{r_i^t} } \label{eq:corner1} \qquad &&\text{and}\qquad 
\bm{\mathit{cornA_{i}^t}} &&= \bm{\mathit{center_i^t}} \pm \frac{\bm{r^h_i} \bm{d^h_i} + \bm{d^t_i}}{\bm{1} - \bm{r_i^h} \bm{r_i^t} } &&\\
& \bm{\mathit{cornB_{i}^h}} &&= \bm{\mathit{center_i^h}} \pm \frac{\bm{d^h_i} - \bm{r^t_i} \bm{d^t_i}}{\bm{1} - \bm{r_i^h} \bm{r_i^t} } \qquad &&\text{and}\qquad 
\bm{\mathit{cornB_{i}^t}} &&= \bm{\mathit{center_i^t}} \pm \frac{\bm{r^h_i} \bm{d^h_i} - \bm{d^t_i}}{\bm{1} - \bm{r_i^h} \bm{r_i^t} } && \label{eq:corner2}
\end{alignat}

Note that the denominator of each term is equal to $(\bm{1} - \bm{r_i^h} \bm{r_i^t})$.
Since a small denominator in Equations~\ref{eq:corner1} and \ref{eq:corner2} produces large corners and, therefore, a large \hypPara/, we have introduced the hyperparameter $D_{\mathit{min}}$, allowing ExpressivE to tune the maximal size of its \hypParas/. In particular, $D_{\mathit{min}}$ constrains the relation embeddings such that $(\bm{1} - \bm{r_i^h} \bm{r_i^t}) \elemwiseLeq D_{\mathit{min}}$, thereby constraining the maximal size of a \hypPara/ as required.

\paragraph{Hyperparameter Optimization.} 
Following \citet{BoxE}, we have varied the learning rate by $\lambda \in \{a * 10^{-b} | a \in \{1, 2, 5\} \land b \in \{-2, -3, -4, -5, -6\}\}$, the margin $m$ by integer values between $3$ and $24$ inclusive, the adversarial temperature by $\alpha \in \{1, 2, 3, 4\}$, and the number of negative samples by $\mathit{neg} \in \{50, 100, 150\}$. Furthermore, we have varied the novel minimal denominator parameter by $D_{\mathit{min}} \in \{0, 0.5, 1\}$. We have tuned the hyperparameters of ExpressivE manually within the specified ranges. Finally, to allow a direct performance comparison of ExpressivE to its closest spatial relative BoxE and its closest functional relative RotatE, we chose for each benchmark the embedding dimensionality and negative sampling strategy of the best-performing RotatE and BoxE model \citep{BoxE, RotatE}. Concretely we chose self-adversarial negative sampling \citep{RotatE} and the embedding dimensionalities listed in Table~\ref{tab:hyperparas}. The best performing hyperparameters for ExpressivE on each benchmark dataset are listed in Table~\ref{tab:hyperparas}. We have used the hyperparameters of Table~\ref{tab:hyperparas} for any considered version of ExpressivE --- namely Base, Functional, EqSlopes, NoCenter, and OneBand ExpressivE ---, which are described in the ablation study of Section~\ref{sec:Ablation}.

\begin{table}[h!]
\caption{Hyperparameters for the best-performing ExpressivE models on WN18RR and FB15k-237.}
\label{tab:hyperparas}
\centering
\resizebox{\textwidth}{!}{%
\begin{tabular}{lccccccc}
\toprule
Dataset   & \begin{tabular}[c]{@{}c@{}}Embedding \\ Dimensionality\end{tabular} & Margin & \begin{tabular}[c]{@{}c@{}}Learning \\ Rate\end{tabular} & \begin{tabular}[c]{@{}c@{}}Adversarial \\ Temperature\end{tabular} & \begin{tabular}[c]{@{}c@{}}Negative \\ Samples\end{tabular} & \begin{tabular}[c]{@{}c@{}}Batch \\ Size\end{tabular} & \begin{tabular}[c]{@{}c@{}}Minimal \\ Denominator\end{tabular} \\ 
\cmidrule{2-8} 
WN18RR    & 500                                                            & 3      & $1 * 10^{-3}$                                            & 2                                                                & 100                                                         & 512                                                   & 0                                                              \\
FB15k-237 & 1000                                                           & 4      & $1 * 10^{-4}$                                            & 4                                                                  & 150                                                         & 1024                                                  & 0.5                                                            \\ 
\bottomrule
\end{tabular}
}
\end{table}

\paragraph{CO2 Emission Related to Experiments.} 
The computation of the reported experiments took below 200 GPU hours. On an RTX 2080 (TDP of 215W) with a carbon efficiency of 0,432 kg/kWh (based on the OECD's 2014 yearly carbon efficiency average), 200 GPU hours correspond to a rough CO$_2$ emission of $18.58$ kg CO$_2$-eq.
The estimations were conducted using the \href{https://mlco2.github.io/impact#compute}{MachineLearning Impact calculator} \citep{lacoste2019quantifying}.

\subsection{Benchmark Datasets}
\label{app:BenchData}

This section briefly discusses some details of the standard KGC benchmark datasets WN18RR \citep{ConvE} and FB15k-237 \citep{FB15K237}.
In particular, Table~\ref{tab:BenchDatasets} lists the following characteristics of the benchmark datasets, namely their number of: entities $|\bm{E}|$, relation types $|\bm{R}|$, training, testing, and validation triples. Both WN18RR and FB15k-237 provide training, testing, and validation splits, which were directly used in our experiments. 

\begin{table}[h!]
\caption{Benchmark dataset characteristics.}
\label{tab:BenchDatasets}
\centering
\begin{tabular}{lccccc}
\toprule
Dataset     & $|\bm{E}|$ & $|\bm{R}|$ & Training Triples & Validation Triples & Testing Triples\\
\cmidrule{2-6} 
FB15k-237   & 14,541     & 237        & 272,115        & 17,535           & 20,466\\  
WN18RR      & 40,943     & 11         & 86,835         & 3,034            & 3,034\\  
\bottomrule
\end{tabular}
\end{table}

We have not found licenses for FB15k-237 nor WN18RR. WN18RR is a subset of WN18 \citep{TransE}, whose license is also unknown, yet FB15k-237 is a subset of FB15k \citep{TransE} that uses the CC BY 2.5 license.

\subsection{Metrics}
\label{app:Metrics}

We have evaluated ExpressivE by measuring the ranking quality of each test set triple $r_i(e_h, e_t)$ over all possible head $e_h'$ and tail $e_t'$: $r_i(e_h', e_t)$ for all $e_h' \in \bm{E}$ and $r_i(e_h, e_t')$ for all $e_t' \in \bm{E}$. The mean reciprocal rank (MRR), and Hits@k are the standard evaluation metrics for this evaluation \citep{TransE}. In particular, we have reported the filtered metrics \citep{TransE}, i.e., where all triples that occur in the training, validation, and testing set (except the test triple that shall be ranked) are removed from the ranking, as ranking these triples high does not represent a faulty inference. Furthermore, the filtered MRR, Hits@1,  Hits@3, and  Hits@10 are the most widely used metrics for evaluating KGEs \citep{RotatE, ComplEx, TuckER, BoxE}. Finally, we will briefly discuss the definitions of these metrics: the MRR represents the average of inverse ranks ($1/\textit{rank}$), and Hits@k represents the proportion of true triples within the predicted triples whose rank is at maximum k.

\end{document}